\definecolor{mycolor}{RGB}{0,128,255}
\newcommand{\ba}[0]{\mathbf{a}}
\newcommand{\bo}[0]{\mathbf{o}}
\newcommand{\bs}[0]{\mathbf{s}}
\newcommand{\bz}[0]{\mathbf{z}}
\newcommand{\state}[0]{\bs}
\newcommand{\latent}[0]{\bz}
\newcommand{\obs}[0]{\bs}
\newcommand{\action}[0]{\ba}
\newcommand{\reward}[0]{r}
\newcommand{\ourmethod}[0]{deep bisimulation for control\xspace}
\newcommand{\Ourmethod}[0]{Deep bisimulation for control\xspace}
\newcommand{\OurMethod}[0]{Deep Bisimulation for Control\xspace}
\newcommand{\ouracronym}[0]{DBC\xspace}
\newtheorem{theorem}{Theorem}
\newtheorem{definition}{Definition}
\newtheorem{assumption}{Assumption}
\newtheoremstyle{TheoremNum}
        {\topsep}{\topsep}              
        {\itshape}                      
        {}                              
        {\bfseries}                     
        {.}                             
        { }                             
        {\thmname{#1}\thmnote{ \bfseries #3}}
    \theoremstyle{TheoremNum}
    \newtheorem{theoremnum}{Theorem}
\titlespacing{\subsection}{0pt}{0.5ex}{0ex}
\newlength{\bibitemsep}\setlength{\bibitemsep}{.2\baselineskip plus .05\baselineskip minus .05\baselineskip}
\newlength{\bibparskip}\setlength{\bibparskip}{0pt}
\let\oldthebibliography\thebibliography
\renewcommand\thebibliography[1]{%
  \oldthebibliography{#1}%
  \setlength{\parskip}{\bibitemsep}%
  \setlength{\itemsep}{\bibparskip}%
}
\title{Learning Invariant Representations for Reinforcement Learning without Reconstruction}
\author{%
    \hspace{-3mm} 
    Amy Zhang$^{*12}$                                      \hspace{3mm} 
    Rowan McAllister\thanks{Equal contribution. Corresponding author: \texttt{amyzhang@fb.com}}$^{\;\;3}$ \hspace{3mm} 
    Roberto Calandra$^{2}$                                 \hspace{3mm} 
    Yarin Gal$^{4}$                                        \hspace{3mm} 
    Sergey Levine$^{3}$ \\
    $^1$McGill University \\ $^2$Facebook AI Research \\ $^3$University of California, Berkeley \\ $^4$OATML group, University of Oxford 
}
\begin{document}

\maketitle

\begin{abstract}
We study how representation learning can accelerate reinforcement learning from rich observations, such as images, without relying either on domain knowledge or pixel-reconstruction. Our goal is to learn representations that provide for effective downstream control and invariance to task-irrelevant details. Bisimulation metrics quantify behavioral similarity between states in continuous MDPs, which we propose using to learn robust latent representations which encode only the task-relevant information from observations. Our method trains encoders such that distances in latent space equal bisimulation distances in state space. We demonstrate the effectiveness of our method at disregarding task-irrelevant information using modified visual MuJoCo tasks, where the background is replaced with moving distractors and natural videos, while achieving SOTA performance. We also test a first-person highway driving task where our method learns invariance to clouds, weather, and time of day. Finally, we provide generalization results drawn from properties of bisimulation metrics, and links to causal inference.
\end{abstract}

\section{Introduction}

\begin{wrapfigure}{rt}{0.3\textwidth}
  \vspace{-10mm}
  \begin{center}
    \includegraphics[width=\linewidth]{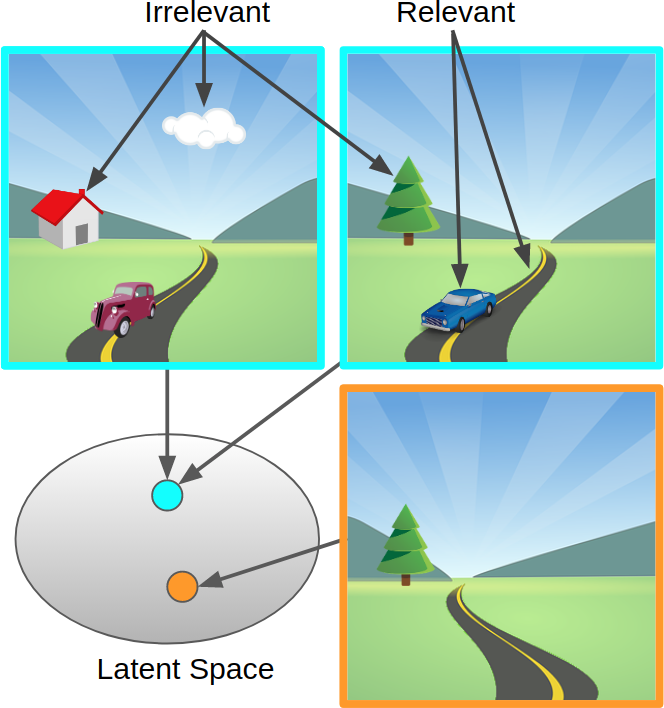}
  \end{center}
  \vspace{-1.5mm}
  \caption{ \small
  Robust representations of the visual scene should be insensitive to irrelevant objects (e.g.,\ clouds) or details (e.g.,\ car types), and encode two observations equivalently if their relevant details are equal (e.g.,\ road direction and locations of other cars).}
  \label{fig:teaser}
  \vspace{-1mm}
\end{wrapfigure}
Learning control from images is important for many real world applications.
While deep reinforcement learning (RL) has enjoyed many successes in simulated tasks, learning control from real vision is more complex, especially outdoors, where images reveal detailed scenes of a complex and unstructured world.
Furthermore, while many RL algorithms can \textit{eventually} learn control from real images given unlimited data, 
data-efficiency is often a necessity in real trials which are expensive and constrained to real-time.
Prior methods for data-efficient learning of simulated visual tasks typically use representation learning. Representation learning summarizes images by encoding them into smaller vectored representations better suited for RL. 
For example, sequential autoencoders aim to learn \textit{lossless} representations of streaming observations---sufficient to reconstruct current observations and predict future observations---from which various RL algorithms can be trained~\citep{hafner2018planet,lee2019stochastic,yarats2019improving}. 
However, such methods are \textit{task-agnostic}: the models represent all dynamic elements they observe in the world, whether they are relevant to the task or not. We argue such representations can easily ``distract'' RL algorithms with irrelevant information in the case of real images.
The issues of distraction is less evident in popular simulation MuJoCo and Atari tasks, since any change in observation space is likely task-relevant, and thus, worth representing.
By contrast, visual images that autonomous cars observe contain predominately task-irrelevant information, like cloud shapes and architectural details, illustrated in \cref{fig:teaser}.

Rather than learning control-agnostic representations that focus on accurate reconstruction of clouds and buildings, we would rather achieve a more compressed representation from a \textit{lossy} encoder, which only retains state information relevant to our task. If we would like to learn representations that capture only task-relevant elements of the state and are \textit{invariant} to task-irrelevant information, intuitively we can utilize the reward signal to help determine task-relevance, as shown by \citet{jonschkowski2015learning}. As \textit{cumulative} rewards are our objective, state elements are relevant not only if they influence the current reward, but also if they influence state elements in the future that \emph{in turn} influence future rewards. This recursive relationship can be distilled into a recursive task-aware notion of state abstraction: an ideal representation is one that is predictive of reward, and also predictive of itself in the future.

We propose learning such an invariant representation using the bisimulation metric, where the distance between two observation encodings correspond to how ``behaviourally different''~\citep{ferns2014bisim_metrics} both observations are. 
Our main contribution is a practical representation learning method based on the bisimulation metric suitable for downstream control, which we call \ourmethod (\ouracronym).
We additionally provide theoretical analysis that proves value bounds between the optimal value function of the true MDP and the optimal value function of the MDP constructed by the learned representation.
Empirical evaluations demonstrate our non-reconstructive approach using bisimulation is substantially more robust to task-irrelevant distractors when compared to prior approaches that use reconstruction losses or contrastive losses.
Our initial experiments insert natural videos into the background of MoJoCo control task as complex distraction. Our second setup is a high-fidelity highway driving task using CARLA~\citep{dosovitskiy2017carla}, showing that our representations can be trained effectively even on highly realistic images with many distractions, such as trees, clouds, buildings, and shadows. For example videos see \mbox{\url{https://sites.google.com/view/deepbisim4control}}. Code is available at \mbox{\url{https://github.com/facebookresearch/deep_bisim4control}}.

\vspace{-1mm}
\section{Related Work}
\vspace{-1mm}

Our work builds on the extensive prior research on bisimulation in MDP state aggregation. 

\textbf{Reconstruction-based Representations.}
Early works on deep reinforcement learning from images~\citep{lange2010deep,Lange2012Autonomous} used a two-step learning process where first an auto-encoder was trained using reconstruction loss to learn a low-dimensional representation, and subsequently a controller was learned using this representation. This allows effective leveraging of large, unlabeled datasets for learning representations for control. In practice, there is no guarantee that the learned representation will capture useful information for the control task, and significant expert knowledge and tricks are often necessary for these approaches to work. 
In model-based RL, one solution to this problem has been to jointly train the encoder and the dynamics model end-to-end~\citep{watter2015embed,wahlstrom2015pixels} -- this proved effective in learning useful task-oriented representations.
\citet{hafner2018planet} and \citet{lee2019stochastic} learn latent state models using a reconstruction loss, but 
these approaches suffer from the difficulty of learning accurate long-term predictions and often still require significant manual tuning.
\citet{gelada2019deepmdp} also propose a latent dynamics model-based method and connect their approach to bisimulation metrics, using a reconstruction loss in Atari. They show that $\ell_2$ distance in the DeepMDP representation upper bounds the bisimulation distance, whereas our objective directly learns a representation where distance in latent space \textit{is} the bisimulation metric. Further, their results rely on the assumption that the learned representation is Lipschitz, whereas we show that, by directly learning a bisimilarity-based representation, we guarantee a representation that generates a Lipschitz MDP. We show experimentally that our \emph{non-reconstructive} \ouracronym method is substantially more robust to complex distractors.

\textbf{Contrastive-based Representations.}
Contrastive losses are a self-supervised approach to learn useful representations by enforcing similarity constraints between data~\citep{oord2018cpc,chen2020simclr}. Similarity functions can be provided as domain knowledge in the form of heuristic data augmentation, where we maximize similarity between augmentations of the same data point~\citep{laskin_srinivas2020curl} or nearby image patches~\citep{henaff2019data}, and minimize similarity between different data points.
In the absence of this domain knowledge, contrastive representations can be trained by predicting the future \citep{oord2018cpc}. We compare to such an approach in our experiments, and show that \ouracronym is substantially more robust. While contrastive losses do not require reconstruction, they do not inherently have a mechanism to determine downstream task relevance without manual engineering, and when trained only for prediction, they aim to capture all predictable features in the observation, which performs poorly on real images for the same reasons world models do. A better method would be to incorporate knowledge of the downstream task into the similarity function in a data-driven way, so that images that are very different pixel-wise (e.g.\ lighting or texture changes), can also be grouped as similar w.r.t.\ downstream objectives.

\textbf{Bisimulation.} Various forms of state abstractions have been defined in Markov decision processes (MDPs) to group states into clusters whilst preserving some property (e.g.\ the optimal value, or all values, or all action values from each state)~\citep{li2006stateabs}. The strictest form, which generally preserves the most properties, is \textit{bisimulation}~\citep{larsen1989bisim}. Bisimulation only groups states that are indistinguishable w.r.t.\ reward sequences output given any action sequence tested. A related concept is bisimulation metrics~\citep{ferns2014bisim_metrics}, which measure how ``behaviorally similar'' states are. \citet{ferns2011contbisim} defines the bisimulation metric with respect to continuous MDPs, and propose a Monte Carlo algorithm for learning it using an exact computation of the Wasserstein distance between empirically measured transition distributions. However, this method does not scale well to large state spaces. \citet{taylor2009bounding} relate MDP homomorphisms to lax probabilistic bisimulation, and define a lax bisimulation metric. They then compute a value bound based on this metric for MDP homomorphisms, where approximately equivalent state-action pairs are aggregated. Most recently, \citet{castro20bisimulation} propose an algorithm for computing \textit{on-policy} bisimulation metrics, but does so directly, without learning a representation. They focus on deterministic settings and the policy evaluation problem.
We believe our work is the first to propose a gradient-based method for directly learning a \textit{representation space} with the properties of bisimulation metrics and show that it works in the policy optimization setting.

\vspace{-1mm}
\section{Preliminaries}
\vspace{-2mm}

We start by introducing notation and outlining realistic assumptions about underlying structure in the environment. Then, we review state abstractions and metrics for state similarity. 

\vspace{-1mm}
We assume the underlying environment is a \textbf{Markov decision process} (MDP), described by the tuple $\mathcal{M}=(\mathcal{S}, \mathcal{A}, \mathcal{P}, \mathcal{R}, \gamma)$, where $\mathcal{S}$ is the state space, $\mathcal{A}$ the action space, $\mathcal{P}(\state'|\state,\action)$ the probability of transitioning from state $\state\in\mathcal{S}$ to state $\state'\in\mathcal{S}$, and $\gamma\in[0,1)$ a discount factor. An ``agent'' chooses actions $\action\in\mathcal{A}$ according to a policy function $\action\sim\pi(\state)$, which updates the system state $\state' \sim \mathcal{P}(\state, \action)$, yielding a reward $\reward=\mathcal{R}(\state)\in\mathbb{R}$. The agent's goal is to maximize the expected cumulative discounted rewards by learning a good policy: \smash{$\max_{\pi}\mathbb{E}_\mathcal{P}[\sum_{t=0}^{\infty}[\gamma^t\mathcal{R}(\state_t)]$}.
While our primary concern is learning from images, we do not address the partial-observability problem explicitly: we instead approximate \textit{stacked} pixel observations as the fully-observed system state $\bs$ (explained further in \cref{app:states}).

\textbf{Bisimulation} is a form of state abstraction that groups states $\bs_i$ and $\bs_j$ that are ``behaviorally equivalent''~\citep{li2006stateabs}. For any action sequence $\ba_{0:\infty}$, the probabilistic sequence of rewards from $\bs_i$ and $\bs_j$ are identical.
A more compact definition has a recursive form: two states are bisimilar if they share both the same immediate reward and equivalent distributions over the next bisimilar states~\citep{larsen1989bisim,Givan2003EquivalenceNA}. 
\vspace{-1mm}
\begin{definition}[Bisimulation Relations~\citep{Givan2003EquivalenceNA}]
Given an MDP $\mathcal{M}$, an equivalence relation $B$ between states is a bisimulation relation if, for all states $\state_i,\state_j\in\mathcal{S}$ that are equivalent under $B$ (denoted $\state_i\equiv_B\state_j$) the following conditions hold:
\begin{alignat}{2}
    \mathcal{R}(\state_i,\action)&\;=\;\mathcal{R}(\state_j,\action) 
    &&\quad \forall \action\in\mathcal{A}, \label{eq:bisim-discrete-r} \\
    \mathcal{P}(G|\state_i,\action)&\;=\;\mathcal{P}(G|\state_j,\action) 
    &&\quad \forall \action\in\mathcal{A}, \quad \forall G\in\mathcal{S}_B, \label{eq:bisim-discrete-p}
\end{alignat}
where $\mathcal{S}_B$ is the partition of $\mathcal{S}$ under the relation $B$ (the set of all groups $G$ of equivalent states), and $\mathcal{P}(G|\state,\action)=\sum_{\state'\in G}\mathcal{P}(\state'|\state,\action).$
\end{definition}

\vspace{-2mm}
Exact partitioning with bisimulation relations is generally impractical in continuous state spaces, as the relation is highly sensitive to infinitesimal changes in the reward function or dynamics. For this reason, \textbf{Bisimulation Metrics}~\citep{ferns2011contbisim,ferns2014bisim_metrics,castro20bisimulation} softens the concept of state partitions, and instead defines a pseudometric space $(\mathcal{S}, d)$, where a distance function $d:\mathcal{S}\times\mathcal{S}\mapsto\mathbb{R}_{\geq 0}$ measures the ``behavioral similarity'' between two states\footnote{Note that $d$ is a pseudometric, meaning the distance between two different states can be zero, corresponding to behavioral equivalence.}.

Defining a distance $d$ between states requires defining both a distance between rewards (to soften \cref{eq:bisim-discrete-r}), 
and distance between state distributions (to soften \cref{eq:bisim-discrete-p}). Prior works use the Wasserstein metric for the latter, originally used in the context of bisimulation metrics by \citet{vanbreugel2001QuantitativeVerificationProbabilistic}.
The $p^{\text{th}}$ Wasserstein metric is defined between two probability distributions $\mathcal{P}_i$ and $\mathcal{P}_j$ as
\smash{$W_p(\mathcal{P}_i,\mathcal{P}_j;d)=(\inf_{\gamma'\in\Gamma(\mathcal{P}_i,\mathcal{P}_j)}\int_{\mathcal{S}\times \mathcal{S}} d(\bs_i,\bs_j)^p\, \textnormal{d}\gamma'(\bs_i,\bs_j))^{1/p}$},
where $\Gamma(\mathcal{P}_i,\mathcal{P}_j)$ is the set of all couplings of $\mathcal{P}_i$ and $\mathcal{P}_j$.
This is known as the ``earth mover'' distance, denoting the cost of transporting mass from one distribution to another~\citep{optimaltransport}.
Finally, the bisimulation metric is the reward difference added to the Wasserstein distance between transition distributions:
\begin{definition}[Bisimulation Metric]
\label{def:bisim_metric}
From Theorem 2.6 in \citet{ferns2011contbisim} with $c\in[0,1)$:
\begin{align}
d(\state_i,\state_j) 
\;&=\; \max_{\action\in \mathcal{A}}\;
(1-c)\cdot |\mathcal{R}_{\state_i}^\action - \mathcal{R}_{\state_j}^\action| + c\cdot W_1(\mathcal{P}_{\state_i}^\action,\mathcal{P}_{\state_j}^\action;d).
\label{eq:bisim_metric}
\end{align}
\end{definition}
\vspace{-2mm}

\section{Learning Representations for Control with Bisimulation Metrics}

\begin{minipage}{\textwidth}

\vspace{-1mm}

\begin{minipage}[t]{0.46\linewidth}
\vspace{-4mm}
    \centering
    \includegraphics[width=\linewidth]{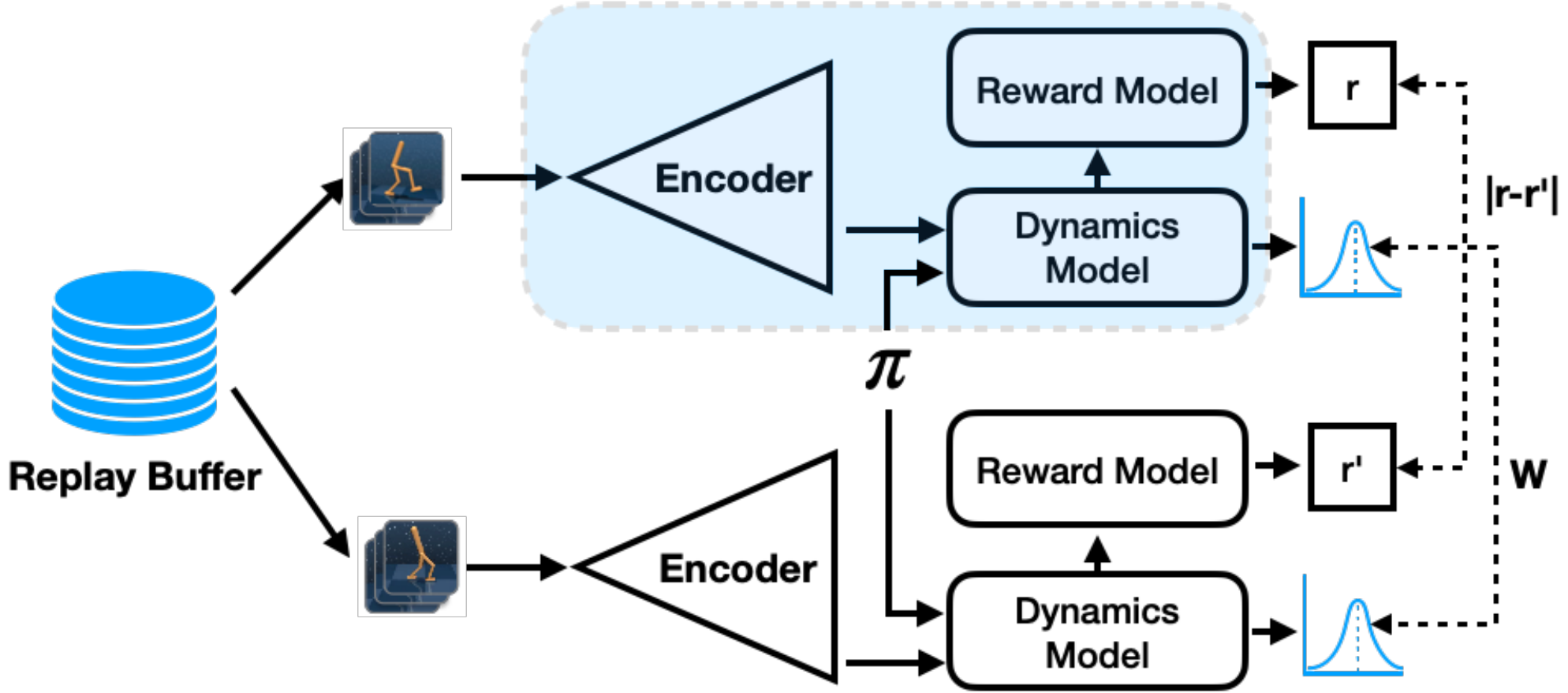}
    \vspace{-5mm}
    \captionof{figure}{\small 
    Learning a bisimulation metric representation: shaded in blue is the main model architecture, it is reused for both states, like a Siamese network. The loss is the reward and discounted transition distribution distances (using Wasserstein metric $W$).
    }
    \label{fig:method}
\end{minipage}
\hfill
\begin{minipage}[t]{0.51\linewidth}
\vspace{-2mm}
\renewcommand\algorithmiccomment[1]{\hfill $\triangleright$ #1}
\begin{algorithm}[H]
    \small
    \begin{algorithmic}[1]
    \floatname{algorithm}{Procedure}
    \FOR {Time $t = 0$ to $\infty$}
    \STATE{Encode state $\latent_t = \phi(\obs_{t})$}
    \STATE{Execute action $\action_t \sim \pi(\latent_t)$}
    \STATE{Record data: $\mathcal{D} \leftarrow \mathcal{D} \cup \{\obs_{t}, \action_t, \obs_{t+1}, \reward_{t+1}\}$}
    \STATE{Sample batch $B_i\sim\mathcal{D}$}
    \STATE{Permute batch: $B_j = \text{permute}(B_i)$}
    \STATE{Train policy: $\mathbb{E}_{B_i}[J(\pi)]$ \algorithmiccomment{\cref{alg:sac}}}
    \STATE{Train encoder: $\mathbb{E}_{B_i,B_j}[J(\phi)]$ \algorithmiccomment{\cref{eq:bisim_loss}}}
    \STATE{\smash{Train dynamics: $J(\hat{\mathcal{P}},\!\phi) \!=\! (\hat{\mathcal{P}}(\phi(\obs_t),\action_t) \!-\! \bar{\latent}_{t+1})^2$\!\!\!\!}}
    \ENDFOR
    \end{algorithmic}
    \caption{Deep Bisimulation for Control (DBC)}
    \label{alg:bisim}
\end{algorithm}
\end{minipage}
\end{minipage}

\vspace{1mm}
We propose \OurMethod (\ouracronym), a data-efficient approach to learn control policies from unstructured, high-dimensional states.
In contrast to prior work on bisimulation, which typically aims to learn a distance function of the form $d:\mathcal{S}\times\mathcal{S}\mapsto \mathbb{R}_{\geq 0}$ between states, our aim is instead to learn \emph{representations} $\mathcal{Z}$ under which $\ell_1$ distances correspond to bisimulation metrics, and then use these representations  to improve reinforcement learning.
Our goal is to learn encoders $\phi: \mathcal{S}\mapsto\mathcal{Z}$ that capture representations of states that are suitable to control, while discarding any information that is \emph{irrelevant} for control. Any representation that relies on reconstruction of the state cannot do this, as these irrelevant details are still important for reconstruction. We hypothesize that bisimulation metrics can acquire this type of representation, without any reconstruction.

Bisimulation metrics are a useful form of state abstraction, 
but prior methods to train distance functions either do not scale to pixel observations~\citep{ferns2011contbisim} (due to the max operator in \cref{eq:bisim_metric}), or were only designed for the (fixed) policy evaluation setting~\citep{castro20bisimulation}. 
By contrast, we learn improved representations for policy inputs, as the policy improves online.
Our \textit{$\pi^*$-bisimulation metric} is learned with gradient decent, and we prove it converges to a fixed point in \cref{thm:bisim_fixed_point} under some assumptions.
To train our encoder $\phi$ towards our desired relation $d(\obs_i, \obs_j) := ||\phi(\obs_i) - \phi(\obs_j)||_1$, 
we draw batches of state pairs, and minimise the mean square error between the on-policy bisimulation metric and $\ell_1$ distance in the latent space: 
\begin{align}
    J(\phi) 
    &\;=\; \Big(||\latent_i - \latent_j||_1
    \;-\; |\reward_i - \reward_j| \;-\; \gamma W_2\big(\hat{\mathcal{P}}(\cdot|\bar{\latent}_i,\action_i),\, \hat{\mathcal{P}}(\cdot|\bar{\latent}_j,\action_j)\big)\Big)^2,
    \label{eq:bisim_loss}
\end{align}
where 
\smash{
$\latent_i = \phi(\obs_i)$, 
$\latent_j = \phi(\obs_j)$, 
$\reward$ are rewards, 
and $\bar{\latent}$ denotes $\phi(\obs)$ with stop gradients.}
\cref{eq:bisim_loss} also uses 
a probabilistic dynamics model $\hat{\mathcal{P}}$ 
which outputs a Gaussian distribution.
For this reason, we use the 2-Wasserstein metric $W_2$ in \cref{eq:bisim_loss}, as opposed to the 1-Wasserstein in \cref{eq:bisim_metric}, since the $W_2$ metric has a convenient closed form:
$W_2(\mathcal{N}(\mu_i,\Sigma_i),\,\mathcal{N}(\mu_j,\Sigma_j))^2 =||\mu_i-\mu_j||_2^2 + ||\Sigma_i^{1/2} \!-\! \Sigma_j^{1/2}||^2_\mathcal{F}$,
where $||\cdot||_\mathcal{F}$ is the Frobenius norm. For all other distances we continue using the $\ell_1$ norm.
Our model architecture and training is illustrated by \cref{fig:method} and \cref{alg:bisim}.

\begin{wrapfigure}[8]{r}{0.52\textwidth}
\begin{minipage}{0.52\textwidth}
\vspace{-8mm}
\begin{algorithm}[H]
    \small
    \begin{algorithmic}[1]
    \floatname{algorithm}{Procedure}
    \vspace{0.5mm}
    \STATE{\smash{Get value: $V = \min_{i=1,2}\hat{Q}_i(\blue{\hat{\phi}}(\obs)) - \alpha  \log \pi(\action|\blue{\phi}(\obs)) $}}%
    \STATE{Train critics: $J(Q_i,\blue{\phi}) = (Q_i(\blue{\phi}(\obs)) - \reward - \gamma V)^2$}
    \STATE{\smash{Train actor: $J(\pi) = \alpha  \log p(\action|\blue{\phi}(\obs)) - \min_{i=1,2}Q_i(\blue{\phi}(\obs))$}}%
    \STATE{Train alpha: $J(\alpha) = -\alpha  \log p(\action|\blue{\phi}(\obs))$} 
    \STATE{\smash{Update target critics: $\hat{Q}_i \leftarrow \tau_Q Q_i + (1-\tau_Q) \hat{Q}_i$}}
    \STATE{\smash{\blue{Update target encoder: $\hat{\phi} \leftarrow \tau_\phi \phi + (1-\tau_\phi) \hat{\phi}$}}}
    \end{algorithmic}
    \caption{Train Policy (changes to SAC in \blue{blue})}
    \label{alg:sac}
\end{algorithm}
\end{minipage}
\end{wrapfigure}
\textbf{Incorporating control.}
We combine our representation learning approach (\cref{alg:bisim}) with the soft actor-critic (SAC) algorithm \citep{sac} to devise a practical reinforcement learning method. We modified SAC slightly in \cref{alg:sac} to allow the value function to backprop to our encoder, which can improve performance further \citep{yarats2019improving,rakelly2019efficient}.
Although, in principle, our method could be combined with any RL algorithm, including the model-free DQN~\citep{mnih2015humanlevel}, or model-based PETS~\citep{chua2018deep}.
Implementation details and hyperparameter values of \ouracronym are summarized in the appendix, \cref{table:rl_hyper_params}. 
We train \ouracronym by iteratively updating three components in turn: 
a policy $\pi$ (in this case SAC),
an encoder $\phi$,
and a dynamics model \smash{$\hat{\mathcal{P}}$}
(lines 7--9, \cref{alg:bisim}). We found a single loss function was less stable to train. The inputs of each loss function $J(\cdot)$ in \cref{alg:bisim} represents which components are updated. After each training step, the policy $\pi$ is used to step in the environment, the data is collected in a replay buffer $\mathcal{D}$, and a batch is randomly selected to repeat training.

\section{Generalization Bounds and Links to Causal Inference}

While \ouracronym enables representation learning without pixel reconstruction, it leaves open the question of how good the resulting representations really are. In this section, we present theoretical analysis that bounds the suboptimality of a value function trained on the representation learned via \ouracronym.

First, we show that our \textit{$\pi^*$-bisimulation metric} converges to a fixed point, starting from the initialized  policy $\pi_0$ and converging to an optimal policy  $\pi^*$.
\begin{theorem}
\label{thm:bisim_fixed_point}
Let $\mathfrak{met}$ be the space of bounded pseudometrics on $\mathcal{S}$ and $\pi$ a policy that is continuously improving. Define $\mathcal{F}:\mathfrak{met} \mapsto \mathfrak{met}$ by
\begin{equation}
\mathcal{F}(d,\pi)(\state_i,\state_j)=(1-c)|r^{\pi}_{\state_i} - r^{\pi}_{\state_j}| + c W(d)(\mathcal{P}^{\pi}_{\state_i},\mathcal{P}^{\pi}_{\state_j}).
\end{equation}
Then $\mathcal{F}$ has a least fixed point $\tilde{d}$ which is a  $\pi^*$-bisimulation metric.
\end{theorem}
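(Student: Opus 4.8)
The plan is to prove the statement in two stages: first establish that, for any \emph{fixed} policy $\pi$, the map $\mathcal{F}(\cdot,\pi)$ admits a unique (hence least) fixed point on the space of bounded pseudometrics; then use a continuity-of-fixed-points argument to show that, as the policy improves toward $\pi^*$, the associated fixed points converge to the $\pi^*$-bisimulation metric $\tilde{d}$. First I would equip $\mathfrak{met}$ with the uniform metric $d_\infty(d,d')=\sup_{\state_i,\state_j}|d(\state_i,\state_j)-d'(\state_i,\state_j)|$ and observe that $(\mathfrak{met},d_\infty)$ is a complete metric space: symmetry, the triangle inequality, $d(\state,\state)=0$, nonnegativity, and boundedness are all preserved under uniform limits. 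I would also note that, under the pointwise partial order, $\mathfrak{met}$ is a complete lattice, which is what licenses the ``least fixed point'' language via Knaster--Tarski.

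Next, for fixed $\pi$, I would show $\mathcal{F}(\cdot,\pi)$ is a $c$-contraction in $d_\infty$. The reward term is independent of $d$, so only the Wasserstein term contributes. The key lemma is that the Wasserstein distance is \emph{non-expansive and monotone} in its ground metric: for any two ground metrics $d,d'$ and any pair of distributions $\mu,\nu$, one has $|W(d)(\mu,\nu)-W(d')(\mu,\nu)|\le\sup_{\state_i,\state_j}|d(\state_i,\state_j)-d'(\state_i,\state_j)|$, and $d\le d'\Rightarrow W(d)\le W(d')$. Both follow from the coupling/infimum definition of $W_1$, since for any fixed coupling $\gamma'$ the integral $\int d\,\mathrm{d}\gamma'$ is monotone and $1$-Lipschitz in $d$, and taking the infimum preserves these properties. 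Hence $|\mathcal{F}(d,\pi)-\mathcal{F}(d',\pi)|\le c\,d_\infty(d,d')$ pointwise, and because $c<1$, Banach's fixed point theorem yields a unique fixed point $\tilde{d}_\pi$; monotonicity of $\mathcal{F}(\cdot,\pi)$ together with Knaster--Tarski confirms this is precisely the least fixed point. In particular $\tilde{d}:=\tilde{d}_{\pi^*}$ is by definition a $\pi^*$-bisimulation metric.

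The main obstacle --- and the reason for the ``continuously improving policy'' hypothesis --- is connecting the iterates produced by the practical algorithm, which interleaves a metric update with a policy-improvement step, to the target fixed point $\tilde{d}_{\pi^*}$. Here I would argue that the fixed-point map $\pi\mapsto\tilde{d}_\pi$ is continuous. Since the contraction modulus $c$ is \emph{uniform} over all policies, a standard perturbation estimate for parametrized contractions gives $d_\infty(\tilde{d}_\pi,\tilde{d}_{\pi'})\le\frac{1}{1-c}\sup_{\state_i,\state_j}|\mathcal{F}(\tilde{d}_{\pi'},\pi)(\state_i,\state_j)-\mathcal{F}(\tilde{d}_{\pi'},\pi')(\state_i,\state_j)|$, where the right-hand side is controlled by the change in $\mathcal{R}^\pi$ and $\mathcal{P}^\pi$ between the two policies. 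Assuming $\pi\mapsto\mathcal{R}^\pi$ and $\pi\mapsto\mathcal{P}^\pi$ are continuous (in reward sup-norm and in Wasserstein, respectively) --- the ``assumptions'' alluded to in the statement --- the sequence $\tilde{d}_{\pi_k}$ is Cauchy and converges to $\tilde{d}_{\pi^*}$ as $\pi_k\to\pi^*$. I expect this coupling of the two dynamic processes, metric contraction and policy improvement, to be the delicate part: one must ensure the metric iteration does not ``outrun'' the policy updates, which the uniform modulus $c$ guarantees. The single-policy contraction and lattice arguments are routine once the non-expansiveness of $W$ is established.
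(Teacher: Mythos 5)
Your proposal is correct and shares the paper's overall two-stage skeleton---establish a fixed point of $\mathcal{F}(\cdot,\pi)$ for each fixed policy, then couple this with convergence of the improving policy sequence to $\pi^*$---but you execute both stages with arguments the paper does not actually give. For the first stage, the paper simply imports the result of \citet{castro20bisimulation} that $\mathcal{F}^\pi$ has a fixed point for every fixed $\pi$, whereas you prove it from scratch: non-expansiveness of the Wasserstein distance in its ground metric makes $\mathcal{F}(\cdot,\pi)$ a $c$-contraction on $(\mathfrak{met},d_\infty)$, and Banach's theorem gives a unique (hence least) fixed point $\tilde{d}_\pi$. For the second stage, the paper invokes the policy improvement theorem to argue that $\pi$ converges and then asserts, without further argument, that ``a fixed point bisimulation metric will be found with policy iteration''; your parametrized-contraction estimate $d_\infty(\tilde{d}_\pi,\tilde{d}_{\pi'})\le\frac{1}{1-c}\sup_{\state_i,\state_j}|\mathcal{F}(\tilde{d}_{\pi'},\pi)(\state_i,\state_j)-\mathcal{F}(\tilde{d}_{\pi'},\pi')(\state_i,\state_j)|$ is precisely the missing bridge, since it makes the fixed-point map $\pi\mapsto\tilde{d}_\pi$ continuous and hence lets the per-policy fixed points track the improving policy all the way to $\tilde{d}_{\pi^*}$. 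Your route also cleanly sidesteps the issue the paper's proof dwells on---that the joint metric-policy iteration is \emph{not} monotone---because you never need monotonicity of the joint process, only the uniform contraction modulus $c$. Two caveats: first, your appeal to Knaster--Tarski is redundant (uniqueness from Banach already makes ``least'' trivial) and slightly shaky as stated, since pointwise infima of pseudometrics can violate the triangle inequality, so $\mathfrak{met}$ under the pointwise order is not obviously a complete lattice; second, like the paper, you must take as hypotheses that the improvement sequence $\pi_k$ converges to $\pi^*$ and that $\pi\mapsto(\mathcal{R}^\pi,\mathcal{P}^\pi)$ is continuous---these are the unstated ``some assumptions'' behind the theorem, not consequences of your argument, and in continuous state-action spaces neither is automatic from the policy improvement theorem alone.
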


\vspace{-1mm}
Proof in appendix.
As evidenced by \cref{def:bisim_metric}, the bisimulation metric has no direct dependence on the state space. Pixels can change, but bisimilarity will stay the same.
Instead, bisimilarity is grounded in a recursion of future transition probabilities and rewards, which is closely related to the optimal value function. In fact, the bisimulation metric gives tight bounds on the optimal value function with discount factor $\gamma$. We show this using the property that the optimal value function is Lipschitz with respect to the bisimulation metric, see \cref{thm:v_lipschitz} in Appendix~\citep{ferns2004bisimulation}. 
This result also implies that the closer two states are in terms of \smash{$\tilde{d}$}, the more likely they are to share the same optimal actions.
This leads us to a generalization bound on the optimal value function of an MDP constructed from a representation space using bisimulation metrics, $||\phi(\obs_i) - \phi(\obs_j)||_1 := \tilde{d}(\state_i,\state_j).$
We can construct a partition of this space for some $\epsilon>0$, giving us $n$ partitions where $\frac{1}{n}<(1-c)\epsilon$. We denote $\phi$ as the encoder that maps from the original state space $\mathcal{S}$ to each $\epsilon$-cluster. This $\epsilon$ denotes the amount of approximation allowed, where large $\epsilon$ leads to a more compact bisimulation partition at the expense of a looser bound on the optimal value function. 
\begin{theorem}[Value bound based on bisimulation metrics]
\label{thm:value_bound}
Given an MDP $\bar{\mathcal{M}}$ constructed by aggregating states in an $\epsilon$-neighborhood, and an encoder $\phi$ that maps from states in the original MDP $\mathcal{M}$ to these clusters, the optimal value functions for the two MDPs are bounded as
\begin{equation}
|V^*(\state) - V^*(\phi(\obs))| \leq \frac{2\epsilon}{(1-\gamma)(1-c)}.
\end{equation}
\end{theorem}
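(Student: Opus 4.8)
The plan is to bound the gap between the two optimal value functions with the familiar ``approximate model implies approximate value'' template: reduce the global comparison to a one-step Bellman residual, pay a factor $\frac{1}{1-\gamma}$ for propagating that residual over the infinite horizon, and control the residual itself with the bisimulation metric. Writing $\bar{V}^*$ for the optimal value function of the aggregated MDP $\bar{\mathcal{M}}$ and abusing notation so that $V^*(\phi(\state))$ denotes $\bar{V}^*$ evaluated at the cluster $\phi(\state)$, I would lift $\bar{V}^*$ back to $\mathcal{S}$ via $\tilde{V}:=\bar{V}^*\circ\phi$ and study $\lVert V^* - \tilde{V}\rVert_\infty$. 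Since $V^*$ is the fixed point of the Bellman optimality operator $T$ of $\mathcal{M}$ and $T$ is a $\gamma$-contraction in $\lVert\cdot\rVert_\infty$, the triangle inequality $\lVert V^* - \tilde{V}\rVert_\infty \le \lVert T V^* - T\tilde{V}\rVert_\infty + \lVert T\tilde{V} - \tilde{V}\rVert_\infty$ rearranges to $\lVert V^* - \tilde{V}\rVert_\infty \le \frac{1}{1-\gamma}\lVert T\tilde{V} - \tilde{V}\rVert_\infty$. This isolates the whole problem into bounding the Bellman residual of the lifted value function, and it is exactly where the $\frac{1}{1-\gamma}$ in the statement originates.

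The second step bounds that residual by the within-cluster bisimulation diameter. Two states placed in the same $\epsilon$-neighborhood satisfy $\tilde{d}(\state_i,\state_j)\le 2\epsilon$, and the definition of the metric (\cref{def:bisim_metric}, in the $\pi^*$ form of \cref{thm:bisim_fixed_point}) splits this budget into a reward part and a transition part, $(1-c)\lvert\mathcal{R}_{\state_i}-\mathcal{R}_{\state_j}\rvert + c\,W_1(\mathcal{P}_{\state_i},\mathcal{P}_{\state_j};\tilde{d}) = \tilde{d}(\state_i,\state_j)\le 2\epsilon$. Expanding $T\tilde{V}(\state)-\tilde{V}(\state)$ for a fixed greedy action and comparing $\state$ against the representative defining the cluster's aggregated reward and transition, the residual is at most a reward discrepancy $\lvert\mathcal{R}_{\state_i}-\mathcal{R}_{\state_j}\rvert$ plus $\gamma$ times a transition discrepancy. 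I would convert the transition discrepancy into a value difference using \cref{thm:v_lipschitz}: $V^*$ (hence $\bar{V}^*$ on the quotient) is $\frac{1}{1-c}$-Lipschitz with respect to $\tilde{d}$, so by Kantorovich--Rubinstein duality the transition term is at most $\frac{1}{1-c}W_1(\mathcal{P}_{\state_i},\mathcal{P}_{\state_j};\tilde{d})$.

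Combining these, the residual is at most $\frac{1}{1-c}\big[(1-c)\lvert\mathcal{R}_{\state_i}-\mathcal{R}_{\state_j}\rvert + \gamma\,W_1(\mathcal{P}_{\state_i},\mathcal{P}_{\state_j};\tilde{d})\big]$, and since $\gamma\le c$ this is dominated by $\frac{1}{1-c}\big[(1-c)\lvert\mathcal{R}_{\state_i}-\mathcal{R}_{\state_j}\rvert + c\,W_1\big] = \frac{\tilde{d}(\state_i,\state_j)}{1-c}\le\frac{2\epsilon}{1-c}$. The recursive structure of the bisimulation metric is precisely what lets the reward and transition contributions recombine into a single $\frac{2\epsilon}{1-c}$ rather than two separate terms. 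Multiplying by the $\frac{1}{1-\gamma}$ from the contraction step yields $\lVert V^* - \tilde{V}\rVert_\infty \le \frac{2\epsilon}{(1-\gamma)(1-c)}$, which is the claim; the partition-count condition $\frac{1}{n}<(1-c)\epsilon$ is what guarantees a finite family of $\epsilon$-neighborhoods fine enough for the diameter bound to hold.

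I expect the main obstacle to be the careful construction of $\bar{\mathcal{M}}$ and the matching of its one-step quantities to the original MDP: one must specify how each cluster's reward $\bar{\mathcal{R}}$ and transition $\bar{\mathcal{P}}$ are aggregated (a representative state, or a cluster average) and then show that the pushforward $\phi_{\#}\mathcal{P}^{\action}_{\state}$ of the true transition agrees with $\bar{\mathcal{P}}$ closely enough that the residual is genuinely governed by the within-cluster Wasserstein term and not by cross-cluster transport. The quantitative interplay between the metric's discount $c$ and the MDP's discount $\gamma$ (here the inequality $\gamma\le c$) is the delicate point that makes the reward and transition budgets collapse into the single factor $\frac{1}{1-c}$; if that relationship fails the Lipschitz constant of $V^*$ degrades and the clean bound no longer follows.
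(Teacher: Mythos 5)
Your high-level ingredients (within-cluster diameter $2\epsilon$, the Lipschitz property from \cref{thm:v_lipschitz}, Kantorovich--Rubinstein duality, the condition $\gamma \le c$, and a $\frac{1}{1-\gamma}$ horizon factor) are the right ones, but your specific decomposition has a circularity that breaks at the KR step. You apply duality with the lifted function $\tilde{V} = \bar{V}^*\circ\phi$ as the test function, asserting it is $\frac{1}{1-c}$-Lipschitz with respect to $\tilde{d}$ because ``$V^*$ (hence $\bar{V}^*$ on the quotient)'' is. That inference is invalid: $\tilde{V}$ is piecewise constant on clusters of positive $\tilde{d}$-diameter, and its deviation from $V^*$ is exactly the quantity $\Delta := \|V^* - \tilde{V}\|_\infty$ the theorem is trying to bound. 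The most you can establish is approximate Lipschitzness with slack, $|\tilde{V}(u)-\tilde{V}(v)| \le \frac{1}{1-c}\tilde{d}(u,v) + 2\Delta$, and integrating this over any coupling gives $|\mathbb{E}_\mu \tilde{V} - \mathbb{E}_\nu\tilde{V}| \le \frac{1}{1-c}W_1(\mu,\nu;\tilde{d}) + 2\Delta$. Feeding that into your residual bound and then into the contraction step yields the self-referential inequality $\Delta \le \frac{1}{1-\gamma}\big(\frac{2\epsilon}{1-c} + 2\gamma\Delta\big)$, i.e.\ $(1-3\gamma)\Delta \le \frac{2\epsilon}{1-c}$, which is vacuous for $\gamma \ge 1/3$. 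So the one-shot ``residual then contract'' template, with $\tilde{V}$ as the Lipschitz witness, does not close.

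The standard repair---and essentially how the lemma you need is actually proved in \citet{ferns2004bisimulation}---is to run the comparison as an induction over value iteration on both MDPs simultaneously: bound $\Delta_{n+1} := \sup_\state |V_{n+1}(\state) - \bar{V}_{n+1}(\phi(\state))|$ by splitting the transition term as $|\mathbb{E}_{\mathcal{P}_\state}V_n - \mathbb{E}_{\mathcal{P}_{\state_j}}V_n| + |\mathbb{E}_{\mathcal{P}_{\state_j}}V_n - \mathbb{E}_{\mathcal{P}_{\state_j}}[\bar{V}_n\circ\phi]|$ over cluster members $\state_j$, applying KR duality only to the \emph{original} iterate $V_n$ (which is genuinely $\frac{1}{1-c}$-Lipschitz w.r.t.\ $\tilde{d}$ when $c\ge\gamma$) and absorbing the second term into $\gamma\Delta_n$. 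With $\gamma \le c$ this gives the recurrence $\Delta_{n+1} \le \frac{2\epsilon}{1-c} + \gamma\Delta_n$, whose limit is the claimed $\frac{2\epsilon}{(1-\gamma)(1-c)}$; the inductive error enters with coefficient $\gamma$, not $3\gamma$, which is what your version loses. For comparison, the paper does none of this work itself: its proof is two lines, quoting the aggregation bound $(1-c)|V^*(\state)-V^*(\phi(\state))| \le g(\state,\tilde{d}) + \frac{\gamma}{1-\gamma}\max_{u}g(u,\tilde{d})$ from \citet{ferns2004bisimulation}, where $g$ is the average within-cluster distance, and substituting $g \le 2\epsilon$. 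Your plan to reprove that lemma rather than cite it is legitimate and more self-contained, but it needs the inductive restructuring above, not the argument as you have written it.
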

\vspace{-3mm}

Proof in appendix.
As $\epsilon \to 0$ the optimal value function of the aggregated MDP converges to the original. Further, by defining a learning error for $\phi$, $\mathcal{L}:=\sup_{\state_i,\state_j\in\mathcal{S}} \big|||\phi(\obs_i) - \phi(\obs_j)||_1 - \tilde{d}(\state_i,\state_j)\big|$, we can update the bound in \cref{thm:value_bound} to incorporate $\mathcal{L}$:
$
|V^*(\state) - V^*(\phi(\obs))| \leq \frac{2\epsilon + 2\mathcal{L}}{(1-\gamma)(1-c)}.
$

MDP dynamics have a strong connection to causal inference and causal graphs, which are directed acyclic graphs~\citep{jonsson2006causalmdp,schlkopf2019causality,zhang2020invariant}. Specifically, the state and action at time $t$ causally affect the next state at time $t+1$. In this work, we care about the components of the state space that causally affect current and future reward.
\Ourmethod representations connect to \textit{causal feature sets}, or the minimal feature set needed to predict a target variable~\citep{zhang2020invariant}.
\begin{theorem}[Connections to causal feature sets (Thm 1 in \citet{zhang2020invariant})]
\label{thm:bisim_to_causal}
If we partition observations using the bisimulation metric, those clusters (a bisimulation partition) correspond to the causal feature set of the observation space with respect to current and future reward.
\end{theorem}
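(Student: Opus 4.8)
The plan is to read \cref{thm:bisim_to_causal} as a statement that two partitions of the observation space coincide, and to prove each inclusion. First I would make the causal structure explicit, following \citet{zhang2020invariant}: model the factored MDP as a structural causal model whose variables are the individual state components $\{\state^k_t\}_k$ at each time $t$, together with the action $\action_t$ and reward $\reward_t$, where an edge encodes that $\state^k_{t+1}$ is a function of its parents among $\{\state^l_t\}_l$ and $\action_t$, and $\reward_t$ is a function of its parents among $\{\state^k_t\}_k$. I would then define the causal feature set $\mathrm{AN}$ as the minimal subset of state coordinates sufficient to predict current and all future rewards; under a faithfulness assumption this coincides with the set of coordinates that are ancestors of some reward node $\reward_{t'}$ (with $t'\ge t$) in the time-unrolled graph. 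The associated causal-feature partition groups two states iff they agree on every coordinate in $\mathrm{AN}$, and the theorem asks to show this equals the bisimulation partition $\mathcal{S}_B$.

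The central lemma I would establish first is that $\mathrm{AN}$ is \emph{closed} under the one-step dynamics: if $k\in\mathrm{AN}$, then every parent of $\state^k_{t+1}$ also lies in $\mathrm{AN}$. This follows by path concatenation: if $\state^l_t\to\state^k_{t+1}$ is an edge and $\state^k_{t+1}$ lies on a directed path to some $\reward_{t'}$, then $\state^l_t$ lies on a directed path to $\reward_{t'}$, so $l\in\mathrm{AN}$. Consequently the sub-vector of causal features evolves autonomously, its next-step distribution depending only on the current causal features and the action, never on the discarded coordinates. This is precisely the self-predictive, model-irrelevance property of the causal feature set, and it is what lets the bisimulation recursion go through.

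Given the lemma I would prove the two inclusions. For bisimilar $\Rightarrow$ causal-equal: using the action-sequence characterization of bisimulation, two bisimilar states emit identical reward distributions under every action sequence; if they differed on some coordinate in $\mathrm{AN}$, then by faithfulness that coordinate would transmit a detectable difference to $\reward_{t'}$ for some horizon under some action sequence, contradicting bisimilarity, so they must agree on all of $\mathrm{AN}$. For causal-equal $\Rightarrow$ bisimilar: if two states agree on $\mathrm{AN}$, their immediate rewards match because $\reward$ depends only on coordinates in $\mathrm{AN}$, and by the closure lemma the distribution over next-step causal-feature values is identical; pushing this through the recursion shows the transition condition $\mathcal{P}(G\mid\state_i,\action)=\mathcal{P}(G\mid\state_j,\action)$ holds for every group $G\in\mathcal{S}_B$, so the states are bisimilar. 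Matching the two inclusions yields equality of the partitions.

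The main obstacle is the first inclusion, specifically converting ``influences reward through the causal graph'' into ``is detected by bisimulation.'' This is where a faithfulness-type assumption is indispensable, ruling out exact cancellations so that a genuine causal path produces an observable change in some reward moment, and it is also where the factored/block structure of \citet{zhang2020invariant} is needed so that coordinate-wise agreement on the observation space is well defined. I would additionally be careful that the recursion in the second inclusion terminates correctly: formally one argues by induction on the horizon that agreement on $\mathrm{AN}$ is preserved in distribution at every future step, which is exactly the point at which the closure lemma is invoked. Once these structural assumptions are in place, the remaining steps are bookkeeping.
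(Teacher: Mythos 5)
First, a point of comparison: the paper itself contains no proof of this statement. It is imported wholesale as Theorem 1 of \citet{zhang2020invariant} (hence the attribution in the theorem's name); the appendix restates and proves \cref{thm:bisim_fixed_point}, \cref{thm:value_bound} and \cref{thm:general_reward}, and quotes \cref{thm:v_lipschitz} from \citet{ferns2004bisimulation}, but \cref{thm:bisim_to_causal} is never argued. So your proposal cannot be judged against an in-paper argument; it has to stand on its own, and on its own it is half right.

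The sound half is the inclusion ``agreement on the causal feature set $\Rightarrow$ bisimilar.'' Your closure lemma (every parent of an ancestor of reward is itself an ancestor of reward, by transitivity of ancestry) is correct, it gives autonomy of the $AN(R)$-subvector dynamics, and showing that the relation ``agree on all coordinates in $AN(R)$'' is itself a bisimulation relation --- equal immediate rewards, equal transition probability to every equivalence class, since classes are measurable with respect to the $AN(R)$ coordinates --- is exactly the right argument. The gap is the other inclusion, ``bisimilar $\Rightarrow$ agree on $AN(R)$,'' and faithfulness cannot close it. Faithfulness is a distribution-level assumption: it rules out conditional independencies not entailed by the graph. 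It says nothing about two particular \emph{value assignments} of an ancestor being behaviorally distinguishable. Concretely, take $\mathcal{R}=(\state^1)^2$ with dynamics equivariant under $\state^1\mapsto-\state^1$: the edge $\state^1\to\mathcal{R}$ is present and faithfulness holds ($\mathcal{R}$ is not independent of $\state^1$), yet the states with $\state^1=+1$ and $\state^1=-1$ are exactly bisimilar while differing on a causal ancestor of the reward. Hence the bisimulation partition can be strictly coarser than your causal-feature partition, and the claimed equality of partitions is false as stated. The theorem's ``correspond to'' should be read as the containment you did prove --- the bisimulation partition is invariant to all non-ancestor coordinates, i.e.\ it is determined by the causal feature set, which is also all that the paper's downstream use (robustness to interventions on distractor variables, \cref{thm:general_reward}) actually requires --- not as an isomorphism of partitions. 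No faithfulness-style assumption will rescue your first inclusion; you would need injectivity of the map from ancestor values to reward-sequence distributions, which is equivalent to assuming the conclusion.
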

\vspace{-2mm}

This connection tells us that these features are the minimal sufficient statistic of the current and future reward, and therefore consist of (and only consist of) the \textit{causal ancestors} of the reward variable $r$.
\begin{definition}[Causal Ancestors]
\label{def:causal_ancestors}
In a causal graph where nodes correspond to variables and directed edges between a parent node $P$ and child node $C$ are causal relationships, the \textit{causal ancestors} $AN(C)$ of a node are all nodes in the path from $C$ to a root node. 
\end{definition}

\vspace{-2mm}
If there are interventions on \textit{distractor variables}, or variables that control the rendering function $q$ and therefore the rendered observation but do not affect the reward, the causal feature set will be robust to these interventions, and correctly predict current and future reward in the linear function approximation setting \citep{zhang2020invariant}. As an example, in autonomous driving, an intervention can be a change from day to night which affects the observation space but not the dynamics or reward. Finally, we show that a representation based on the bisimulation metric generalizes to other reward functions with the same causal ancestors.

\begin{theorem}[Task Generalization]
\label{thm:general_reward}
Given an encoder $\phi:\mathcal{S}\mapsto \mathcal{Z}$ that maps observations to a latent bisimulation metric representation where $||\phi(\obs_i) - \phi(\obs_j)||_1:=\tilde{d}(\obs_i,\obs_j)$, $\mathcal{Z}$ encodes information about all the causal ancestors of the reward $AN(R)$.
\end{theorem}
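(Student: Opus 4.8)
The plan is to reduce the statement to the correspondence between bisimulation partitions and causal feature sets that is already granted by \cref{thm:bisim_to_causal}. Since the encoder is constructed so that $\|\phi(\obs_i)-\phi(\obs_j)\|_2 := \tilde{d}(\obs_i,\obs_j)$, two observations collapse to the same latent point exactly when $\tilde{d}(\obs_i,\obs_j)=0$, i.e.\ when they lie in the same class of the bisimulation partition. Thus the phrase ``$\phi$ encodes information about a set of variables'' is equivalent to ``the bisimulation partition separates observations precisely along those variables,'' and the whole task becomes identifying the bisimulation partition with the partition induced by $AN(R)$.

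First I would invoke \cref{thm:bisim_to_causal}: the bisimulation partition coincides with the causal feature set of the observation space with respect to current and future reward. Second, I would show that this causal feature set is precisely $AN(R)$. The argument is recursive and mirrors the structure of \cref{def:bisim_metric}: predicting the immediate reward requires exactly the causal parents of $R$; predicting reward one step ahead requires the parents of those parents (the variables that determine the next-step values of $R$'s parents through $\mathcal{P}$); iterating, predicting all future rewards requires every node lying on a directed path into $R$, which by \cref{def:causal_ancestors} is exactly $AN(R)$. No node outside $AN(R)$ (a distractor) can change any present or future reward distribution, so it does not contribute to $\tilde{d}$ and need not be encoded; conversely every node in $AN(R)$ does contribute, so $\phi$ must retain it. Composing the two steps yields that $\phi$ separates observations precisely along $AN(R)$, i.e.\ it encodes all of $AN(R)$ while discarding the rest.

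The main obstacle is the completeness direction: showing that \emph{every} causal ancestor is genuinely reflected in $\tilde{d}$, so that $\phi$ cannot drop it. This requires ruling out degenerate cancellations in which an ancestor perturbs intermediate states yet leaves the induced distribution over present and future rewards unchanged; under such a cancellation the bisimulation distance would vanish and the variable would be merged (behaviourally correct, but not structurally faithful). I would close this gap by appealing to the faithfulness and linear-function-approximation assumptions inherited from \citet{zhang2020invariant} that underlie \cref{thm:bisim_to_causal}, under which influence along a causal path does not vanish and the causal feature set is identified with $AN(R)$. A secondary point to make precise is the interaction with the policy used in the $\pi^*$-bisimulation metric of \cref{thm:bisim_fixed_point}: I would note that ancestors which affect reward only through action-dependent transitions are still retained, because the metric is evaluated under the continuously improving policy that converges to $\pi^*$, so task-relevant structure is not hidden by the support of any single fixed policy.
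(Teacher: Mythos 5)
Your proposal is correct, and its core overlaps with the paper's own proof: the appendix argument assumes a factored MDP whose transition dynamics follow a causal graph and runs exactly the two-case intervention analysis you give — (1) intervening on a factor outside $AN(R)$ leaves current and future rewards unchanged, so both the reward term and the Wasserstein term of the metric vanish and $\tilde{d}=0$; (2) intervening on a factor in $AN(R)$ either changes current reward, making $\max_{\action\in\mathcal{A}}(1-c)|\mathcal{R}^{\action}_{\state_i}-\mathcal{R}^{\action}_{\state_j}|>0$, or changes only future reward, making the Wasserstein term positive, so $\tilde{d}>0$. Where you differ is the scaffolding: the paper's proof never invokes \cref{thm:bisim_to_causal} — that theorem appears only in the surrounding discussion — whereas you make it the primary reduction and then identify the causal feature set with $AN(R)$ by recursively unrolling reward prediction through the graph; this is cleaner and more modular, but it imports the assumptions of \citet{zhang2020invariant} as a black box, which is precisely what the paper's self-contained case analysis avoids. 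On the other hand, your handling of the completeness direction is genuinely more careful than the paper's: the paper's part (2) only asserts the distance ``can be'' positive under an ancestor intervention, which, read literally, does not exclude the degenerate cancellations you describe (an ancestor whose influence on all present and future rewards cancels would be merged by $\tilde{d}$); your explicit appeal to faithfulness is what is actually needed to upgrade ``can be positive'' to ``is positive,'' and the paper leaves this implicit. Your final point about the improving policy is reasonable but moot for the paper's proof, which works with the $\max_{\action\in\mathcal{A}}$ form of \cref{def:bisim_metric} rather than the on-policy $\pi^*$-bisimulation metric of \cref{thm:bisim_fixed_point}, so no action-support issue arises there.
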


\vspace{-2mm}
Proof in appendix. This result shows that the learned representation will generalize to unseen reward functions, as long as the new reward function has a subset of the same causal ancestors. As an example, a representation learned for a robot to walk will likely generalize to learning to run, because the reward function depends on forward velocity and all the factors that contribute to forward velocity. However, that representation will  not generalize to picking up objects, as those objects will be ignored by the learned representation, since they are not likely to  be causal ancestors of a reward function designed for walking. 
\cref{thm:general_reward} shows that the learned representation will be robust to spurious correlations, or changes in factors that are not in $AN(R)$. This complements \cref{thm:v_lipschitz}, that the representation is a minimal sufficient statistic of the optimal value function, improving generalization over non-minimal representations. 
\begin{theorem}[$V^*$ is Lipschitz with respect to $\tilde{d}$]
\label{thm:v_lipschitz}
Let $V^*$ be the optimal value function for a given discount factor $\gamma$. If $c\geq \gamma$, then $V^*$ is Lipschitz continuous with respect to $\tilde{d}$ with Lipschitz constant $\frac{1}{1-c}$, where $\tilde{d}$ is a $\pi^*$-bisimulation metric.
\vspace{-1mm}
\begin{equation}
    |V^*(\state_i) - V^*(\state_j)|\leq \frac{1}{1-c}\tilde{d}(\state_i,\state_j).
\end{equation}
\end{theorem}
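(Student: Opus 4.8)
The plan is to exhibit $V^*$ as the fixed point of the Bellman optimality operator and to show that this operator preserves Lipschitz continuity with constant $L=\tfrac{1}{1-c}$ with respect to $\tilde d$, so that the fixed point inherits the property. Concretely, I would work with the operator $(TV)(\state)=\max_{\action}\big[\mathcal{R}^{\action}_{\state}+\gamma\,\mathbb{E}_{\state'\sim\mathcal{P}^{\action}_{\state}}[V(\state')]\big]$, whose unique fixed point is $V^*$, and run value iteration from $V_0\equiv 0$, proving by induction that every iterate $V_n$ is $L$-Lipschitz with respect to $\tilde d$. The base case is immediate since a constant function is $0$-Lipschitz, and boundedness of rewards keeps all iterates bounded so the expectations are well defined.

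For the inductive step, assume $V_n$ is $L$-Lipschitz with $L=\tfrac{1}{1-c}$. First I would bound the difference of the two maxima by the maximum of the differences, obtaining $|(TV_n)(\state_i)-(TV_n)(\state_j)|\le\max_{\action}\big[|\mathcal{R}^{\action}_{\state_i}-\mathcal{R}^{\action}_{\state_j}|+\gamma\,|\mathbb{E}_{\mathcal{P}^{\action}_{\state_i}}[V_n]-\mathbb{E}_{\mathcal{P}^{\action}_{\state_j}}[V_n]|\big]$, using the elementary inequality $|\max_{\action}f-\max_{\action}g|\le\max_{\action}|f-g|$ together with the triangle inequality. The crucial step is to control the transition term: since $V_n$ is $L$-Lipschitz with respect to $\tilde d$, the function $V_n/L$ is $1$-Lipschitz, so Kantorovich--Rubinstein duality for $W_1$ gives $|\mathbb{E}_{\mathcal{P}^{\action}_{\state_i}}[V_n]-\mathbb{E}_{\mathcal{P}^{\action}_{\state_j}}[V_n]|\le L\,W_1(\mathcal{P}^{\action}_{\state_i},\mathcal{P}^{\action}_{\state_j};\tilde d)$.

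Substituting yields $|(TV_n)(\state_i)-(TV_n)(\state_j)|\le\max_{\action}\big[|\mathcal{R}^{\action}_{\state_i}-\mathcal{R}^{\action}_{\state_j}|+\gamma L\,W_1(\mathcal{P}^{\action}_{\state_i},\mathcal{P}^{\action}_{\state_j};\tilde d)\big]$. Here the hypothesis $c\ge\gamma$ enters decisively: since $\gamma L=\tfrac{\gamma}{1-c}\le\tfrac{c}{1-c}=cL$, I can replace $\gamma L$ by $cL$ and factor out $L$, matching exactly the form of the bisimulation metric in \cref{eq:bisim_metric} (using $L(1-c)=1$), so the bound becomes $L\max_{\action}\big[(1-c)|\mathcal{R}^{\action}_{\state_i}-\mathcal{R}^{\action}_{\state_j}|+c\,W_1(\mathcal{P}^{\action}_{\state_i},\mathcal{P}^{\action}_{\state_j};\tilde d)\big]=L\,\tilde d(\state_i,\state_j)$. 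This closes the induction. Finally, value iteration converges uniformly, $V_n\to V^*$, and a uniform limit of $L$-Lipschitz functions is $L$-Lipschitz, giving the claimed bound.

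The main obstacle is the Wasserstein/Kantorovich step: correctly converting the Lipschitz constant of $V_n$, measured against $\tilde d$ which is the very metric defining the $W_1$ distance, into a factor multiplying $W_1$, and verifying that $L=\tfrac{1}{1-c}$ threads through so that $c\ge\gamma$ is exactly the condition making the transition term's coefficient fit under the $c$ in the metric definition. A secondary technical point is justifying convergence of value iteration and the applicability of duality on a possibly continuous state space, which rests on boundedness of rewards and completeness of the $L$-Lipschitz function class under the sup norm.
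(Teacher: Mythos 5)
Your proof is correct, and it is essentially the argument the paper relies on: the paper gives no proof of its own, deferring to Theorem 5.1 of \citet{ferns2004bisimulation}, whose proof is exactly your induction over value-iteration iterates, using Kantorovich--Rubinstein duality to bound the transition term and the hypothesis $c\geq\gamma$ to absorb $\gamma/(1-c)$ under the coefficient $c$ in the fixed-point equation for $\tilde{d}$. The only caveat is implicit in your final remark: in continuous state spaces the duality step and convergence require mild regularity (bounded rewards, Polish state space), which both the paper and the cited reference treat as given.
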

\vspace{-2mm}
See Theorem 5.1 in  \citet{ferns2004bisimulation} for  proof.
We show empirical validation of these findings in \cref{sec:gen_results}.

\vspace{-2mm}
\section{Experiments}
\vspace{-3mm}

Our central hypothesis is that our non-reconstructive bisimulation based representation learning approach should be substantially more robust to task-irrelevant distractors. To that end, we evaluate our method in a clean setting without distractors, as well as a much more difficult setting with distractors.
We compare against several baselines. The first is Stochastic Latent Actor-Critic (SLAC, \citet{lee2019stochastic}), a state-of-the-art method for pixel observations on DeepMind Control that learns a dynamics model with a reconstruction loss. The second is DeepMDP~\citep{gelada2019deepmdp}, a recent method that also learns a latent representation space using a latent dynamics model, reward model, and distributional Q learning, but for which they needed a reconstruction loss to scale up to Atari.
Finally, we compare against two methods using the same architecture as ours but exchange our bisimulation loss with (1) a reconstruction loss (``\textit{Reconstruction}'') and (2) contrastive predictive coding~\citep{oord2018representation} (``\textit{Contrastive}'') to ground the dynamics model and learn a latent representation.

\subsection{Control with Background Distraction}\label{sct:exps_distractor}
\vspace{-1mm}
In this section, we benchmark DBC and the previously described baselines on the DeepMind Control (DMC)  suite~\citep{deepmindcontrolsuite2018} in two settings and nine environments (\cref{fig:envs}), \texttt{finger\_spin}, \texttt{cheetah\_run}, and \texttt{walker\_walk} and additional environments in the appendix. 

\begin{figure}[h]
    \centering
    \includegraphics[height=0.12\linewidth]{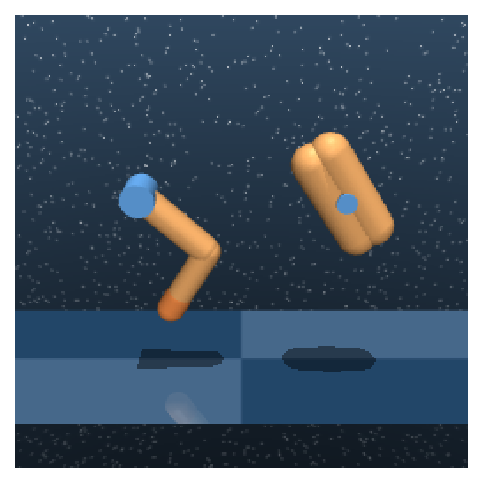}
    \includegraphics[height=0.12\linewidth]{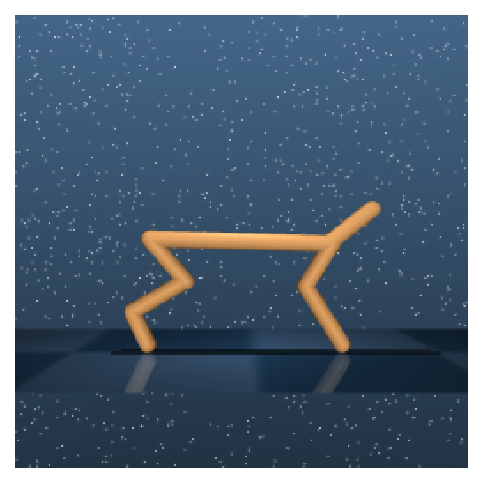}
    \includegraphics[height=0.12\linewidth]{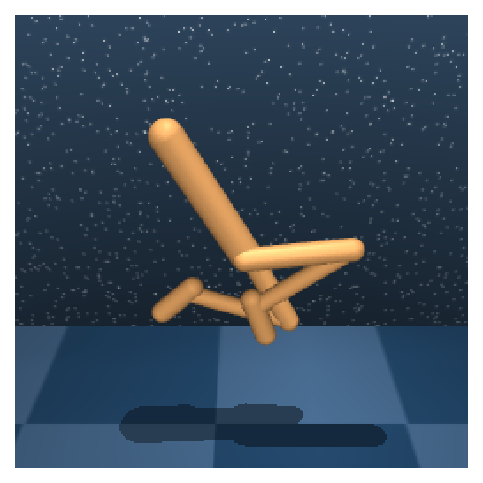}
    \includegraphics[height=0.14\linewidth]{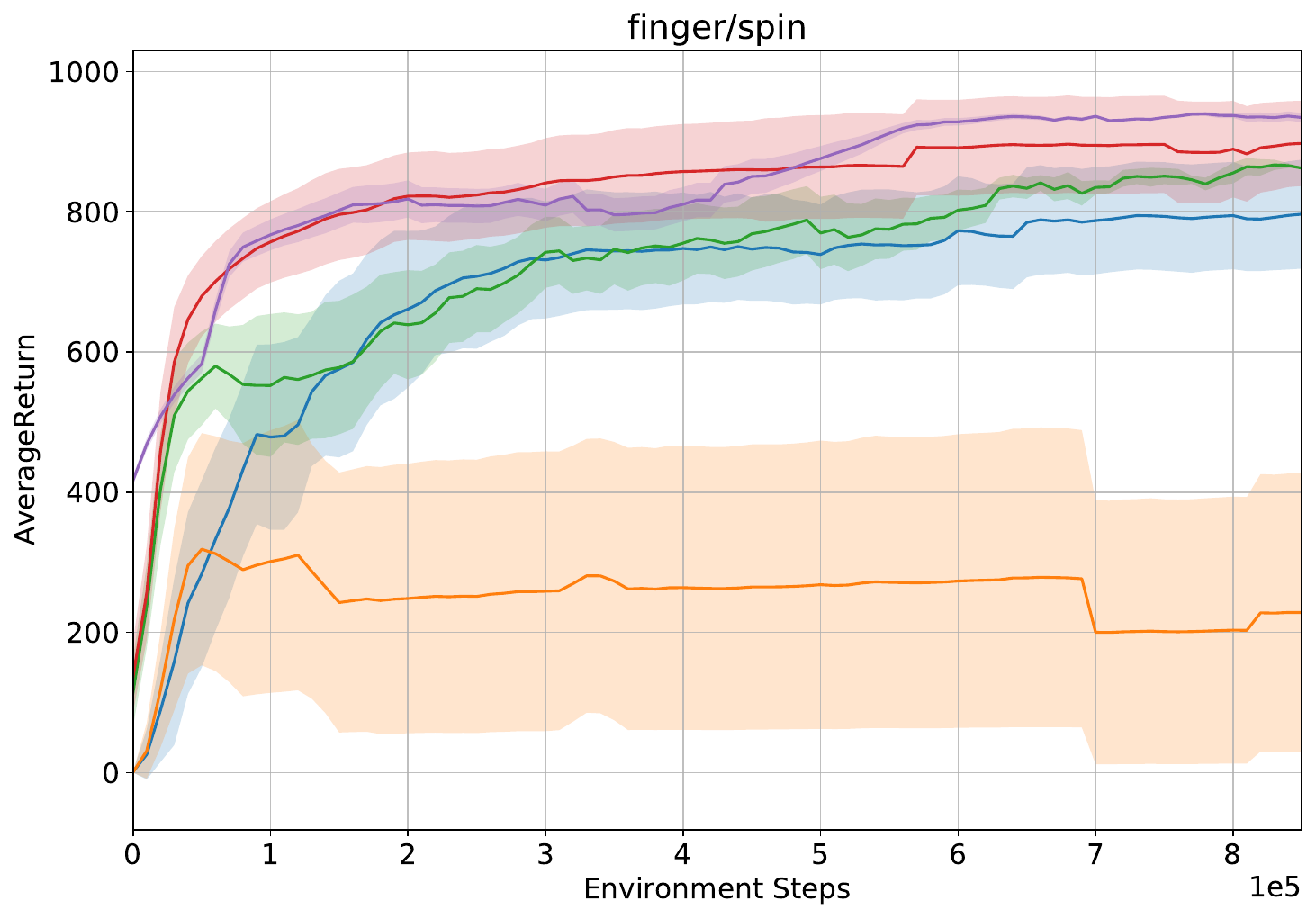}
    \includegraphics[height=0.14\linewidth]{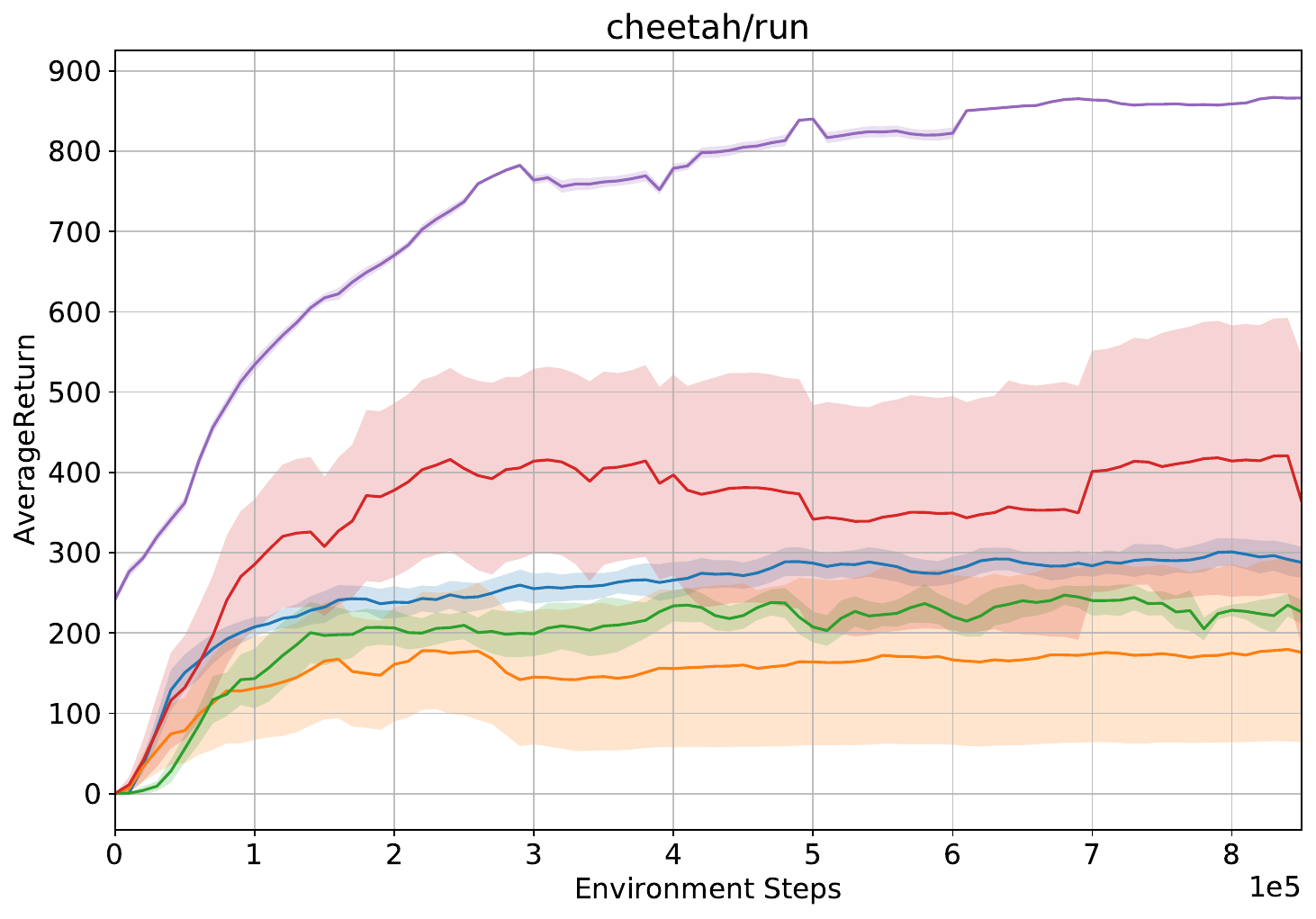}
    \includegraphics[height=0.14\linewidth]{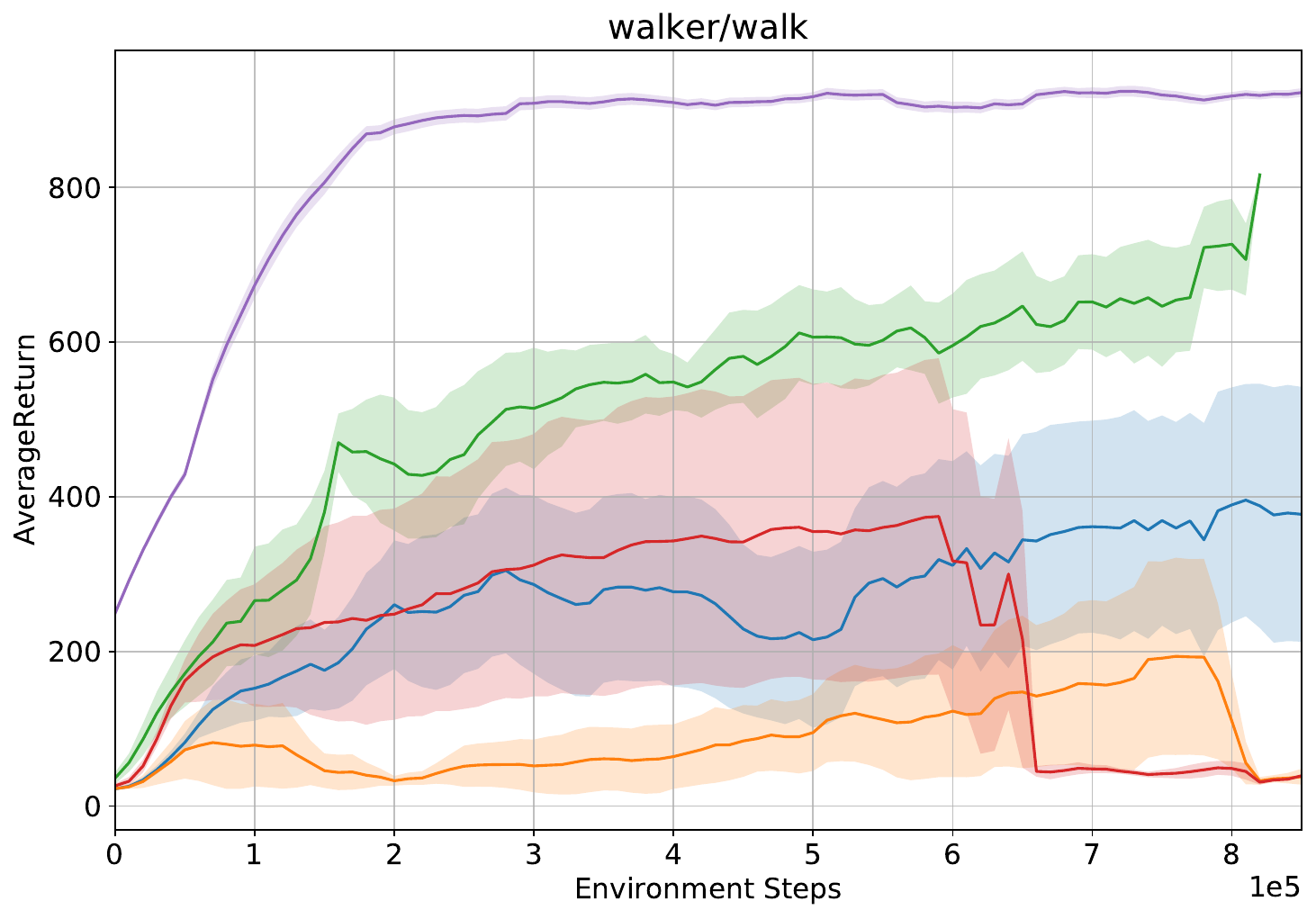} \\
    \includegraphics[height=0.12\linewidth]{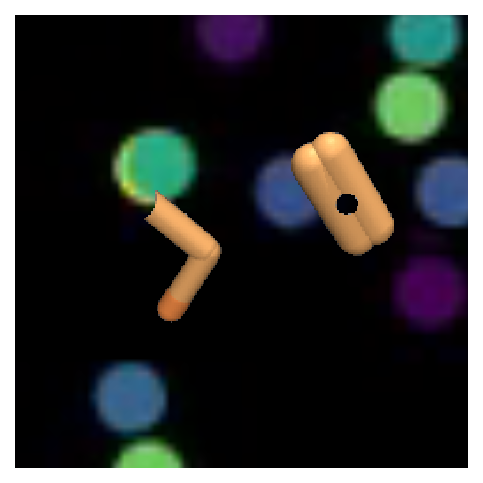}
    \includegraphics[height=0.12\linewidth]{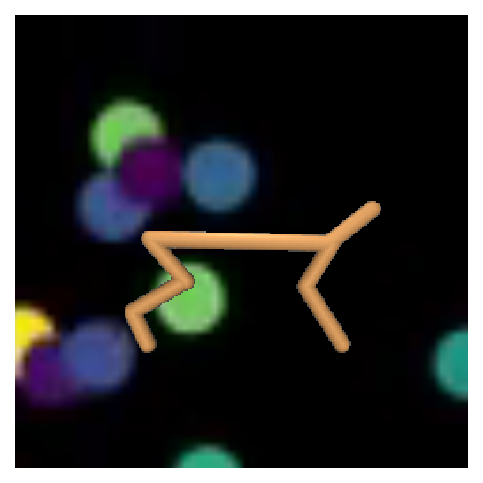}
    \includegraphics[height=0.12\linewidth]{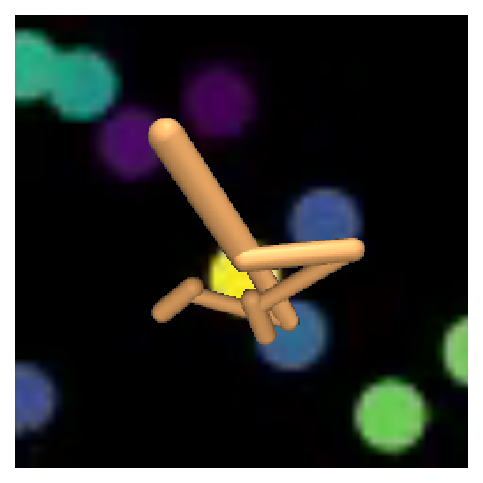}
    \includegraphics[height=0.14\linewidth]{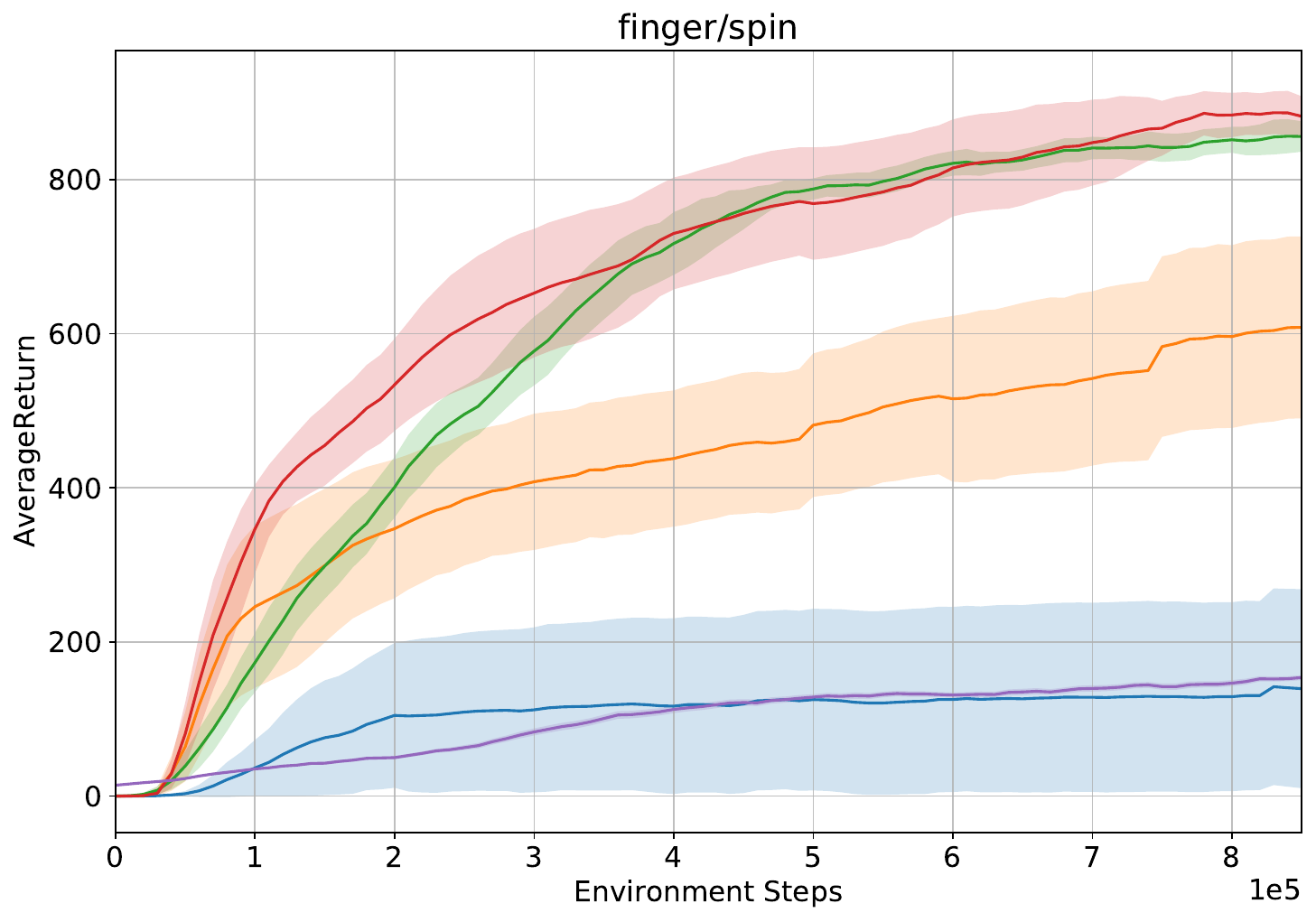}
    \includegraphics[height=0.14\linewidth]{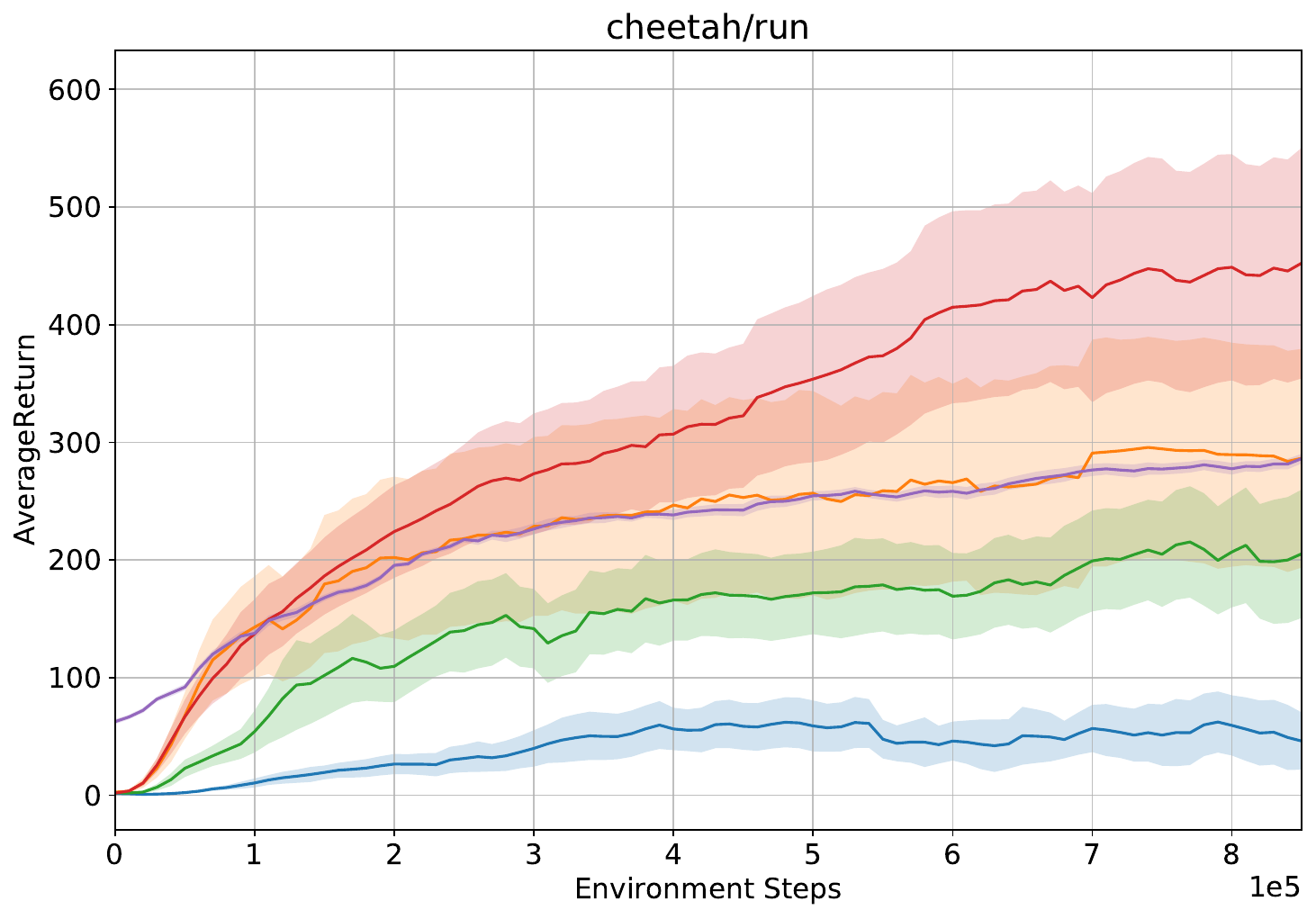}
    \includegraphics[height=0.14\linewidth]{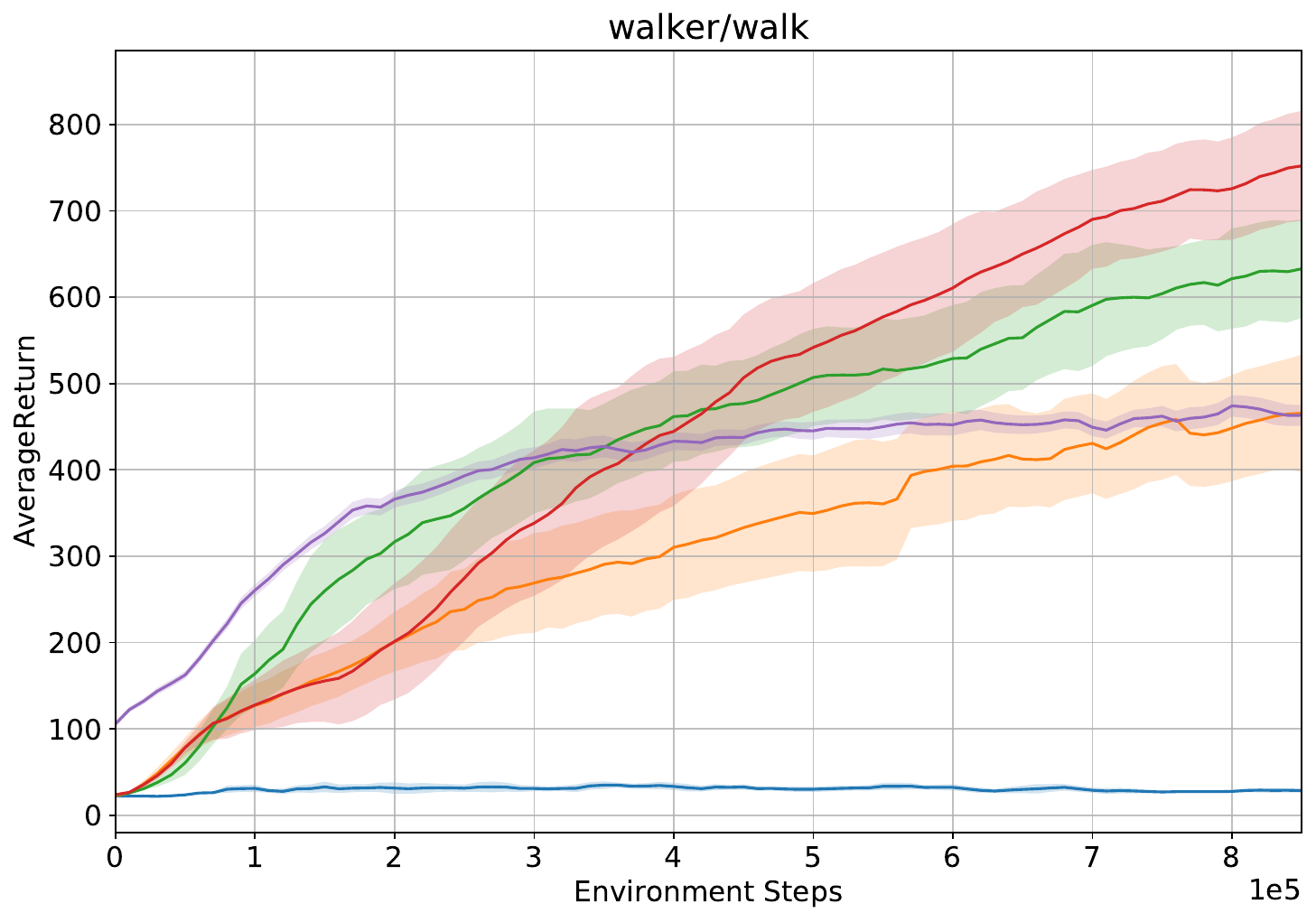} \\
    \includegraphics[height=0.12\linewidth]{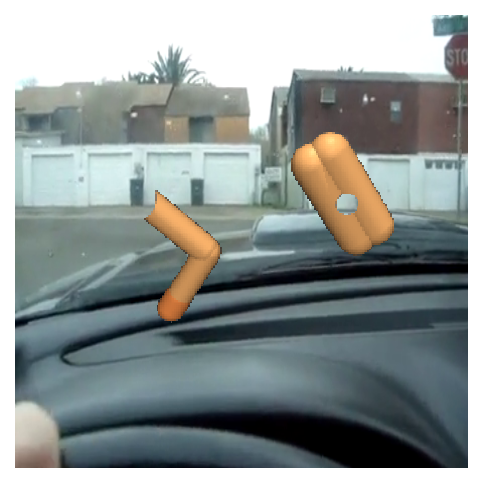}
    \includegraphics[height=0.12\linewidth]{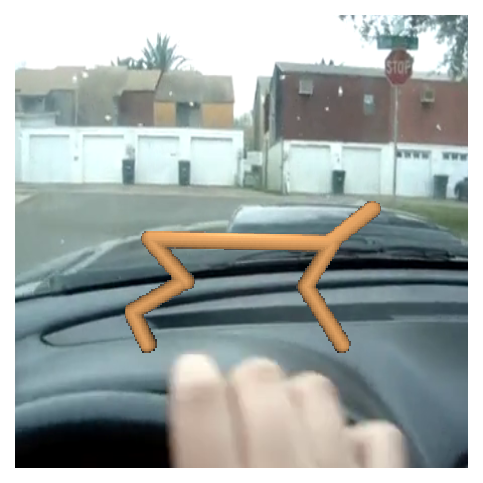}
    \includegraphics[height=0.12\linewidth]{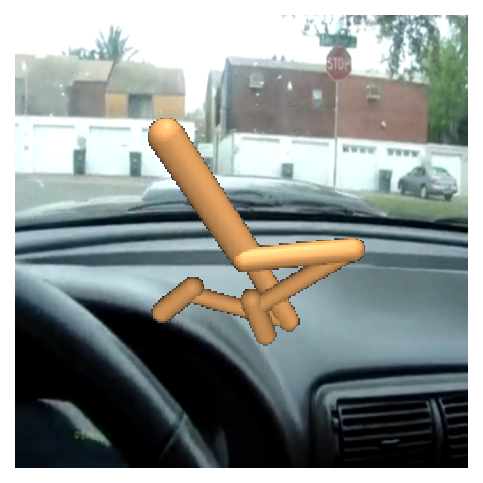}
    \includegraphics[height=0.14\linewidth]{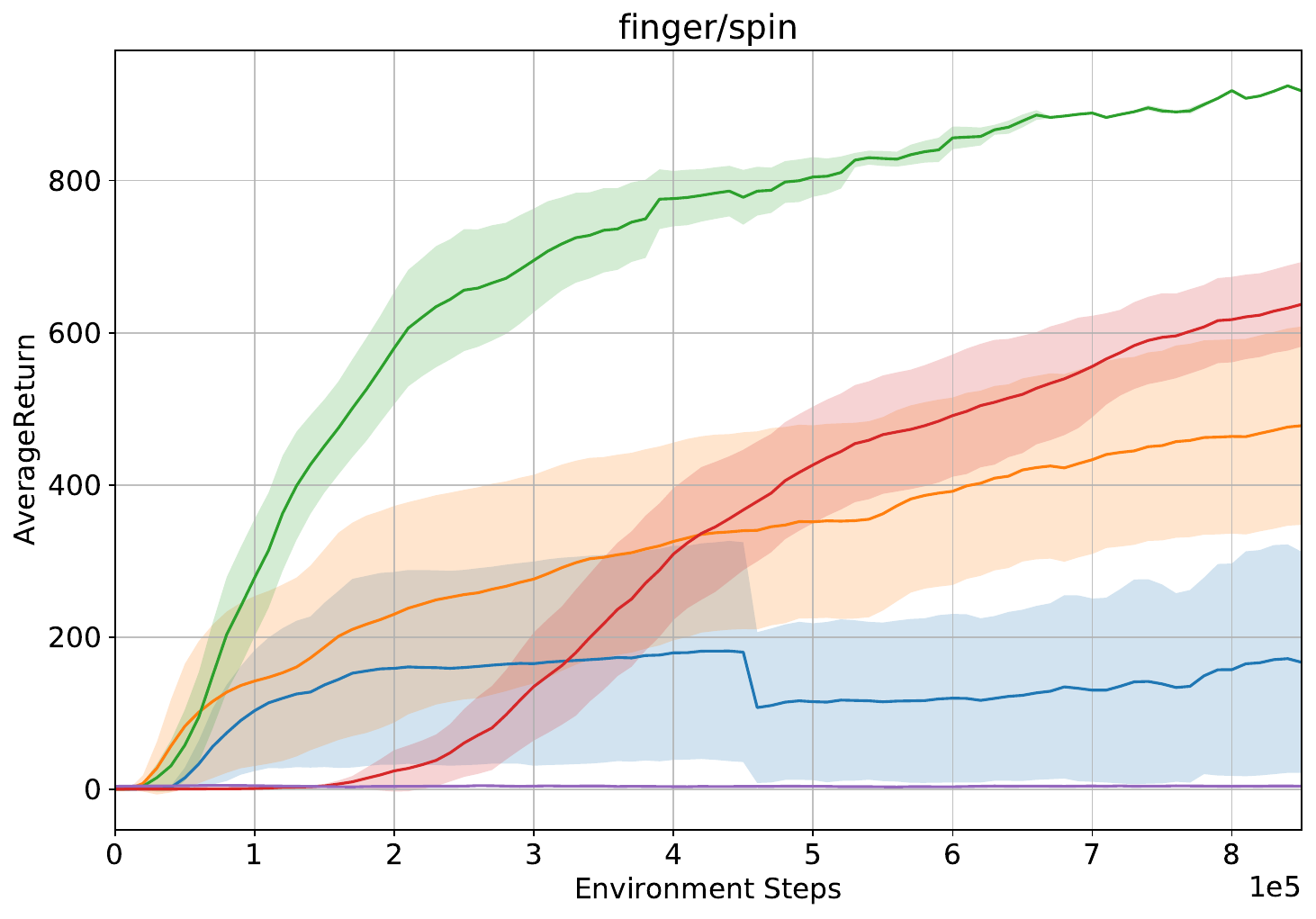}
    \includegraphics[height=0.14\linewidth]{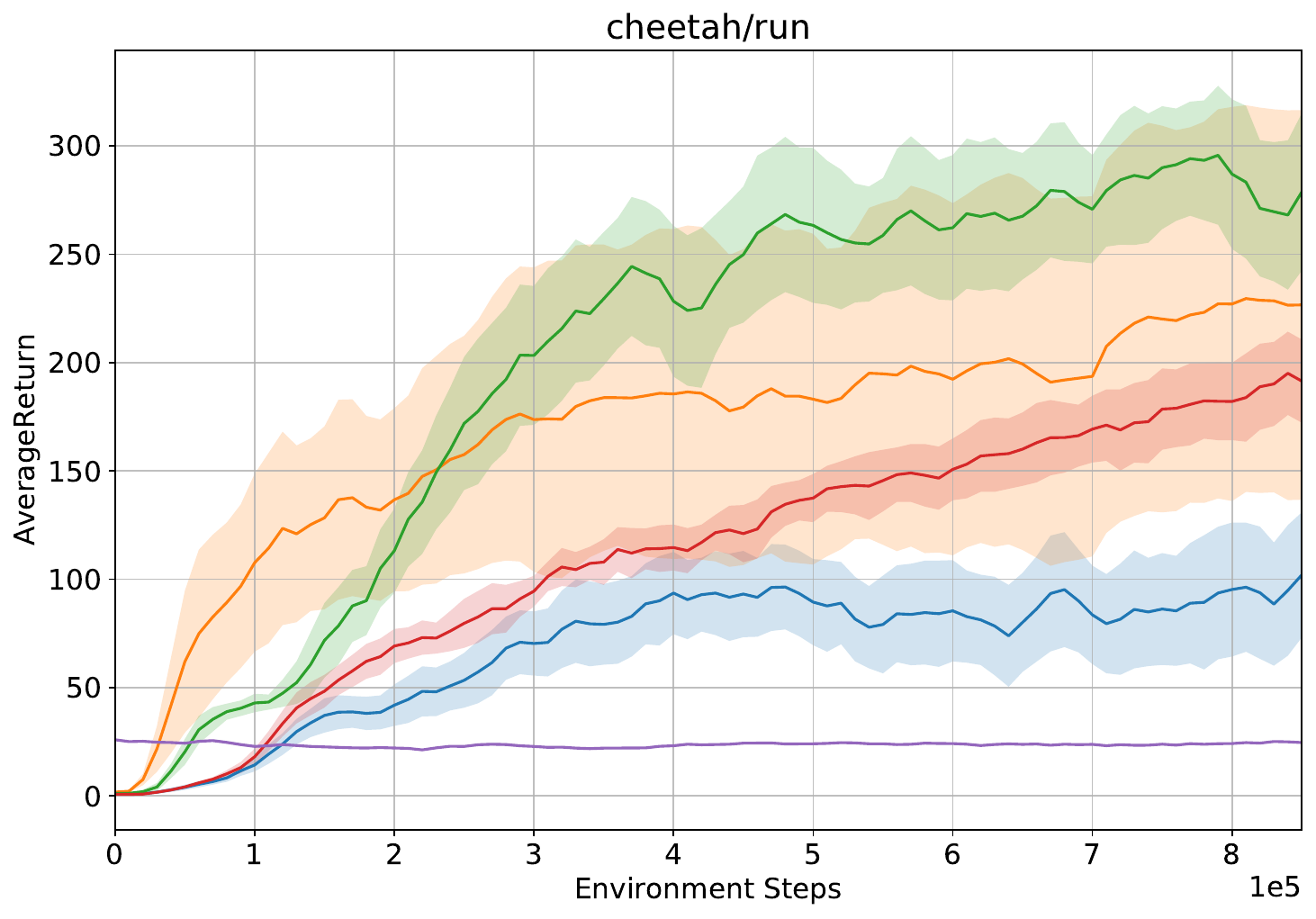}
    \includegraphics[height=0.14\linewidth]{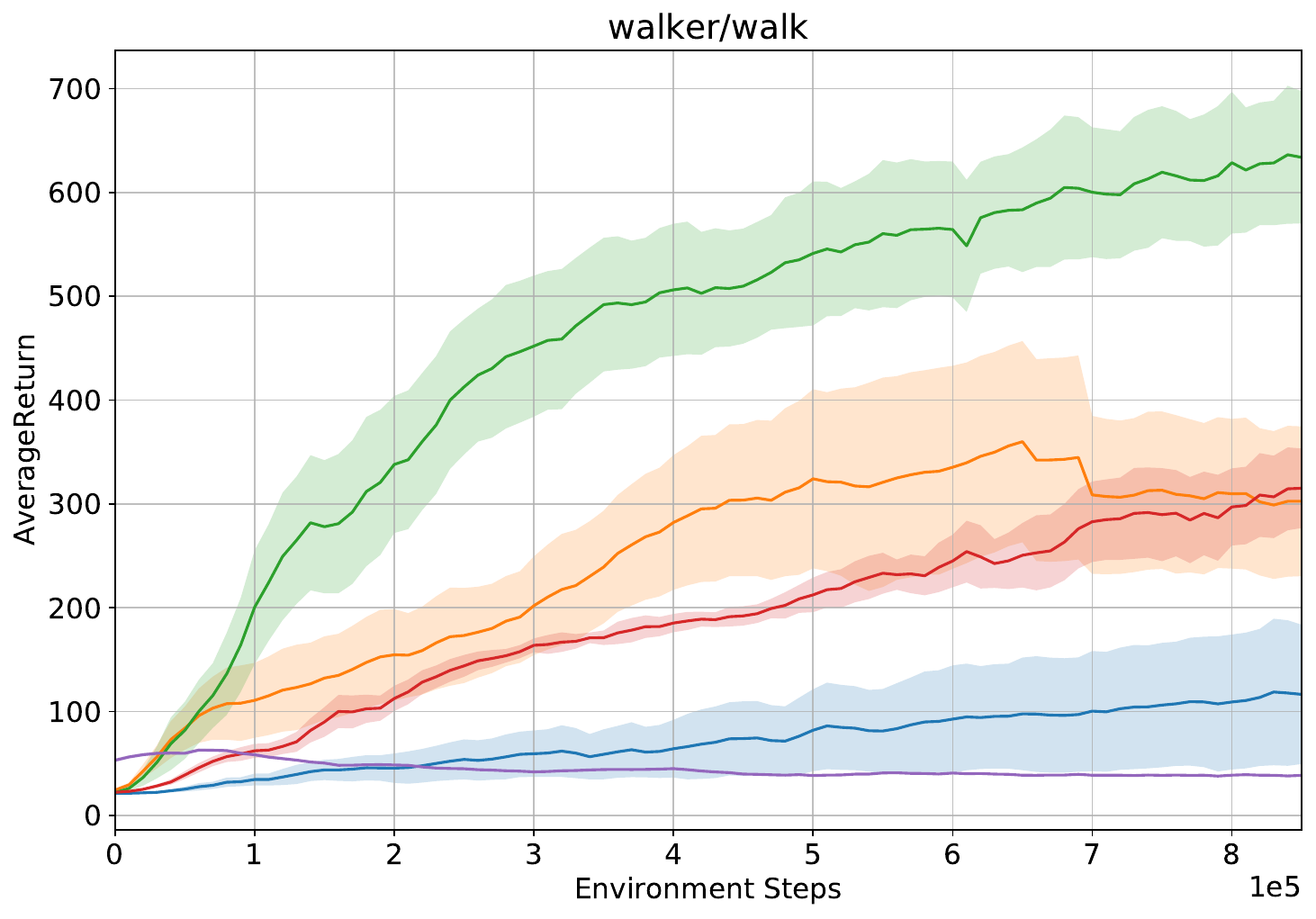}\\
    \includegraphics[width=1\linewidth]{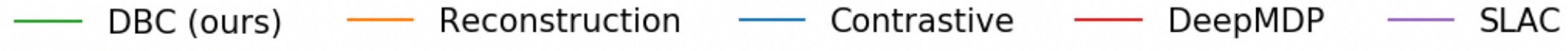}
    \caption{\small 
    \textbf{Left observations}: Pixel observations in DMC in the default setting (top row) of the finger spin (left column), cheetah (middle column), and walker (right column), 
    with simple distractors (middle row), 
    and natural video distractors (bottom row).
    \textbf{Right training curves}: Results comparing out \ouracronym method to baselines 
    on 10 seeds with 1 standard error shaded in the default setting. 
    The grid-location of each graph corresponds to the grid-location of each observation.
    }
    \label{fig:envs}
    \label{fig:main_dmc_results}
\end{figure}

\textbf{Default Setting.} Here, the pixel observations have simple backgrounds as shown in \cref{fig:envs} (top row) with training curves for our \ouracronym and baselines. We see SLAC, a recent state-of-the-art model-based representation learning method that uses reconstruction, generally performs best. 

\textbf{Simple Distractors Setting.}
Next, we include simple background distractors, shown in \cref{fig:envs} (middle row), with easy-to-predict motions. We use a fixed number of colored circles that obey the dynamics of an ideal gas (no attraction or repulsion between objects) with no collisions.
Note the performance of \ouracronym remains consistent, as other methods start decreasing.

\textbf{Natural Video Setting.} 
Then, we incorporate natural video from the Kinetics dataset~\citep{kineticsdataset} as background~\citep{azhang2018natrl}, shown in \cref{fig:envs} (bottom row). The results confirm our hypothesis: although a number of prior methods can learn effectively in the absence of distractors, when complex distractions are introduced, our non-reconstructive bisimulation based method attains substantially better results.

To visualize the representation learned with our bisimulation metric loss function in \cref{eq:bisim_loss}, we use a t-SNE plot (\cref{fig:embedding_viz}). We see that even when the background looks drastically different, our encoder learns to ignore irrelevant information and maps observations with similar robot configurations near each other. See \cref{app:additional_vizualizations} for another visualization.

\begin{figure}
    \centering
    \includegraphics[width=1\linewidth]{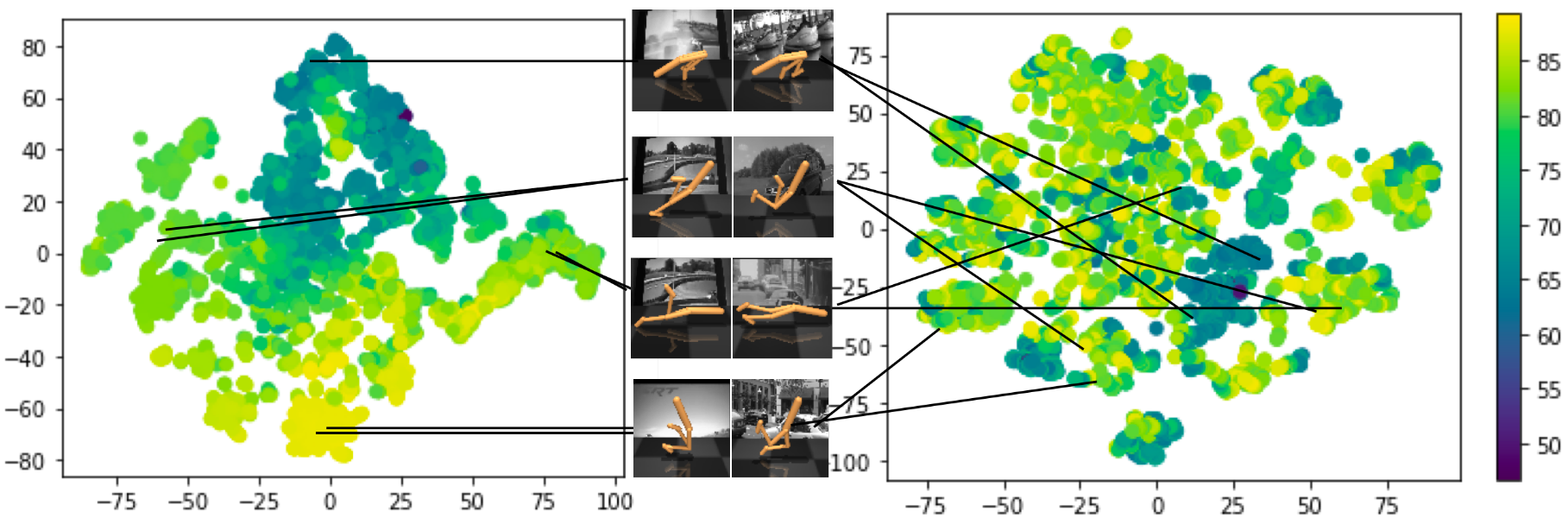}
    \caption{\small t-SNE of latent spaces learned with a bisimulation metric (left t-SNE) and VAE (right t-SNE) after training has completed, color-coded with predicted state values (higher value yellow, lower value purple). Neighboring points in the embedding space learned with a bisimulation metric have similar states and correspond to observations with the same task-related information (depicted as pairs of images with their corresponding embeddings), whereas no such structure is seen in the embedding space learned by VAE, where the same image pairs are mapped far away from each other. 
    }
    \vspace{-10pt}
    \label{fig:embedding_viz}  
\end{figure}

\subsection{Generalization Experiments}
\label{sec:gen_results}
We test generalization of our learned representation in two ways. First, we show that the learned representation space can generalize to different types of distractors, by training with simple distractors and testing on the natural video setting. Second, we show that our learned representation can be useful reward functions other than those it was trained for. 

\textbf{Generalizing over backgrounds.} We first train on the \texttt{simple distractors} setting and evaluate on \texttt{natural video}. \cref{fig:gen_results} shows an example of the \texttt{simple distractors} setting and performance during training time of two experiments, blue being the zero-shot transfer to the \texttt{natural video} setting, and orange the baseline which trains on \texttt{natural video}. This result empirically validates that the representations learned by DBC are able to effectively learn to ignore the background, \emph{regardless} of what the background contains or how dynamic it is.

\textbf{Generalizing over reward functions.} We evaluate (\cref{fig:gen_results}) the generalization capabilities of the learned representation by training SAC with new reward functions \texttt{walker\_stand} and \texttt{walker\_run} using the fixed representation learned from \texttt{walker\_walk}. This is empirical evidence that confirms \cref{thm:general_reward}: if the new reward functions are causally dependent on a subset of the same factors that determine the original reward function, then our representation is sufficient.

\begin{figure}[h]
    \centering
    \vspace{-10pt}
    \includegraphics[width=.32\linewidth,trim=0 0 0 7mm, clip]{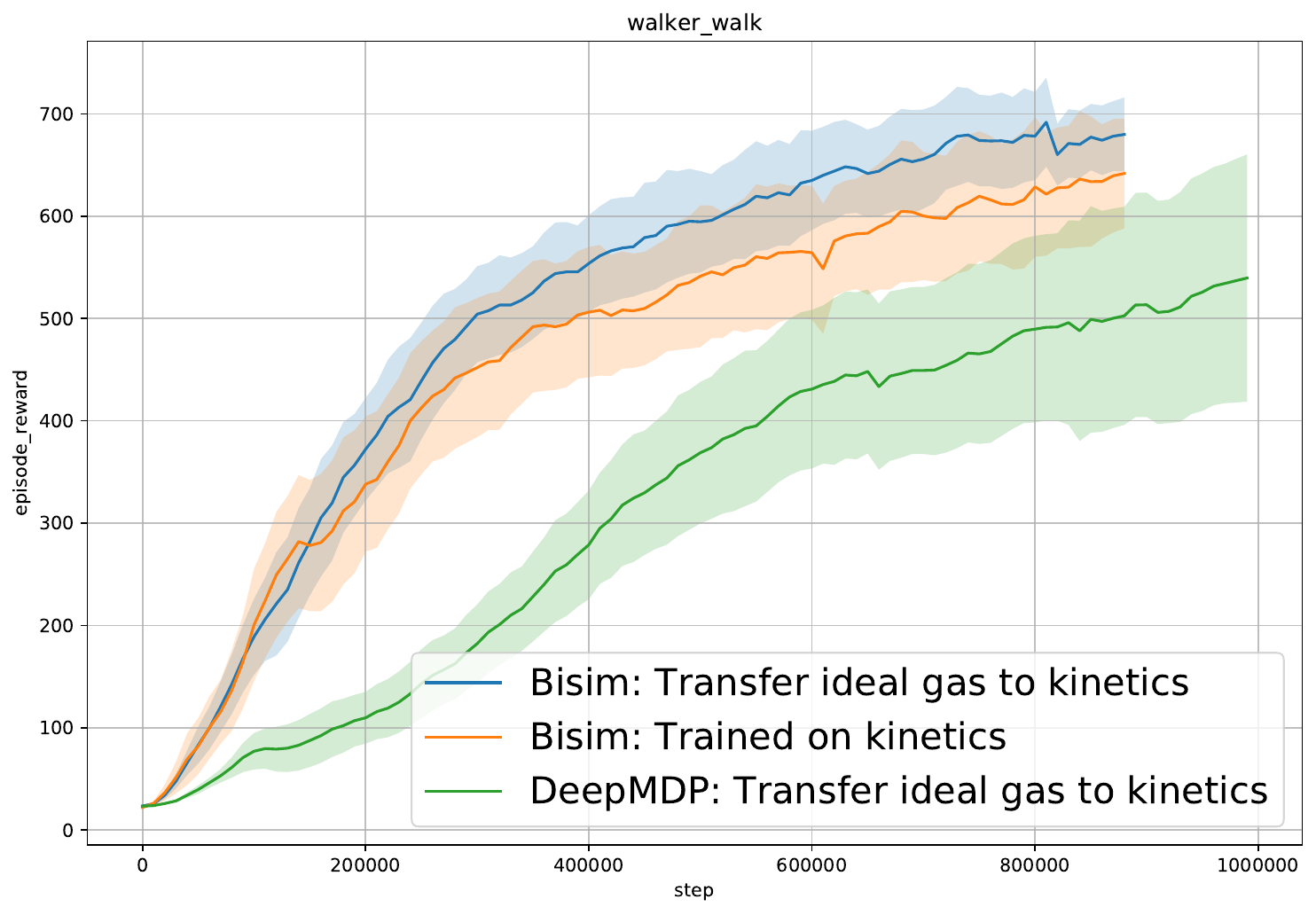}\!
    \includegraphics[width=.32\linewidth,trim=0 0 0 7mm, clip]{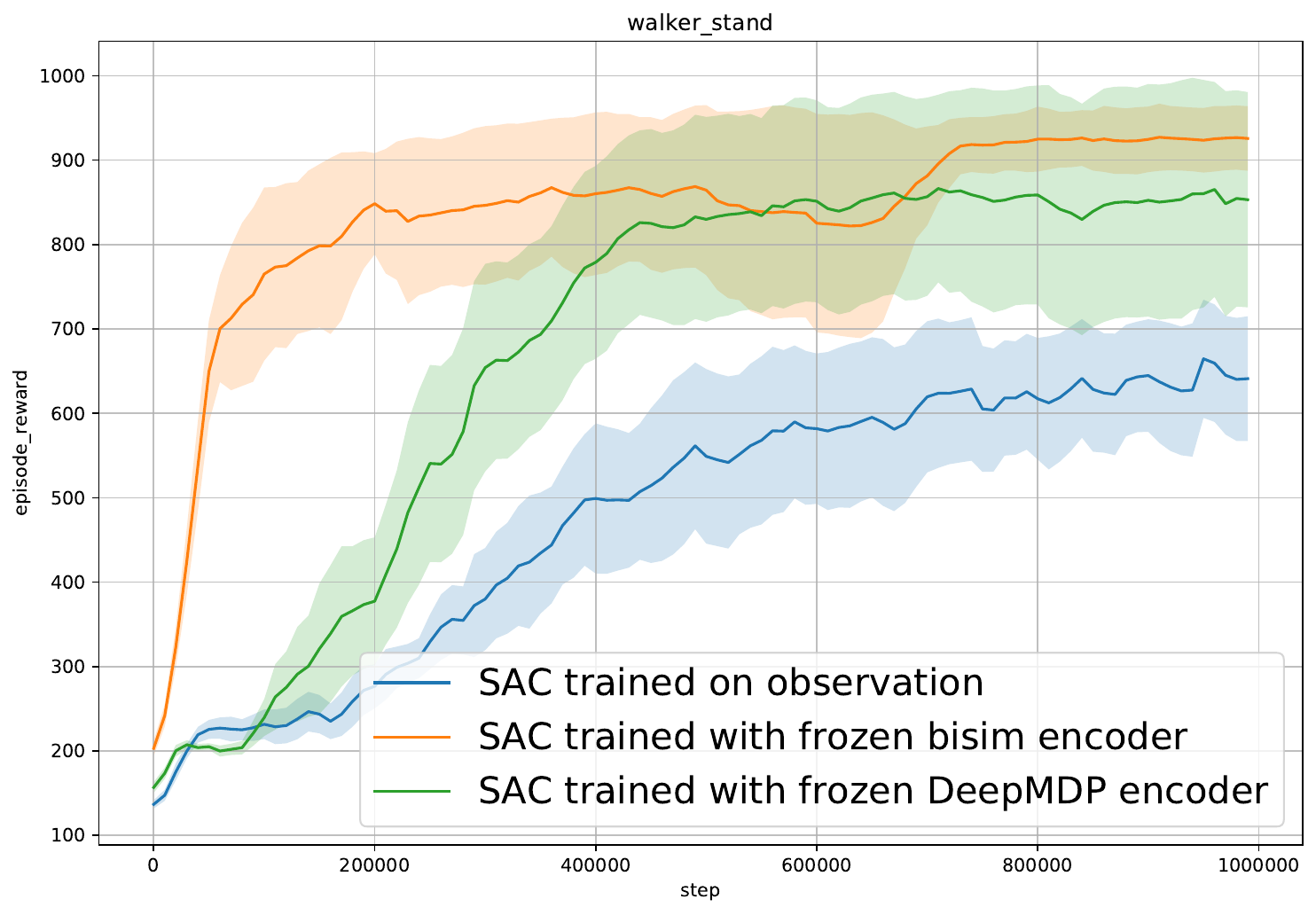}\!
    \includegraphics[width=.32\linewidth,trim=0 0 0 7mm, clip]{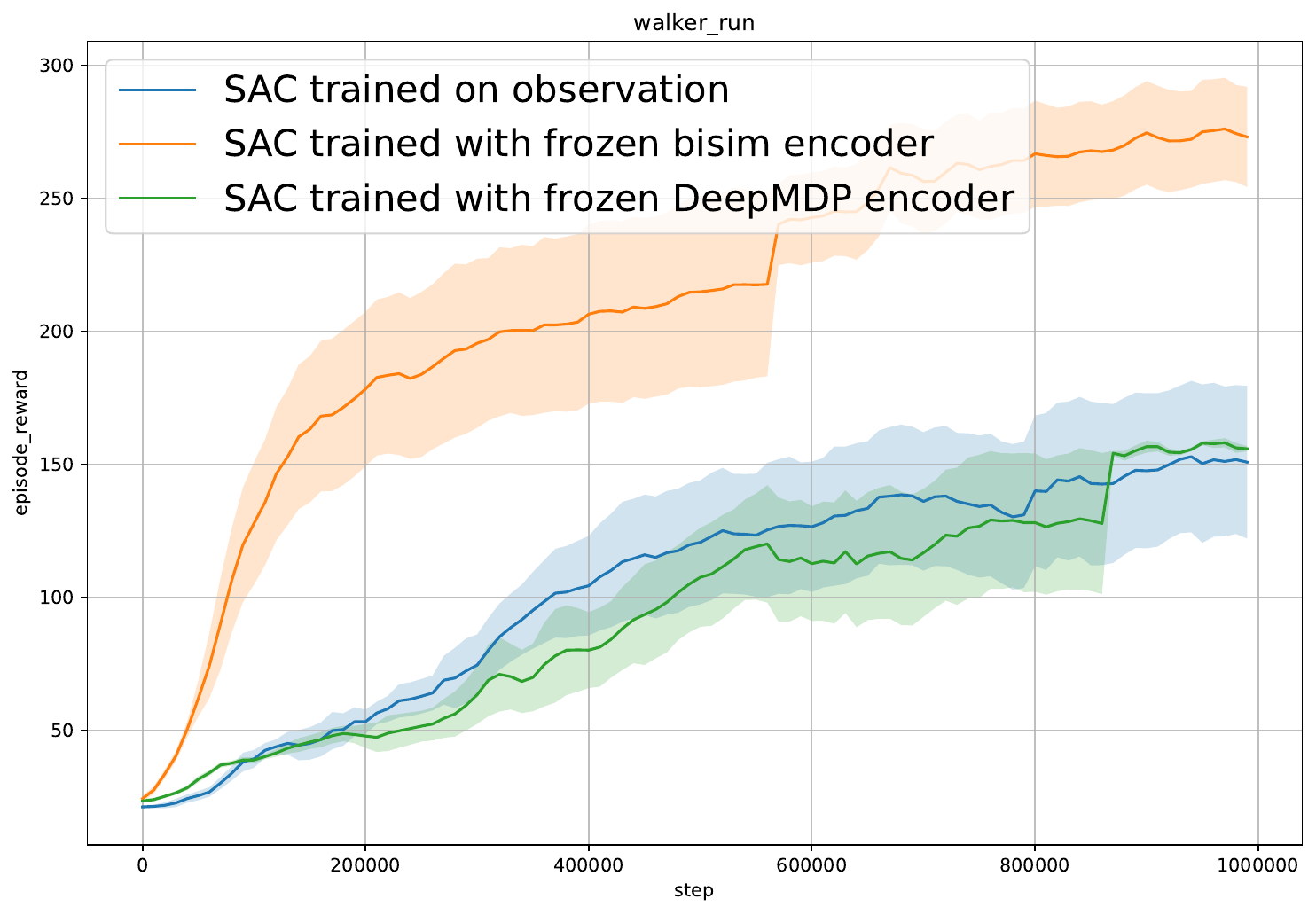}\!
    \vspace*{-10pt}
    \caption{\small Generalization of a model trained on \texttt{simple distractors} environment and evaluated on \texttt{kinetics} (left). Generalization of an encoder trained on \texttt{walker\_walk} environment and evaluated on \texttt{walker\_stand} (center) and \texttt{walker\_run} (right), all in the \texttt{simple distractors} setting. 10 seeds, 1 standard error shaded.}
    \vspace{-5pt}
    \label{fig:gen_results}
\end{figure}

\subsection{Comparison with other Bisimulation Encoders}
Even though the purpose of bisimulation metrics by \citet{castro20bisimulation} is learning distances $d$, not representation spaces $\mathcal{Z}$, it nevertheless implements $d$ with function approximation: $d(\state_i, \state_j) = \psi\big(\phi(\obs_i), \phi(\obs_j)\big)$ by encoding observations with $\phi$ before computing distances with $\psi$, trained as:
\vspace{-1mm}
\begin{align}
J(\phi, \psi) = \Big(\psi\big(\phi(\obs_i), \phi(\obs_j)\big) - |r_i - r_j| - \gamma \hat{\psi}\Big(\hat{\phi}\big(\mathcal{P}(\state_i, \pi(\state_i))\big), \hat{\phi}\big(\mathcal{P}(\state_j, \pi(\state_j))\big)\Big)\Big)^2\!\!,
\label{eq:castro} 
\vspace{-2mm}
\end{align}
\begin{wrapfigure}{r}{0.35\linewidth}
    \vspace{-10pt}
    \centering
    \includegraphics[width=1\linewidth]{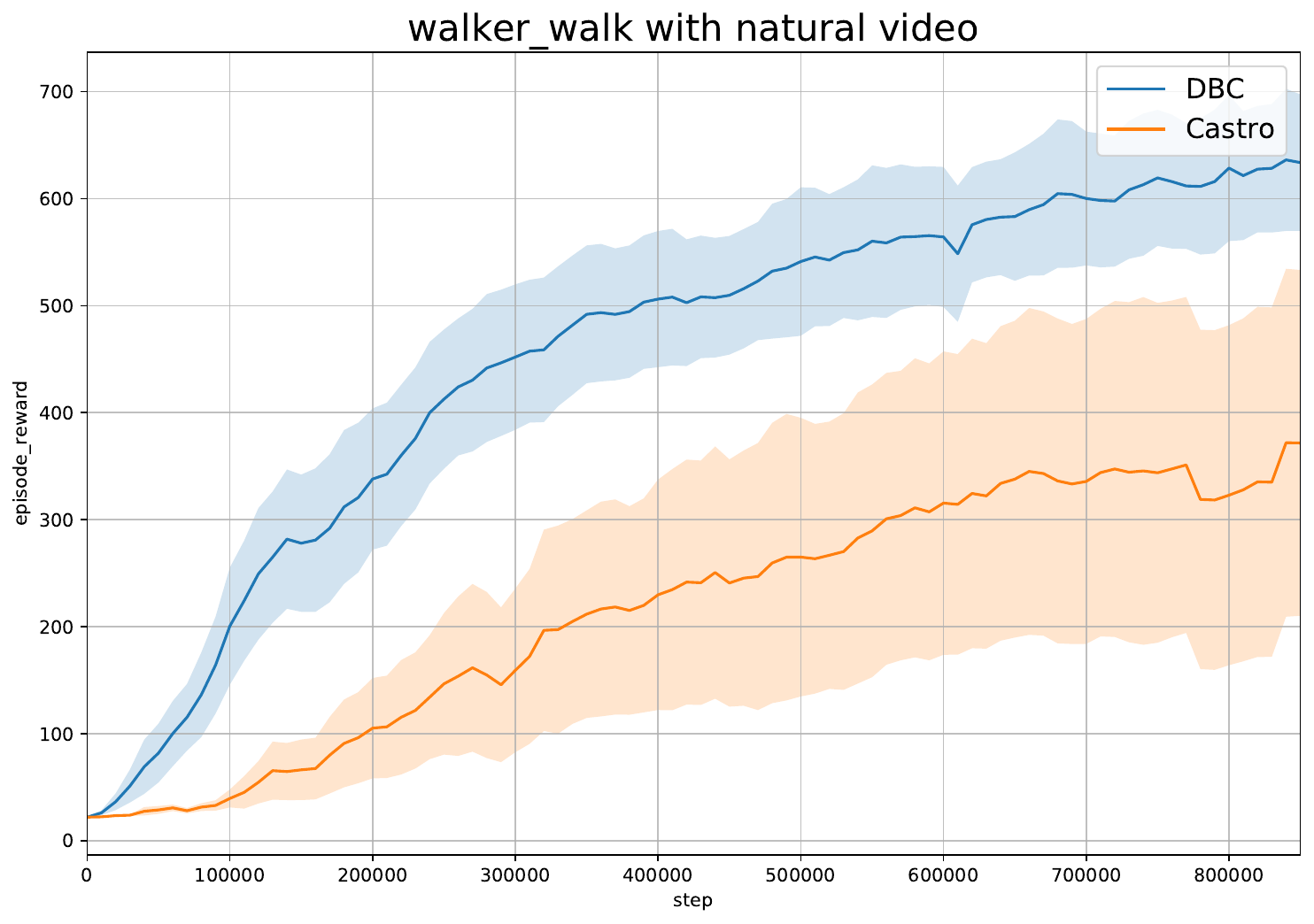}
    \vspace{-20pt}
    \caption{\small Bisim.\ results. Blue is DBC and orange is \cite{castro20bisimulation}.}
    \vspace{-7mm}
    \label{fig:walker_walk_castro}
\end{wrapfigure}
where $\hat{\phi}$ and $\hat{\psi}$ are target networks. 
A natural question is: how does the encoder $\phi$ above perform in control tasks?
We combine $\phi$ above with our policy in \cref{alg:sac} and use the same network $\psi$ (single hidden layer 729 wide).
\cref{fig:walker_walk_castro} shows representations from \citet{castro20bisimulation} can learn control (surprisingly well given it was not designed to), but our method learns faster. Further, our method is simpler: by comparing \cref{eq:castro} to \cref{eq:bisim_loss}, our method uses the $\ell_1$ distance between the encoding instead of introducing an addition network $\psi$.

\subsection{Autonomous Driving with Visual Redundancy}
    
\begin{wrapfigure}[7]{r}{0.5\textwidth}
    \vspace{-4mm}
    \includegraphics[height=17mm]{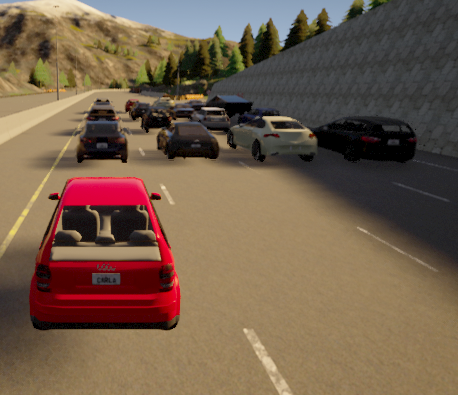}
    \includegraphics[height=17mm]{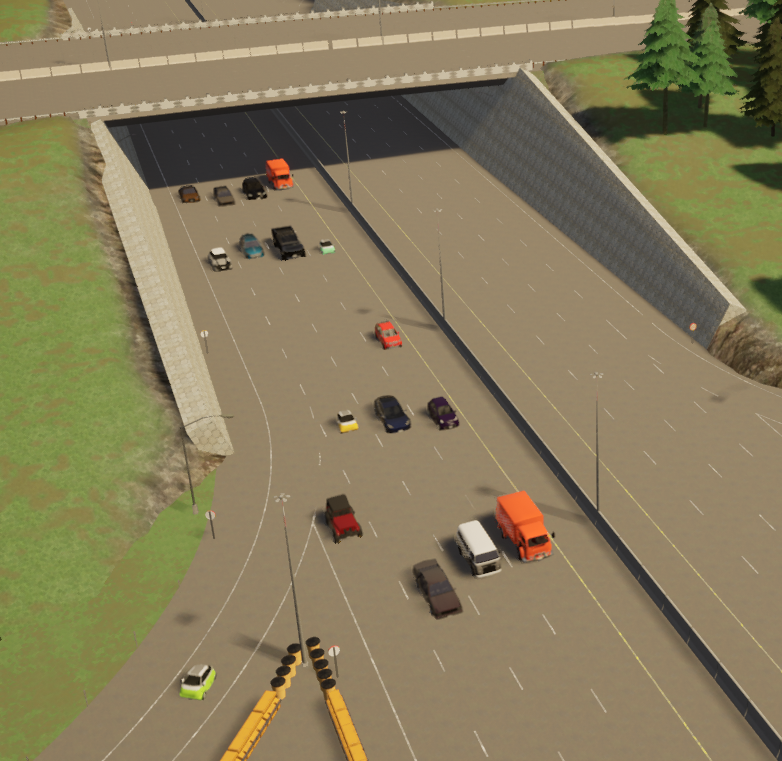}
    \includegraphics[height=17mm]{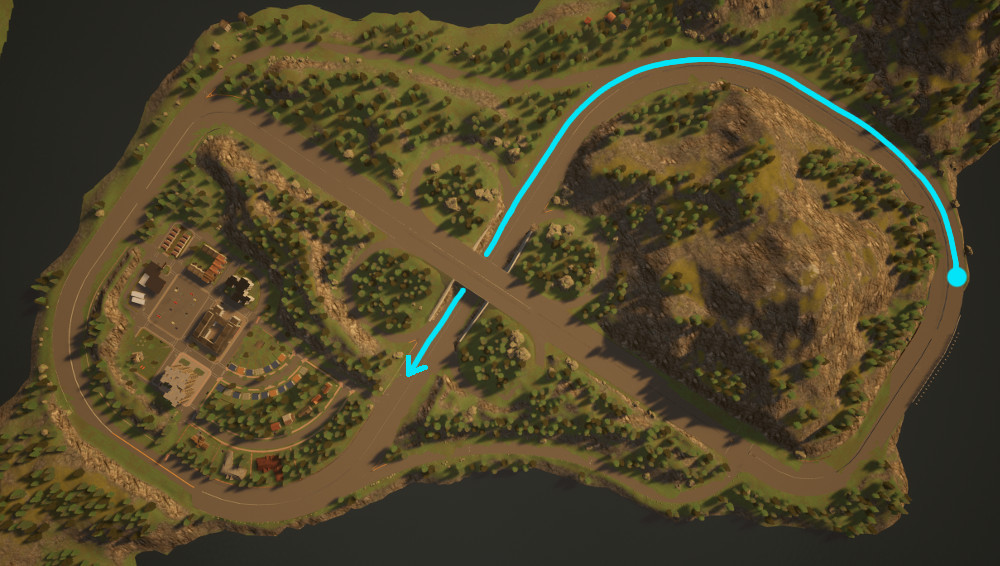}
    \vspace{-2mm}
    \captionof{figure}{\small The driving task is to drive the red ego car (left) safely in traffic (middle) along a highway (right).}
    \label{fig:carlatask}
    \vspace{2mm}
\end{wrapfigure}
Real-world control systems such as robotics and autonomous vehicles must contend with a huge variety of task-irrelevant information, such as irrelevant \textit{objects} (e.g.\ clouds) and irrelevant \textit{details} (e.g.\ obstacle color).
To evaluate \ouracronym on tasks with more realistic observations, we construct a highway driving scenario with photo-realistic visual observations using the CARLA simulator~\citep{dosovitskiy2017carla} shown in \cref{fig:carlatask}.
The agent's goal is to drive as far as possible along CARLA's Town04's figure-8 the highway in 1000 time-steps without colliding into the 20 other moving vehicles or barriers.
Our objective function rewards highway progression and penalises collisions:
\smash{$r_t = \mathbf{v}_\text{ego}^{\top} \hat{\mathbf{u}}_{\text{highway}} \cdot \Delta t - \lambda_i \cdot \text{impulse} - \lambda_s \cdot |\text{steer}|$},
where $\mathbf{v}_\text{ego}$ is the velocity vector of the ego vehicle, projected onto the highway's unit vector $\hat{\mathbf{u}}_{\text{highway}}$, and multiplied by time discretization $\Delta t=0.05$ to measure highway progression in meters. Collisions result in impulses \smash{$\in\mathbb{R}^+$}, measured in Newton-seconds.
We found a steering penalty $\text{steer}\in[-1,1]$ helped, 
and used weights \smash{$\lambda_i=10^{-4}$} and $\lambda_s=1$.
While more specialized objectives exist like lane-keeping, this experiment's purpose is only to compare representations with observations more characteristic of real robotic tasks.
We use five cameras on the vehicle's roof, each with 60 degree views.
By concatenating the images together, our vehicle has a 300 degree view, observed as $84 \times 420$ pixels. Code and install instructions in appendix.

\vspace{-1mm}
\textbf{Results} in \cref{fig:carlaresults} compare the same baselines as before, 
except for SLAC which is easily distracted (\cref{fig:main_dmc_results}). 
Instead we used SAC, which does not explicitly learn a representation, but performs surprisingly well from raw images. DeepMDP performs well too, perhaps given its similarly to bisimulation. But, Reconstruction and Contrastive methods again perform poorly with complex images. 
More intuitive metrics are in \cref{table:carlaresults} and 
\cref{fig:carletsne} depicts the representation space as a t-SNE with corresponding observations.
Each run took 12 hours on a GTX 1080 GPU.

\begin{minipage}[h]{\textwidth}
\begin{minipage}[h]{\textwidth}
    \includegraphics[width=1\textwidth]{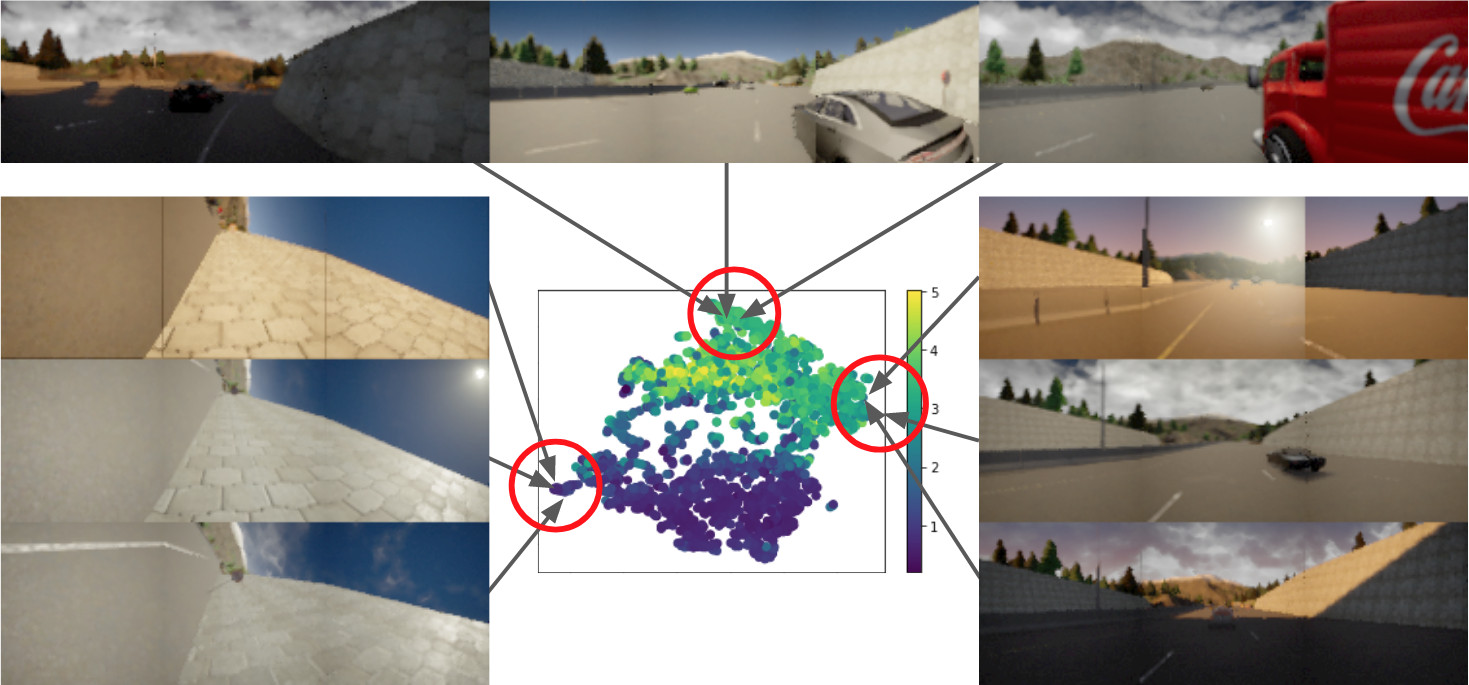}
    \captionof{figure}{\small A t-SNE diagram of encoded first-person driving observations after 10k training steps of \cref{alg:bisim}, color coded by value ($V$ in \cref{alg:sac}).
    \textbf{Top}: the learned representation identifies an obstacle on the right side. Whether that obstacle is a dark wall, bright car, or truck is task-irrelevant: these states are behaviourally equivalent. \textbf{Left}: the ego vehicle has flipped onto its left side. The different wall colors, due to a setting sun, is irrelevant: all states are equally stuck and low-value (purple t-SNE color).
    \textbf{Right}: clear highway driving. Clouds and sun position are irrelevant.}
    \label{fig:carletsne}
\end{minipage}
\begin{minipage}[t]{0.48\textwidth}
    \vspace{5mm}
    \includegraphics[width=\textwidth]{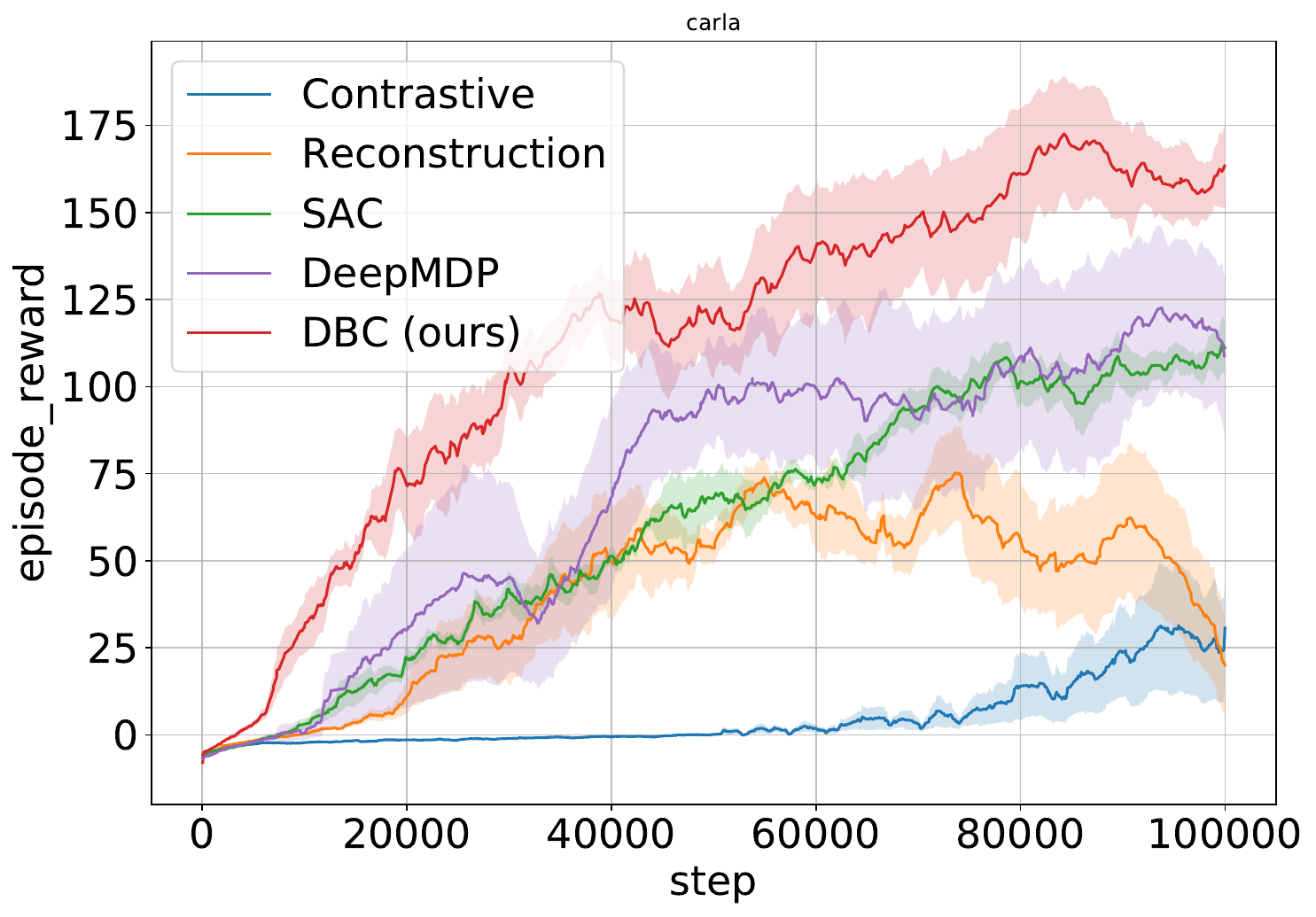}
\end{minipage}
\hfill
\begin{minipage}[t]{0.5\textwidth}
    \vspace{5mm}
    \captionof{figure}{\small Performance comparison with 3 seeds on the driving task. Our DBC method (red) performs better than DeepMDP  (purple) or learning direct from pixels without a representation (SAC, green), and much better than contrastive methods (blue). Our method's final performance is 46.8\% better than the next best baseline.}
    \label{fig:carlaresults}
    \centering
    \captionof{table}{\small Driving metrics, averaged over 100 episodes, after 100k training steps, with standard error. Arrow direction indicates if metric desired larger or smaller.}
    \label{table:carlaresults}
    \vspace{-3mm}
    \resizebox{\textwidth}{!}{%
    \begin{tabular}{lccc}\hline
    & SAC       & DeepMDP    & \textbf{DBC (ours)}                      \\
    \hline 
    successes (100m)  \hfill $\uparrow$   & $12$\% & $17$\% & $\mathbf{24}$\%  \\
    distance (m)    \hfill $\uparrow$   & $123.2\pm7.43$          & $106.7\pm11.1$        & $\mathbf{179.0}\pm11.4$  \\
    crash intensity            \hfill$\downarrow$  & $4604\pm30.7$           & $\mathbf{1958}\pm15.6$  & $2673\pm38.5$          \\
    average steer \hfill $\downarrow$ & $16.6\%\pm0.019\%$          & $10.4\%\pm0.015\%$        & $\mathbf{7.3}\%\pm0.012\%$  \\
    average brake \hfill $\downarrow$ & $\mathbf{1.3}\%\pm0.006\%$ & $4.3\%\pm0.033\%$        & $1.6\%\pm0.022\%$         \\
    \hline
    \end{tabular}%
    }
\end{minipage}
\end{minipage}

\vspace{-6mm}
\section{Discussion}
\vspace{-3mm}

This paper presents \OurMethod: a new representation learning method that considers downstream control.
Observations are encoded into representations that are \textit{invariant} to different task-irrelevant details in the observation. We show this is important when learning control from outdoor images, or otherwise images with background ``distractions''. 
In contrast to other bisimulation methods, we show performance gains when distances in representation space match the bisimulation distance between observations.

\textbf{Future work:} Several options exist for future work. 
First, our latent dynamics model \smash{$\hat{\mathcal{P}}$} was only used for training our encoder in \cref{eq:bisim_loss}, but could also be used for multi-step planning in latent space.
Second, estimating uncertainty could also be important to produce agents that can work in the real world, perhaps via an ensemble of models \smash{$\{\hat{\mathcal{P}}_k\}_{k=1}^K$}, to detect---and adapt to---distributional shifts between training and test observations.
Third, an undressed issue is that of partially observed settings (that assumed approximately full observability by using stacked images), possibly using explicit memory or implicit memory such as an LSTM.
Finally, investigating which metrics (L1 or L2) and dynamics distributions (Gaussians or not) would be beneficial.

\bibliographystyle{iclr2021_conference}
\bibliography{bibliography}
\newpage
\appendix

\section{Additional Theorems and Proofs}
\label{app:proofs}
\begin{theoremnum}[\ref{thm:bisim_fixed_point}]
Let $\mathfrak{met}$ be the space of bounded pseudometrics on $S$  and $\pi\in\Pi$ a policy that is continuously improving in the space of policies $\Pi$. Define $\mathcal{F}:\mathfrak{met}\times \Pi \mapsto \mathfrak{met}$ by
\begin{equation}
\mathcal{F}(d,\pi)(\state_i,\state_j)=(1-c)|r^{\pi}_{\state_i} - r^{\pi}_{\state_j}| + c W(d)(\mathcal{P}^{\pi}_{\state_i},\mathcal{P}^{\pi}_{\state_j}).
\end{equation}
Then $\mathcal{F}$ has a least fixed point $\tilde{d}$ which is a  $\pi^*$-bisimulation metric.
\end{theoremnum}
\begin{proof}
Ideally, to prove this theorem we show that $\mathcal{F}$ is monotonically increasing and continuous, and apply Fixed Point Theorem to show  the existence of a fixed point that $\mathcal{F}$ converges to. Unfortunately, we can show that $\mathcal{F}$ under $\pi$ as $\pi$ monotonically converges to $\pi^*$ is 
\textit{not} also monotonic, unlike the original bisimulation metric setting~\citep{ferns2004bisimulation} and the policy evaluation setting~\citep{castro20bisimulation}. We start the iterates $\mathcal{F}^n$ from bottom $\perp$, denoted as $\mathcal{F}^n(\perp)$. In \citet{ferns2004bisimulation} the $\max_{\ba\in\mathcal{A}}$ can be thought of as learning a policy between every two pairs of states to maximize their distance, and therefore this distance can only stay the same or grow over iterations of $\mathcal{F}$. In \citet{castro20bisimulation},  $\pi$ is fixed, and under a deterministic MDP it can also be shown that distance between states $d_n(\bs_i,\bs_j)$ will only expand, not contract as $n$ increases. In the policy iteration setting, however, with $\pi$ starting from initialization $\pi_0$ and getting updated:
\begin{equation}
    \pi_k(\bs)=\arg\max_{\ba\in \mathcal{A}}\sum_{\bs'\in S}[r^\ba_{\bs \bs'}+\gamma V^{\pi_{k-1}}(\bs')],
\end{equation}
there is no guarantee that the distance between two states $d_{n-1}^{\pi_{k-1}}(\bs_i,\bs_j)<d_{n}^{\pi_k}(\bs_i,\bs_j)$ under policy iterations $\pi_{k-1},\pi_k$ and distance metric iterations $d_{n-1},d_n$ for $k,n\in\mathbb{N}$, which is required for  monotonicity.

Instead, we show that using the policy improvement theorem which gives us
\begin{equation}
    V^{\pi_k}(\bs)\geq V^{\pi_{k-1}}(\bs),\forall \bs\in\mathcal{S},
\end{equation}
$\pi$ will converge to a fixed point using the Fixed Point Theorem, 
and taking the result  by  \citet{castro20bisimulation}  that  $\mathcal{F}^\pi$ has a fixed point for every $\pi\in\Pi$, we can show that a fixed point bisimulation metric will be found with policy iteration.  
\end{proof}

\begin{theoremnum}[\ref{thm:value_bound}]
Given a new aggregated MDP $\bar{\mathcal{M}}$ constructed by aggregating states in an $\epsilon$-neighborhood, and an encoder $\phi$ that maps from states in the original MDP $\mathcal{M}$ to these clusters, the optimal value functions for the two MDPs are bounded as
\begin{equation}
|V^*(\state) - V^*(\phi(\obs))| \leq \frac{2\epsilon}{(1-\gamma)(1-c)}.
\end{equation}
\end{theoremnum}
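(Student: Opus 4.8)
The plan is to read $V^*(\phi(\state))$ as the optimal value function of the aggregated MDP $\bar{\mathcal{M}}$ evaluated at the cluster containing $\state$, and to bound its gap to $V^*(\state)$ by accumulating the per-step error introduced by aggregation over the infinite horizon. The two factors in the target bound signal the two ingredients: the $\tfrac{1}{1-c}$ comes from converting bisimulation distances into reward and value differences (the Lipschitz property of \cref{thm:v_lipschitz}), while the $\tfrac{1}{1-\gamma}$ comes from a geometric accumulation of a one-step aggregation error through the contraction property of the Bellman optimality operator.

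First I would control the intra-cluster geometry. Since each cluster is an $\epsilon$-neighborhood in the representation space where $\|\phi(\state_i)-\phi(\state_j)\|_2 = \tilde{d}(\state_i,\state_j)$, any two states assigned to the same cluster satisfy $\tilde{d}(\state_i,\state_j)\le 2\epsilon$, as the diameter is twice the radius; this accounts for the factor of $2$ in the numerator. Reading the leading term of \cref{def:bisim_metric} gives $(1-c)\,|\mathcal{R}_{\state_i}^\action - \mathcal{R}_{\state_j}^\action| \le \tilde{d}(\state_i,\state_j) \le 2\epsilon$ for every $\action$, so within-cluster rewards vary by at most $\tfrac{2\epsilon}{1-c}$; the same inequality bounds the Wasserstein term $c\,W_1(\mathcal{P}_{\state_i}^\action,\mathcal{P}_{\state_j}^\action;\tilde{d})\le 2\epsilon$, controlling how much transition distributions differ within a cluster when measured under $\tilde{d}$.

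Next I would fix a concrete construction of $\bar{\mathcal{M}}$ (cluster rewards and transitions taken from a representative state, or averaged over the cluster), write down the Bellman optimality equations for $V^*$ on $\mathcal{M}$ and $\bar{V}^*$ on $\bar{\mathcal{M}}$, and lift $\bar{V}^*$ to $\mathcal{S}$ through $\phi$. Letting $T$ denote the optimality operator of $\mathcal{M}$, I would bound the residual $\|T(\bar{V}^*\!\circ\phi) - \bar{V}^*\!\circ\phi\|_\infty$, which equals the difference between the two MDPs' Bellman operators applied to the same function, by the one-step aggregation error from the previous paragraph. Since $T$ is a $\gamma$-contraction with fixed point $V^*$, this yields $\|V^* - \bar{V}^*\!\circ\phi\|_\infty \le \tfrac{1}{1-\gamma}\cdot\tfrac{2\epsilon}{1-c}$, which is the claim.

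The hard part will be bounding that residual without the transition term blowing the constant up. A naive split --- reward error $\tfrac{2\epsilon}{1-c}$ plus a discounted transition error of the form $\gamma\cdot\tfrac{1}{1-c}\cdot W_1(\cdot,\cdot;\tilde{d})$ --- overcounts and does not collapse into the stated constant. The resolution is to use the bisimulation metric holistically: because \cref{def:bisim_metric} already bundles the immediate reward gap with the $c$-discounted Wasserstein gap of future behavior, the quantity bounded by $2\epsilon$ inside a cluster is precisely the combined one-step behavioral difference, and the Lipschitz-in-$\tilde{d}$ property of $V^*$ converts this single bundled quantity into a value gap of $\tfrac{2\epsilon}{1-c}$ rather than two separately accumulating terms. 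I would also need to verify that the hypothesis $c\ge\gamma$ of \cref{thm:v_lipschitz} is in force so that this Lipschitz bound is available.
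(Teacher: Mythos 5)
The paper's own proof contains almost no derivation: it invokes Theorem 5.1 of Ferns et al.\ (2004), which states $(1-c)\,|V^*(\state)-V^*(\phi(\state))|\le g(\state,\tilde{d})+\frac{\gamma}{1-\gamma}\max_{u}g(u,\tilde{d})$ with $g$ the average intra-cluster distance under $\tilde{d}$, substitutes $g\le 2\epsilon$ (your diameter observation), and rearranges. You instead propose to re-derive that aggregation bound from first principles. Your ingredients are the right ones --- the $2\epsilon$ diameter, the hypothesis $c\ge\gamma$ so that \cref{thm:v_lipschitz} is available (a good catch: the paper's citation of Ferns also silently requires this, though the theorem statement omits it), and the insistence that the reward and Wasserstein terms be bounded jointly through \cref{def:bisim_metric} rather than separately.

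However, the scaffolding you chose --- bound the residual $\|T(\bar{V}^*\!\circ\phi)-\bar{V}^*\!\circ\phi\|_\infty$ and divide by $1-\gamma$ --- breaks at exactly the step you flag as hard, and for a different reason than the one you diagnose. The residual contains the term $\big|\mathbb{E}_{\mathcal{P}_\state^\action}[\bar{V}^*\!\circ\phi]-\mathbb{E}_{\mathcal{P}_u^\action}[\bar{V}^*\!\circ\phi]\big|$, where $u$ is a representative of $\state$'s cluster, and Kantorovich duality cannot be applied to it: the test function $\bar{V}^*\!\circ\phi$ is piecewise constant across clusters, hence not Lipschitz in $\tilde{d}$; the Lipschitz property you cite holds for $V^*$, not for the lifted aggregated value. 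Patching by writing $\bar{V}^*\!\circ\phi=V^*+(\bar{V}^*\!\circ\phi-V^*)$ inside both expectations costs $2\gamma B$, where $B:=\|V^*-\bar{V}^*\!\circ\phi\|_\infty$, and the contraction argument then yields only $(1-3\gamma)B\le\frac{2\epsilon}{1-c}$, which is vacuous for $\gamma\ge 1/3$ and never gives the stated constant. The repair is to abandon the residual template and compare the two Bellman fixed-point equations directly (with $\bar{\mathcal{M}}$'s reward and transitions out of $\phi(\state)$ taken from the representative $u$, so $\tilde{d}(\state,u)\le 2\epsilon$), inserting $\mathbb{E}_{\mathcal{P}_u^\action}[V^*]$ so that duality is applied to $V^*$ and the sup-norm term enters only once:
\begin{align*}
|V^*(\state)-\bar{V}^*(\phi(\state))|
&\le \max_{\action\in\mathcal{A}}\Big\{|\mathcal{R}_\state^\action-\mathcal{R}_u^\action|
+ \gamma\big|\mathbb{E}_{\mathcal{P}_\state^\action}[V^*]-\mathbb{E}_{\mathcal{P}_u^\action}[V^*]\big|\Big\} + \gamma B\\
&\le \max_{\action\in\mathcal{A}}\Big\{|\mathcal{R}_\state^\action-\mathcal{R}_u^\action|
+ \tfrac{\gamma}{1-c}\,W_1(\mathcal{P}_\state^\action,\mathcal{P}_u^\action;\tilde{d})\Big\} + \gamma B\\
&\le \tfrac{1}{1-c}\max_{\action\in\mathcal{A}}\Big\{(1-c)\,|\mathcal{R}_\state^\action-\mathcal{R}_u^\action|
+ c\,W_1(\mathcal{P}_\state^\action,\mathcal{P}_u^\action;\tilde{d})\Big\} + \gamma B
\;\le\; \tfrac{\tilde{d}(\state,u)}{1-c}+\gamma B
\;\le\; \tfrac{2\epsilon}{1-c}+\gamma B,
\end{align*}
where the second inequality uses duality with the $\tfrac{1}{1-c}$-Lipschitz function $V^*$ (\cref{thm:v_lipschitz}, requiring $c\ge\gamma$), the third uses $\gamma\le c$ (your bundling step), and the fourth uses \cref{def:bisim_metric}. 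Taking the supremum over $\state$ gives $B\le\frac{2\epsilon}{1-c}+\gamma B$, hence $B\le\frac{2\epsilon}{(1-\gamma)(1-c)}$ as claimed. With this restructuring your ingredients do deliver the exact bound; as literally stated, the residual-plus-contraction plan would not go through.
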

\begin{proof}
From Theorem 5.1 in \citet{ferns2004bisimulation} we have:
\begin{equation*}
    (1-c)|V^*(\state) - V^*(\phi(\obs))| \leq g(\state,\tilde{d}) + \frac{\gamma}{1-\gamma}\max_{u\in \mathcal{S}}g(u,\tilde{d})
\end{equation*}
where $g$ is the average distance between a state and all other states in its equivalence class under the bisimulation metric $\tilde{d}$. By specifying a $\epsilon$-neighborhood for each cluster of states we can replace $g$:
\begin{equation*}
\begin{split}
    (1-c)|V^*(\state) - V^*(\phi(\obs))| &\leq 2\epsilon + \frac{\gamma}{1-\gamma}2\epsilon \\
    |V^*(\state) - V^*(\phi(\obs))| &\leq \frac{1}{1-c}(2\epsilon + \frac{\gamma}{1-\gamma}2\epsilon) \\
    &=\frac{2\epsilon}{(1-\gamma)(1-c)}.
\end{split}    
\end{equation*}
\end{proof}
\begin{figure}
    \centering
    \includegraphics[width=0.2\textwidth]{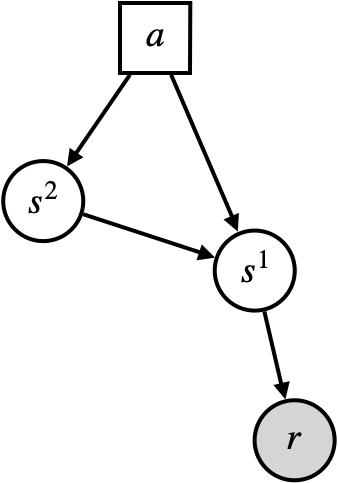}
    \caption{Causal graph of transition dynamics. Reward depends only on $\state^1$ as a causal parent, but $\state^1$ causally depends on $\state^2$, so AN(R) is the set $\{\state^1, \state^2\}$.}
    \label{fig:causal_graph2}
\end{figure}
\begin{theoremnum}[\ref{thm:general_reward}]
Given an encoder $\phi:\mathcal{S}\mapsto\mathcal{Z}$ that maps observations to a latent bisimulation metric representation where $||\phi(\obs_i) - \phi(\obs_j)||_1:=\tilde{d}(\obs_i,\obs_j)$, $\mathcal{Z}$ encodes information about all the causal ancestors of the reward $AN(R)$.
\end{theoremnum}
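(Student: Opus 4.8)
The plan is to establish the claim by chaining the definition of the bisimulation metric together with \cref{thm:bisim_to_causal} and \cref{def:causal_ancestors}. The strategy is to first show that the information content of $\phi$ is exactly the bisimulation partition, and then to identify that partition with $AN(R)$.

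First I would record the invariance property that follows directly from the defining relation $\|\phi(\obs_i)-\phi(\obs_j)\|_2 = \tilde{d}(\obs_i,\obs_j)$: two observations collapse to the same representation if and only if their bisimulation distance is zero, i.e.\ if and only if they are bisimilar, while distinct bisimilarity classes are separated by a positive distance. Hence $\phi$ is a faithful encoding of the bisimulation partition --- it discards exactly the information on which bisimilar states agree and retains everything that distinguishes behaviorally inequivalent states. Second, I would invoke \cref{thm:bisim_to_causal}, which states that the bisimulation partition coincides with the causal feature set of the observation space with respect to \emph{current and future} reward; combined with the first step, this already shows $\phi$ encodes precisely the causal feature set.

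Third --- the crux --- I would show this causal feature set equals $AN(R)$. I would unroll the recursive form of the metric in \cref{def:bisim_metric}, which adds an immediate reward-difference term to the Wasserstein distance between next-state distributions, recursively; unrolling shows $\tilde{d}$ is sensitive to the entire sequence of future rewards. Through the causal graph, the current reward depends on its parents, and via the transition dynamics those parents depend on their parents, so the recursion reaches every ancestor of $R$; this gives the inclusion that every element of $AN(R)$ lies in the causal feature set. For the reverse inclusion, any variable that is not a causal ancestor of $R$ (a distractor) lies on no directed path to any present or future reward, so intervening on it leaves every future reward distribution unchanged and hence leaves $\tilde{d}$, and therefore $\phi$, invariant. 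The two inclusions pin the encoded information to exactly $AN(R)$.

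The hard part will be the reverse inclusion and the precise matching of ``current and future reward'' to graph ancestry. Showing that no non-ancestor leaks into $\phi$ relies on the assumption, made in the surrounding discussion, that distractor variables have no causal path to reward, so that interventions on them do not propagate to any reward term in the unrolled recursion. The forward inclusion likewise needs a mild faithfulness-type condition ensuring each ancestor has a non-degenerate effect on some reachable future reward, so that the recursion genuinely separates states differing in that ancestor rather than collapsing it. Making these two arguments line up exactly --- rather than yielding a strict subset --- is the delicate step, and is precisely where \cref{thm:bisim_to_causal} does the real work.
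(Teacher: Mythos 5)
Your proposal is correct, and its decisive step is the same argument the paper uses. The paper's proof works in a factorized MDP whose one-step dynamics are described by a causal graph, and argues two cases directly from the metric of \cref{def:bisim_metric}: an intervention on a factor outside $AN(R)$ changes neither the immediate-reward term nor (recursively) the Wasserstein term, so $\tilde{d}=0$ and the two observations receive identical encodings; an intervention on a factor inside $AN(R)$ changes current or future reward, so $\tilde{d}>0$ and the encodings differ. Your ``forward'' and ``reverse'' inclusions in step three are exactly these two cases, which means the real work in your argument is not done by \cref{thm:bisim_to_causal} at all --- once you have shown that interventions on non-ancestors leave $\tilde{d}$ invariant and interventions on ancestors perturb it, the identification with $AN(R)$ is already complete, and the detour through the causal-feature-set lemma is redundant scaffolding (the paper cites that result only as surrounding motivation, not inside this proof). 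One place where you are more careful than the paper: its second case only asserts the distance ``can be'' positive, which, as you note, silently requires a faithfulness-type assumption that each ancestor has a non-degenerate effect on some reachable reward; making that condition explicit is an improvement over the paper's write-up rather than a gap in yours.
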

\begin{proof}
We assume a MDP with a state space $\mathcal{S}:=\{\mathcal{S}^1,...,\mathcal{S}^K\}$ that can be factorized into $K$ variables with 1-step causal transition dynamics described by a causal graph $\mathcal{G}$ (example in \cref{fig:causal_graph2}). 
We break the proof up into two parts: 1) show that if a factor $\mathcal{S}^i\notin AN(R)$ changes, the bisimulation distance between the original state $\state$ and the new state $\state'$ is 0. and 2) show that if a factor $\mathcal{S}^j\in AN(R)$ changes, the bisimulation distance can be $>0$. 

1) If $\mathcal{S}^i\notin AN(R)$, an intervention on that factor does not affect current or future reward.
\begin{equation*}
\begin{split}
  \tilde{d}(\state_i,\state_j)&=\max_{a\in A} (1-c)|r_{\state_i}^\action-r_{\state_j}^\action| + cW(\tilde{d})(\mathcal{P}_{\state_i}^\action,\mathcal{P}_{\state_j}^\action)   \\
  &=\max_{a\in A} cW(\tilde{d})(\mathcal{P}_{\state_i}^\action,\mathcal{P}_{\state_j}^\action) \quad \text{$\state_i$ and $\state_j$ have the same reward.} \\
\end{split}
\end{equation*}
If $\mathcal{S}^i$ does not affect future reward, then states $\state_i$ and $\state_j$ will have the same future reward conditioned on all future actions. This gives us
$$\tilde{d}(\state,\state')=0.$$
2) If there is an intervention on $S^j\in AN(R)$ then current and/or future reward can change. If current reward changes, then we already have $\max_{\action\in\mathcal{A}} (1-c)|r_{\state_i}^\action-r_{\state_j}^\action|>0$, giving us $\tilde{d}(\state_i,\state_j)>0$. If only future reward changes, then those future states will have nonzero  bisimilarity, and $\max_{\action\in\mathcal{A}}  W(\tilde{d})(P_{\state_i}^\action,P_{\state_j}^\action)>0$, giving us $\tilde{d}(\state_i,\state_j)>0$.
\end{proof}

\section{Definition of State}
\label{app:states}

Since we are concerned primarily with learning from image observations, we could explicitly distinguish the image observation space $\mathcal{O}$ from an unknown state space $\mathcal{S}$. However, since we are not tackling the general POMDP problem, we consider the \textit{Block MDP}~\citep{du2019pcid}, which assumes the state space is latent, and that  we are instead given access to an observation space $\mathcal{O}$ and rendering function $q:\mathcal{S}\mapsto \mathcal{O}$. The crucial assumption that distinguishes the Block MDP from partially observable MDPs is the following:
\begin{assumption}[Block structure~\citep{du2019pcid}]\label{assmpt:block}
Each observation $\bo$ uniquely determines its generating state $\state$. That is, the observation space $\mathcal{O}$ can be partitioned into disjoint blocks $\mathcal{O}_s$, each containing the support of the conditional distribution $q(\bo|\bs)$.
\end{assumption}
This assumption gives us the Markov property in the observation space $\bo\in\mathcal{O}$. As an example, one can think of the proprioceptive state consisting of positions and velocities of actuators as the underlying state, and stacked pixel observations from a specific camera angle as a particular rendering function and corresponding observation space.

\section{Additional DMC Results}
\label{app:additional_results}

In \cref{fig:nobg_full_results} we show performance on the default setting on 9 different environments from DMC. Figures \ref{fig:idealgas_full_results} and \ref{fig:kinetics_full_results} give performance on the simple distractors and natural video settings for all 9 environments.

\begin{figure}
    \centering
    \includegraphics[width=0.32\linewidth]{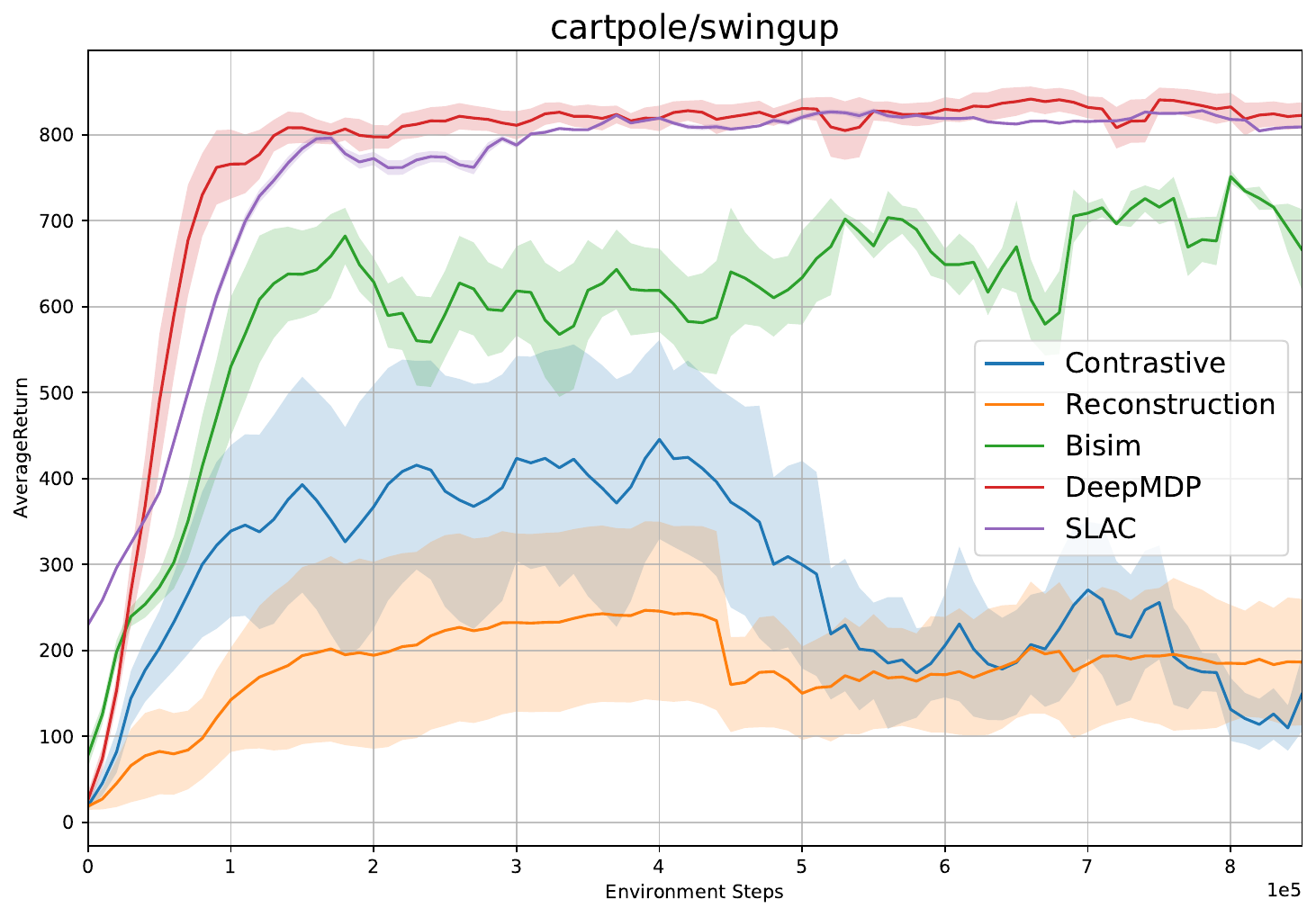}
    \includegraphics[width=0.32\linewidth]{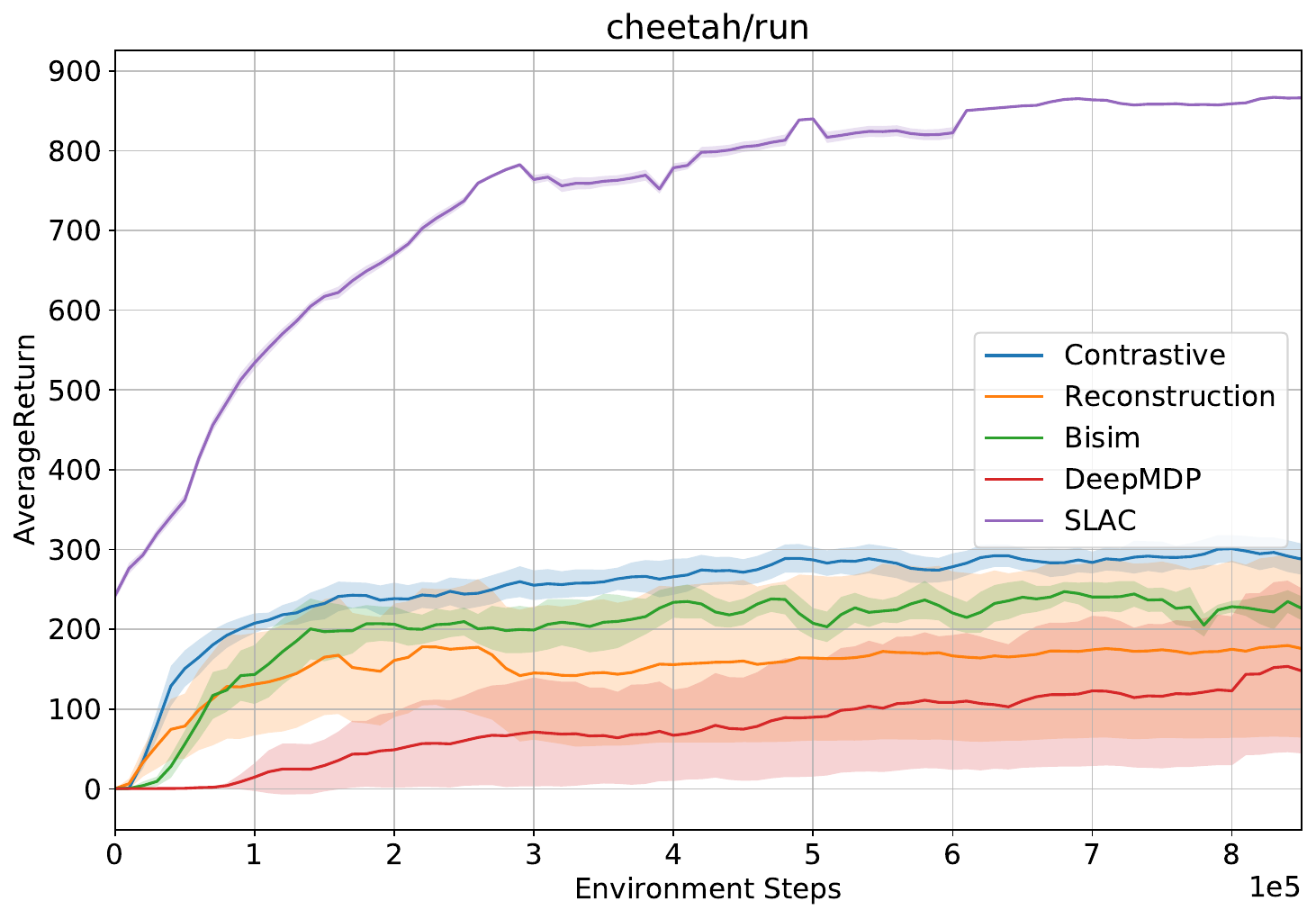}
    \includegraphics[width=0.32\linewidth]{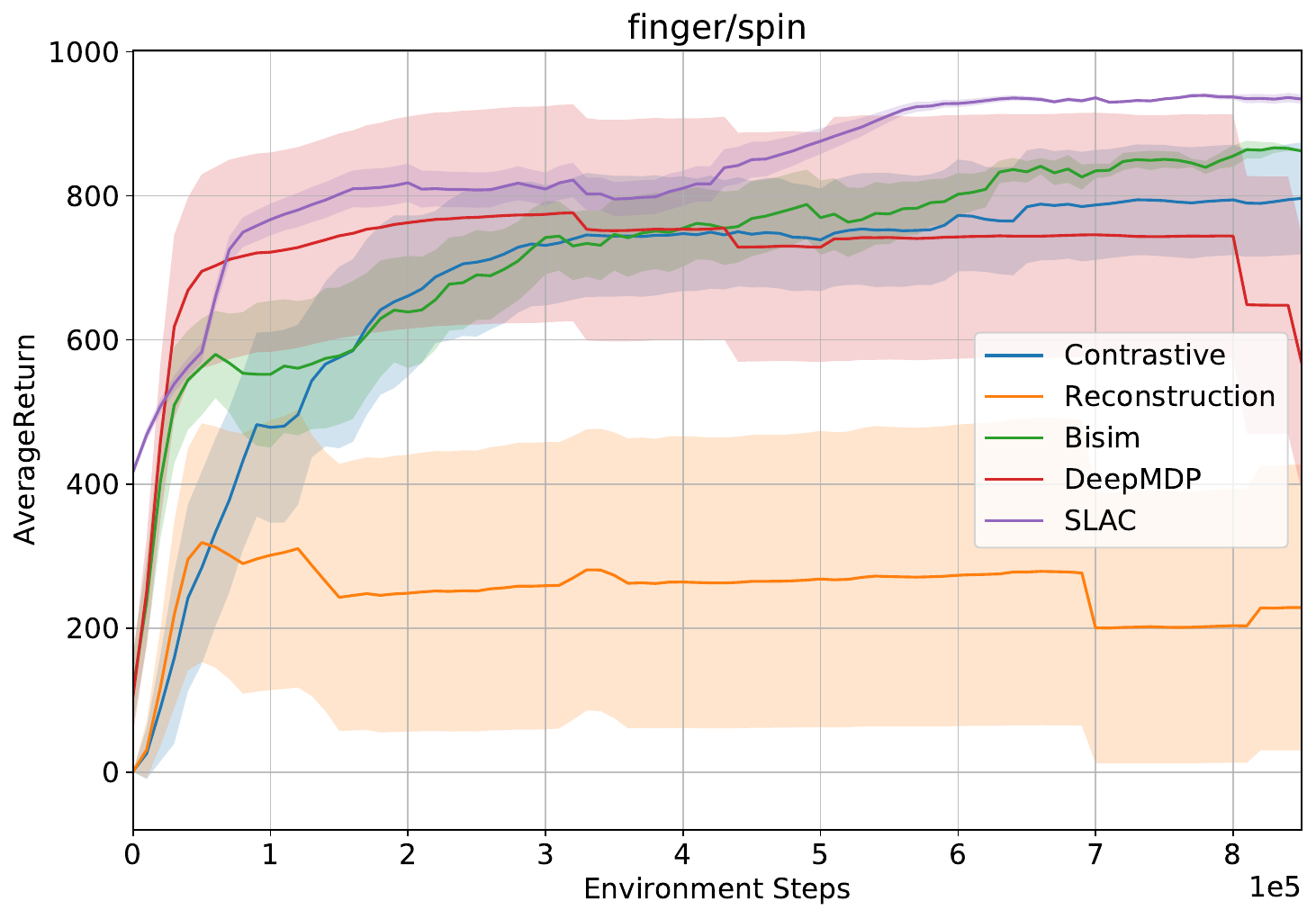}
    \includegraphics[width=0.32\linewidth]{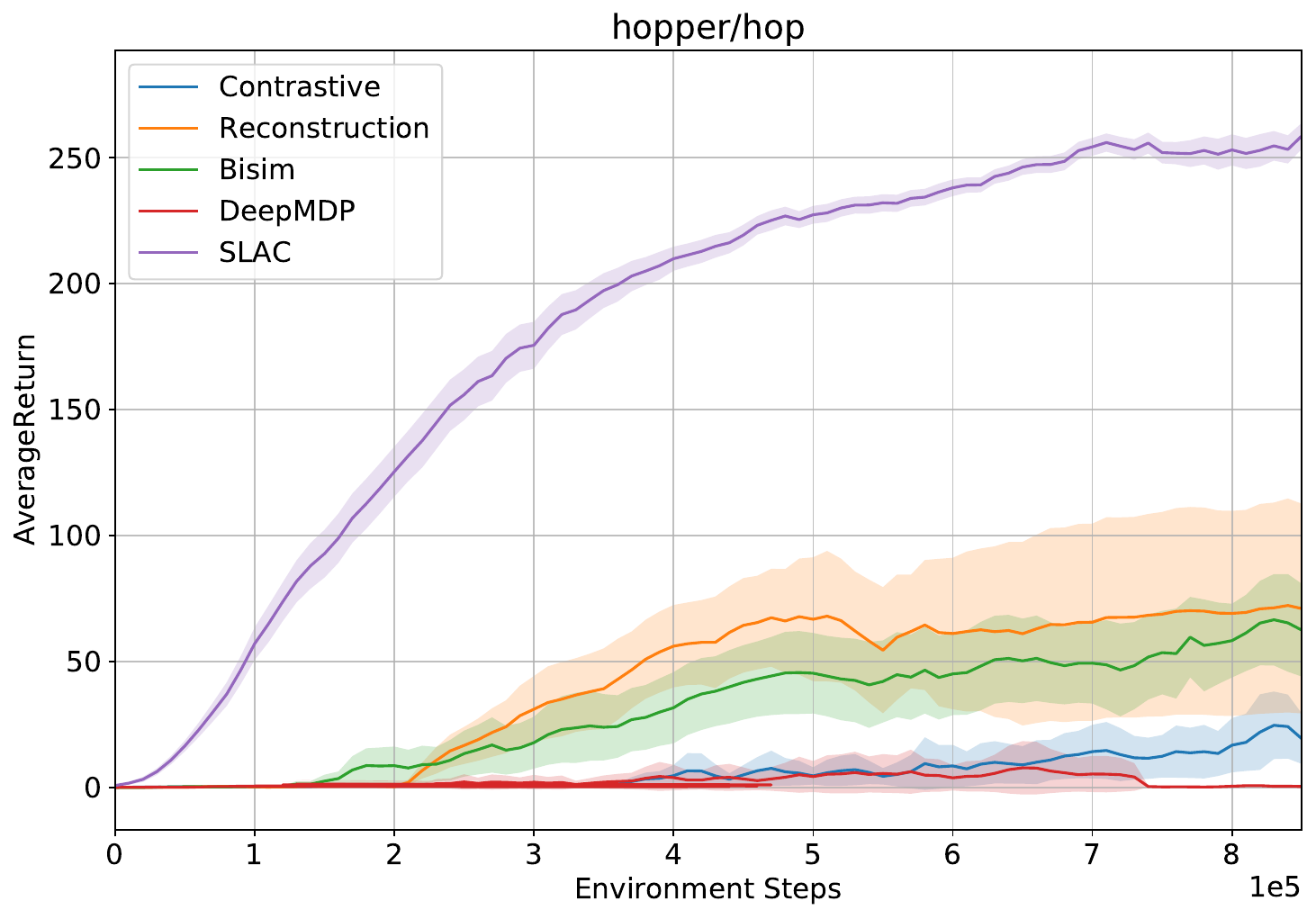}
    \includegraphics[width=0.32\linewidth]{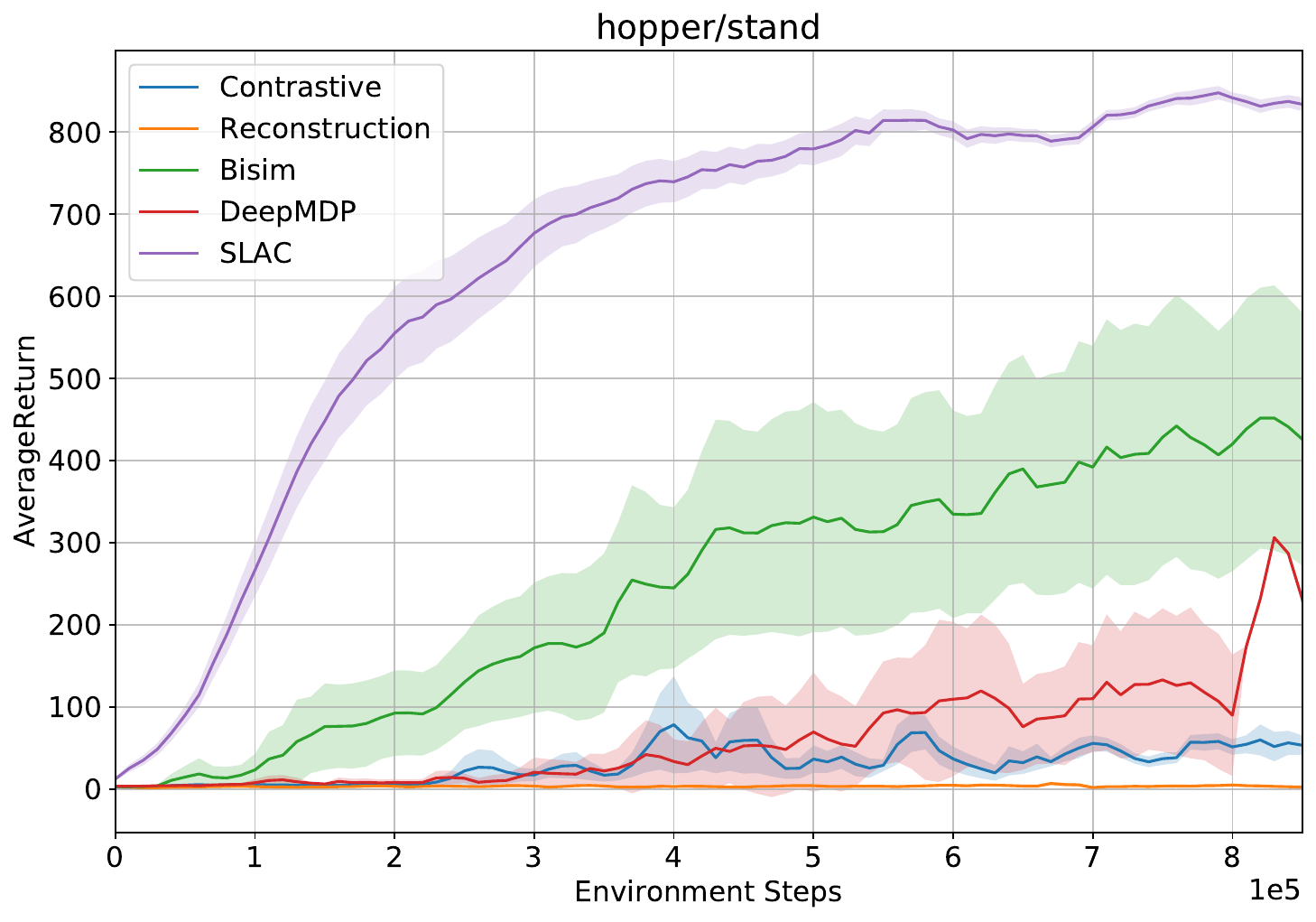}
    \includegraphics[width=0.32\linewidth]{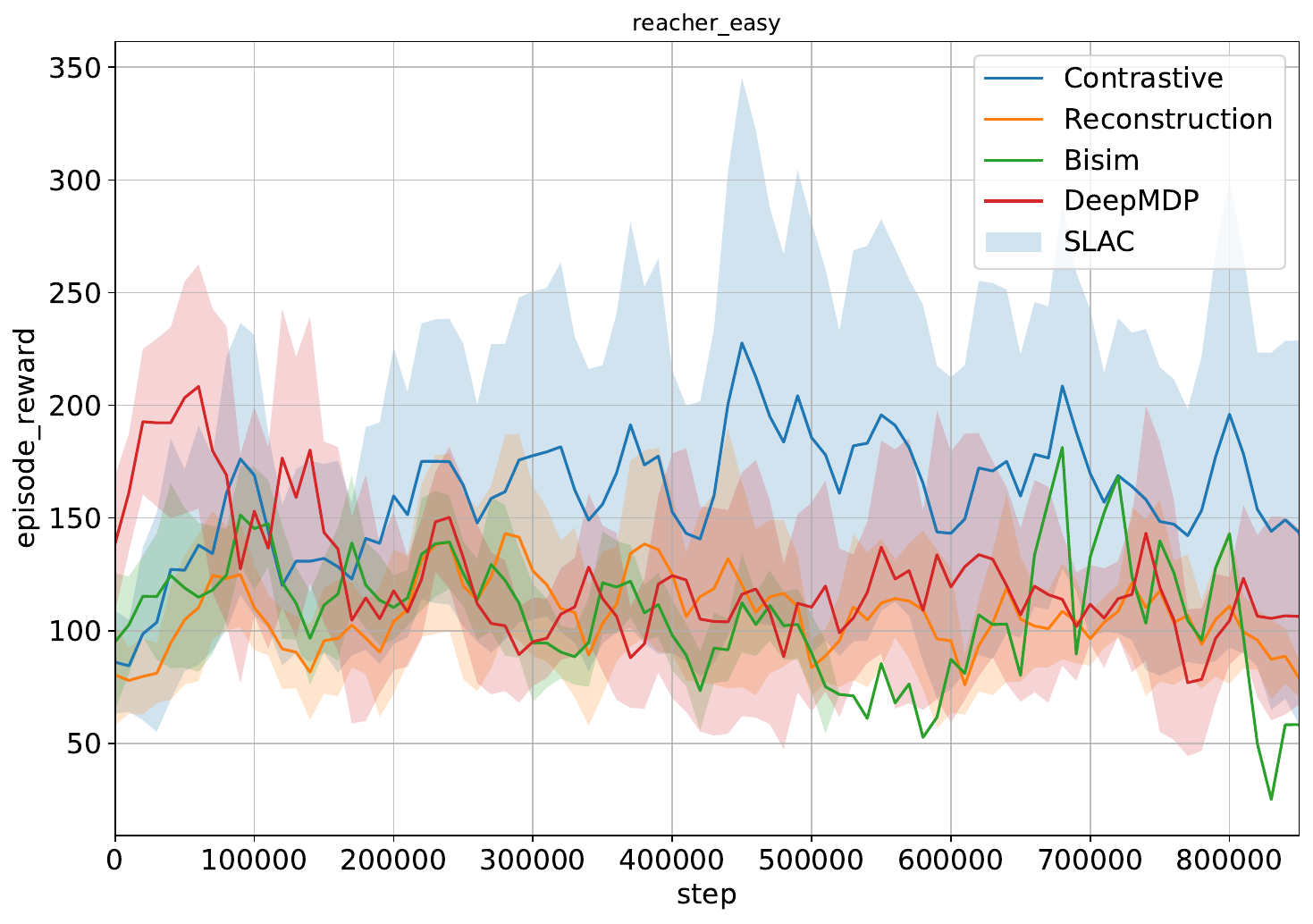}
    \includegraphics[width=0.32\linewidth]{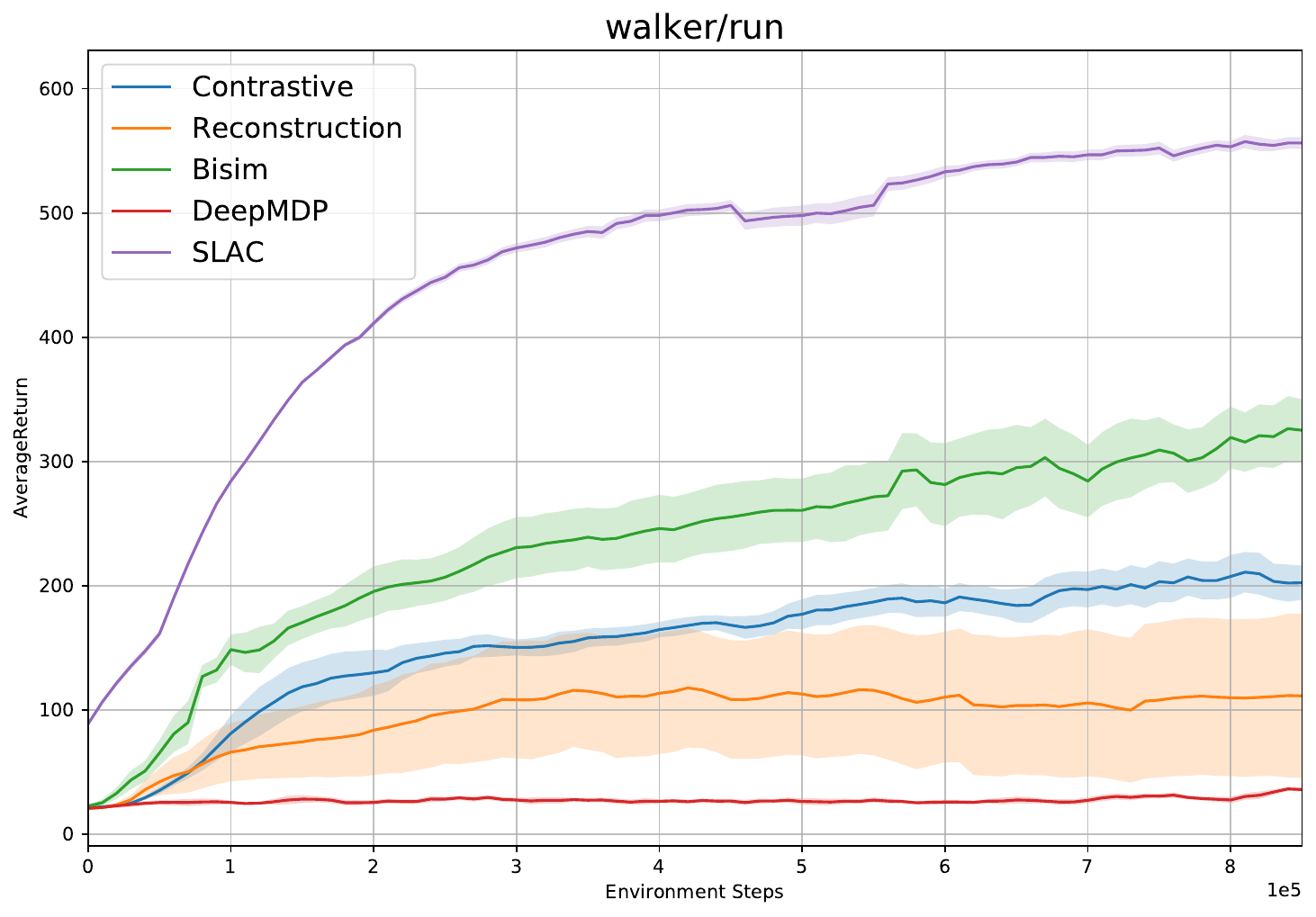}
    \includegraphics[width=0.32\linewidth]{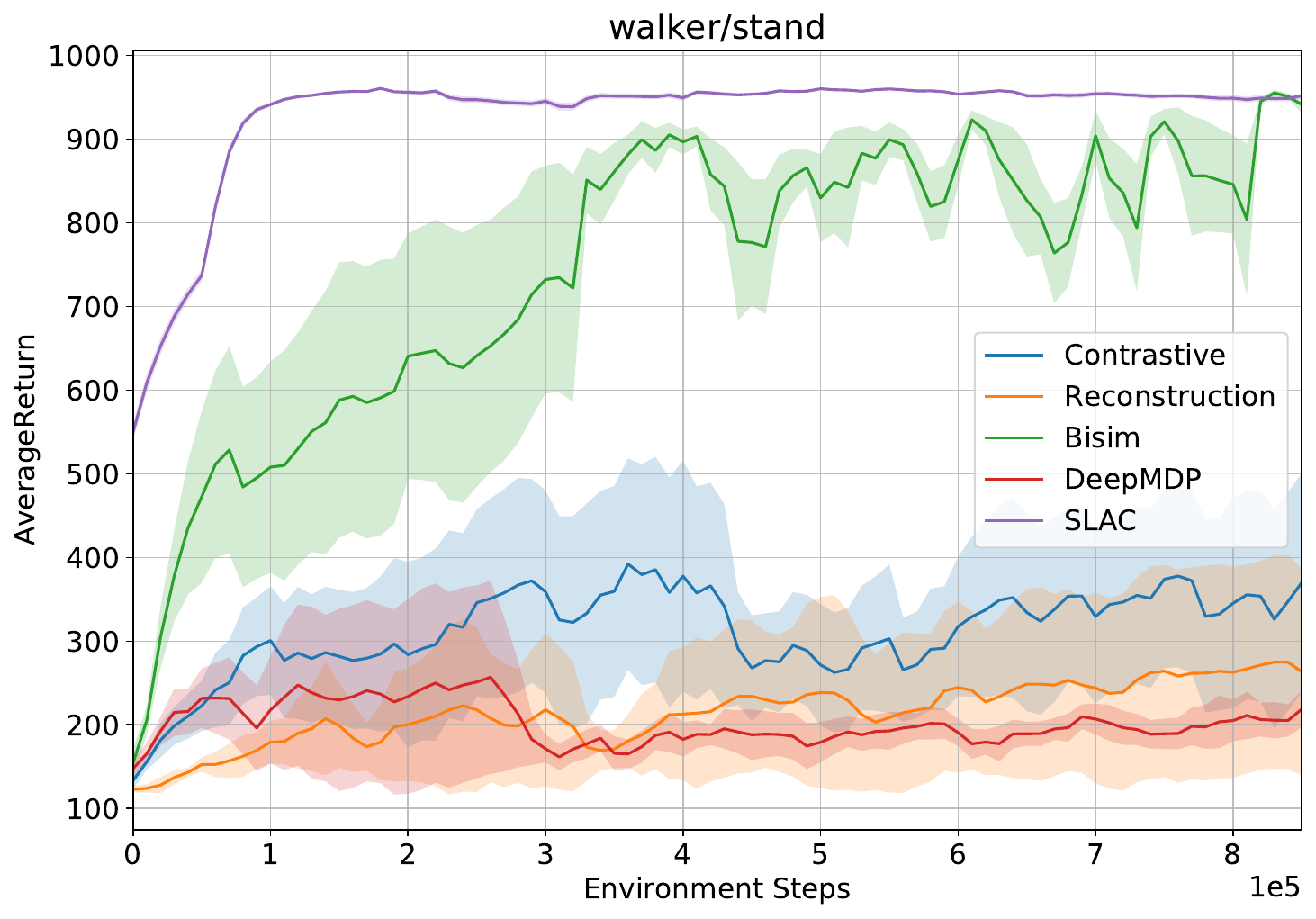}
    \includegraphics[width=0.32\linewidth]{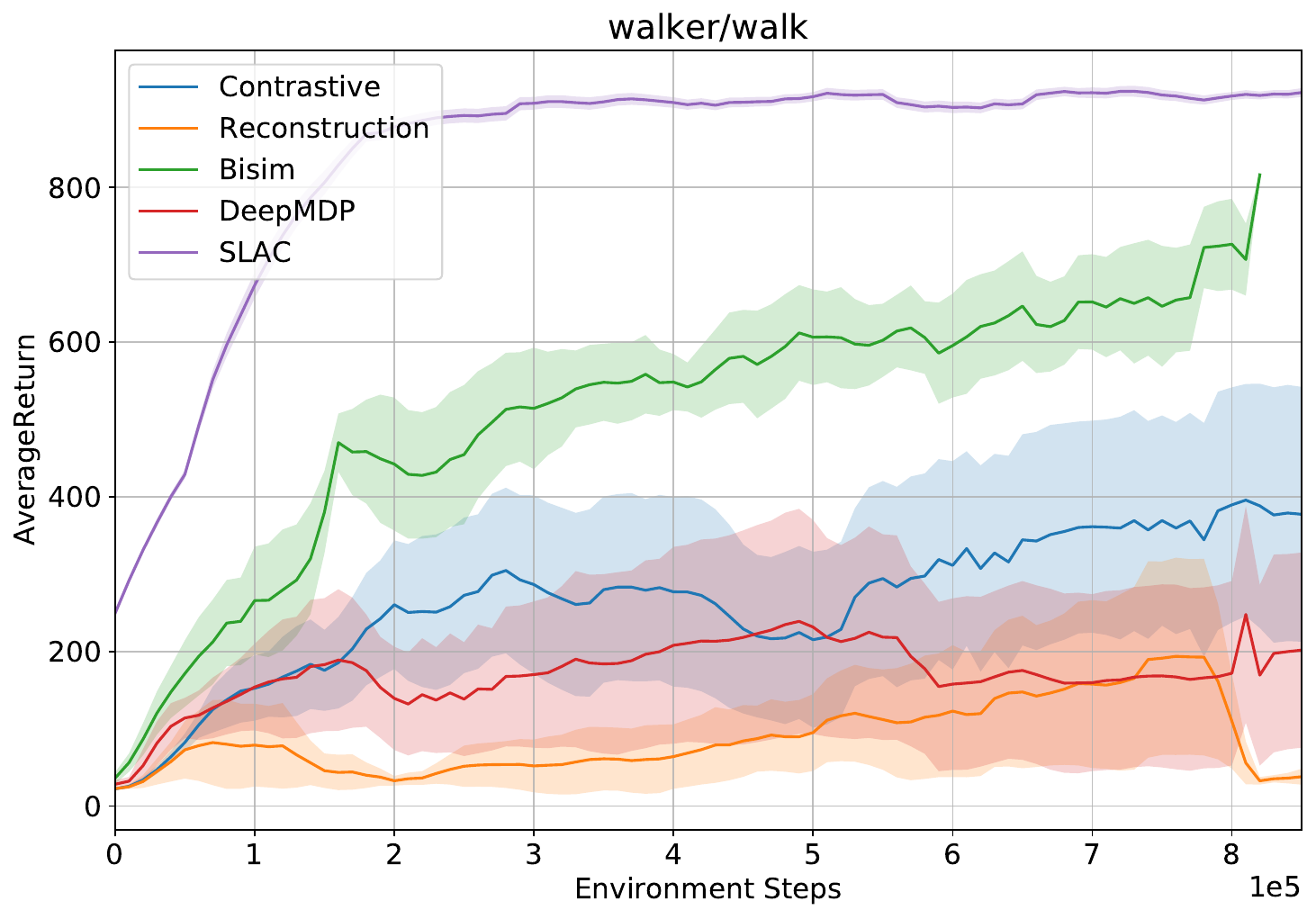}
    \caption{Results for DBC in the default setting, in comparison to baselines with reconstruction loss, contrastive loss, and SLAC on 10 seeds with 1 standard error shaded.}
    \label{fig:nobg_full_results}
\end{figure}

\begin{figure}
    \centering
    \includegraphics[width=0.32\linewidth]{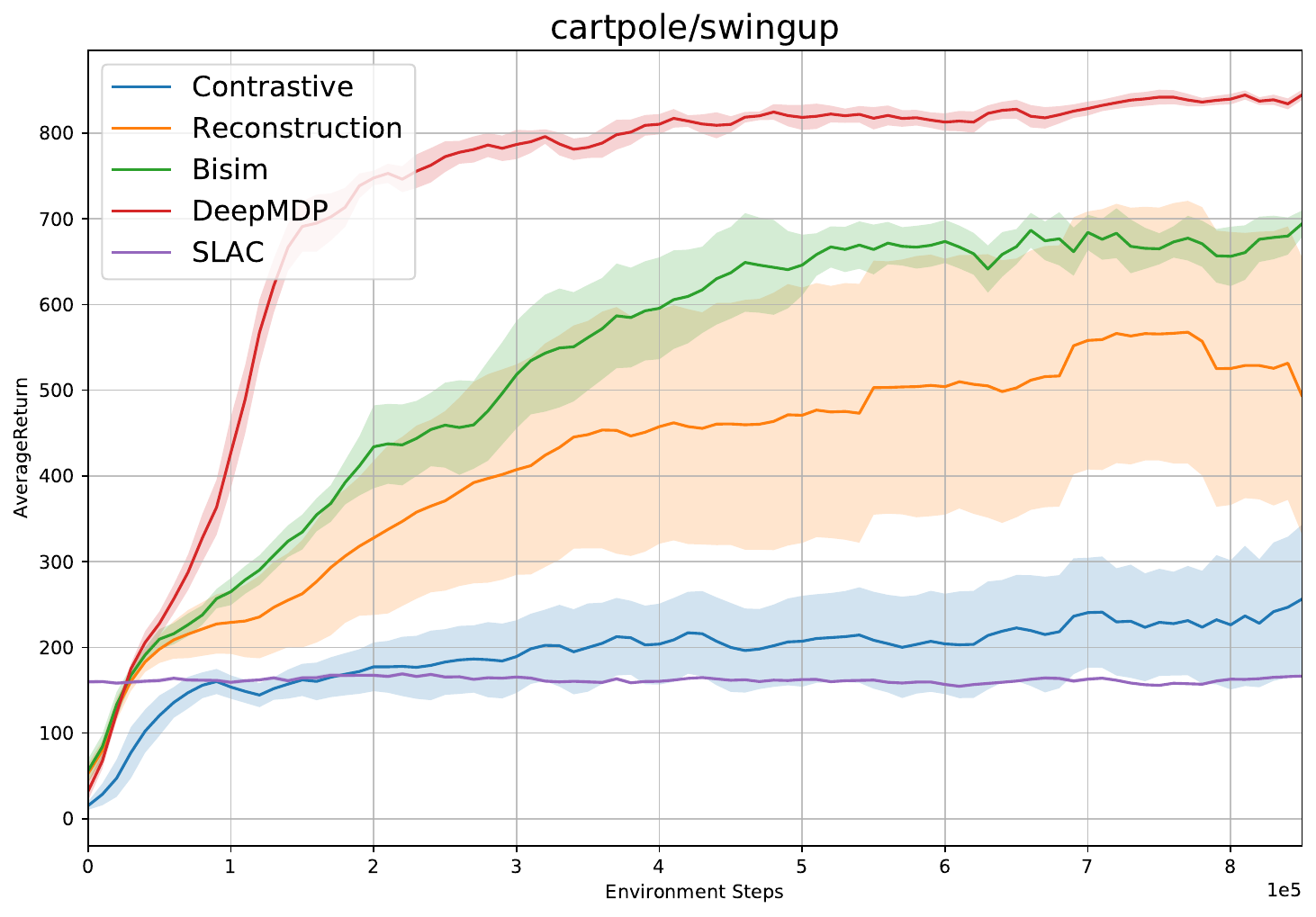}
    \includegraphics[width=0.32\linewidth]{fig/idealgas/bisim_cheetah_run_idealgas_nolegend.pdf}
    \includegraphics[width=0.32\linewidth]{fig/idealgas/bisim_finger_spin_idealgas_nolegend.pdf}
    \includegraphics[width=0.32\linewidth]{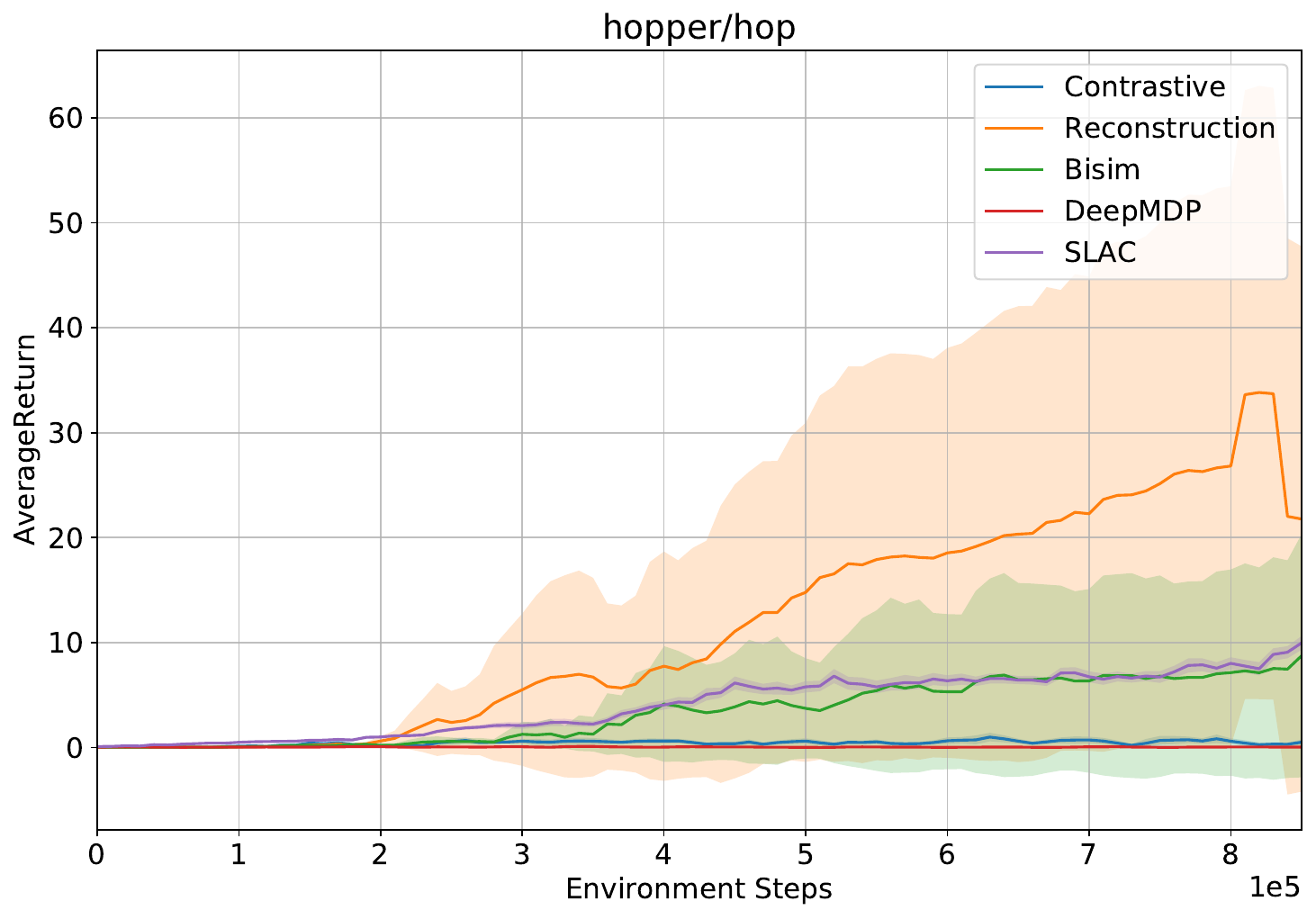}
    \includegraphics[width=0.32\linewidth]{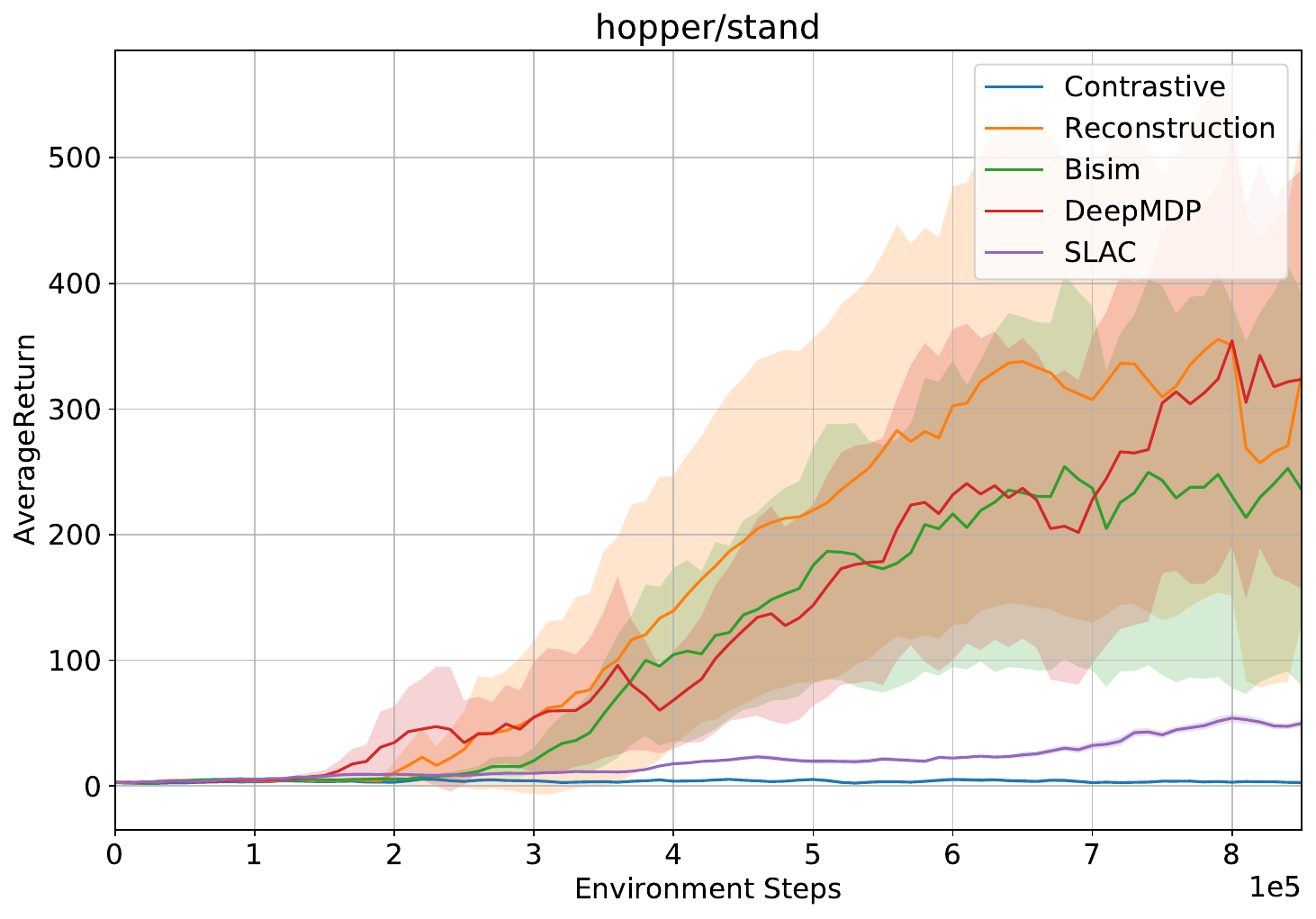}
    \includegraphics[width=0.32\linewidth]{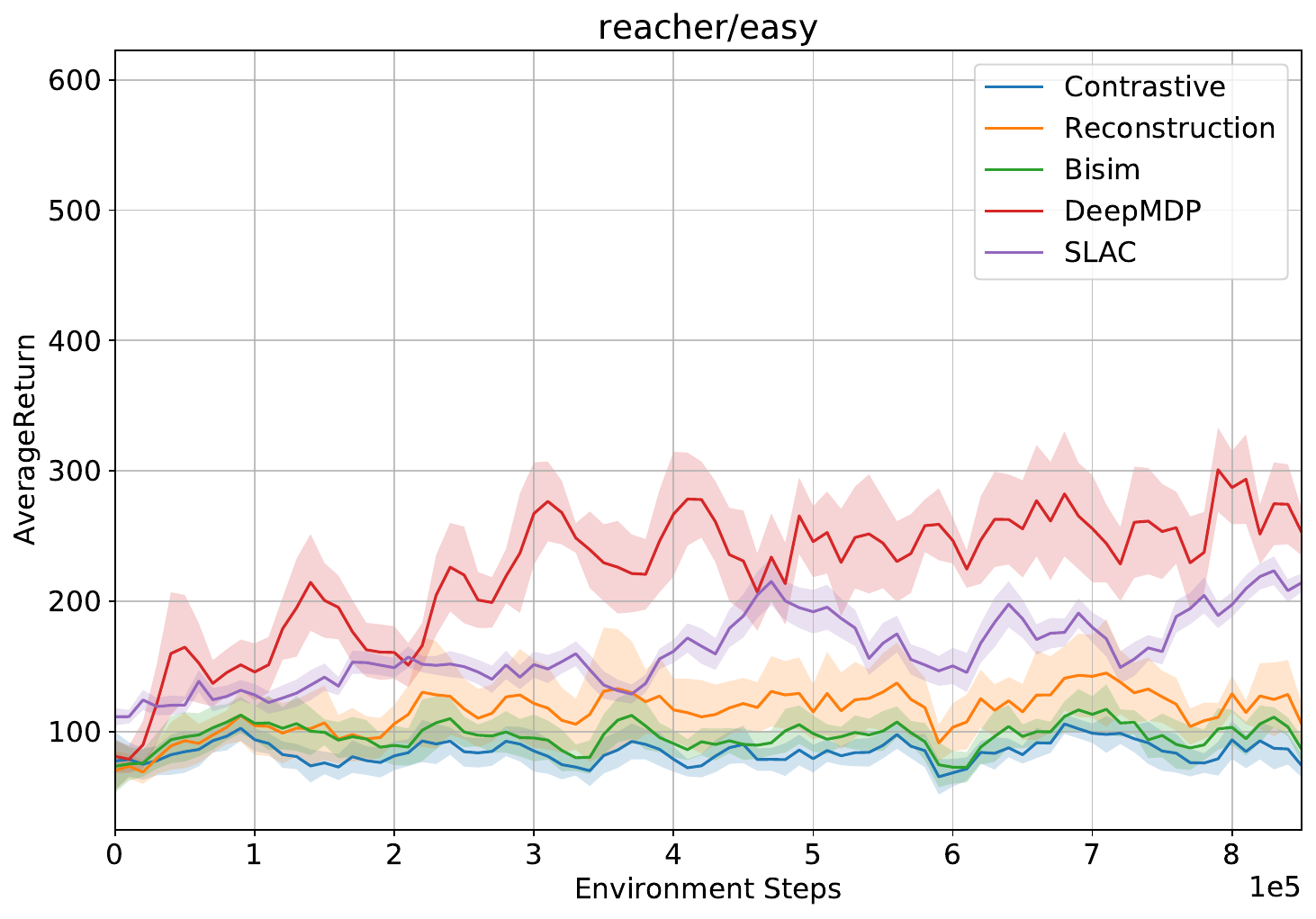}
    \includegraphics[width=0.32\linewidth]{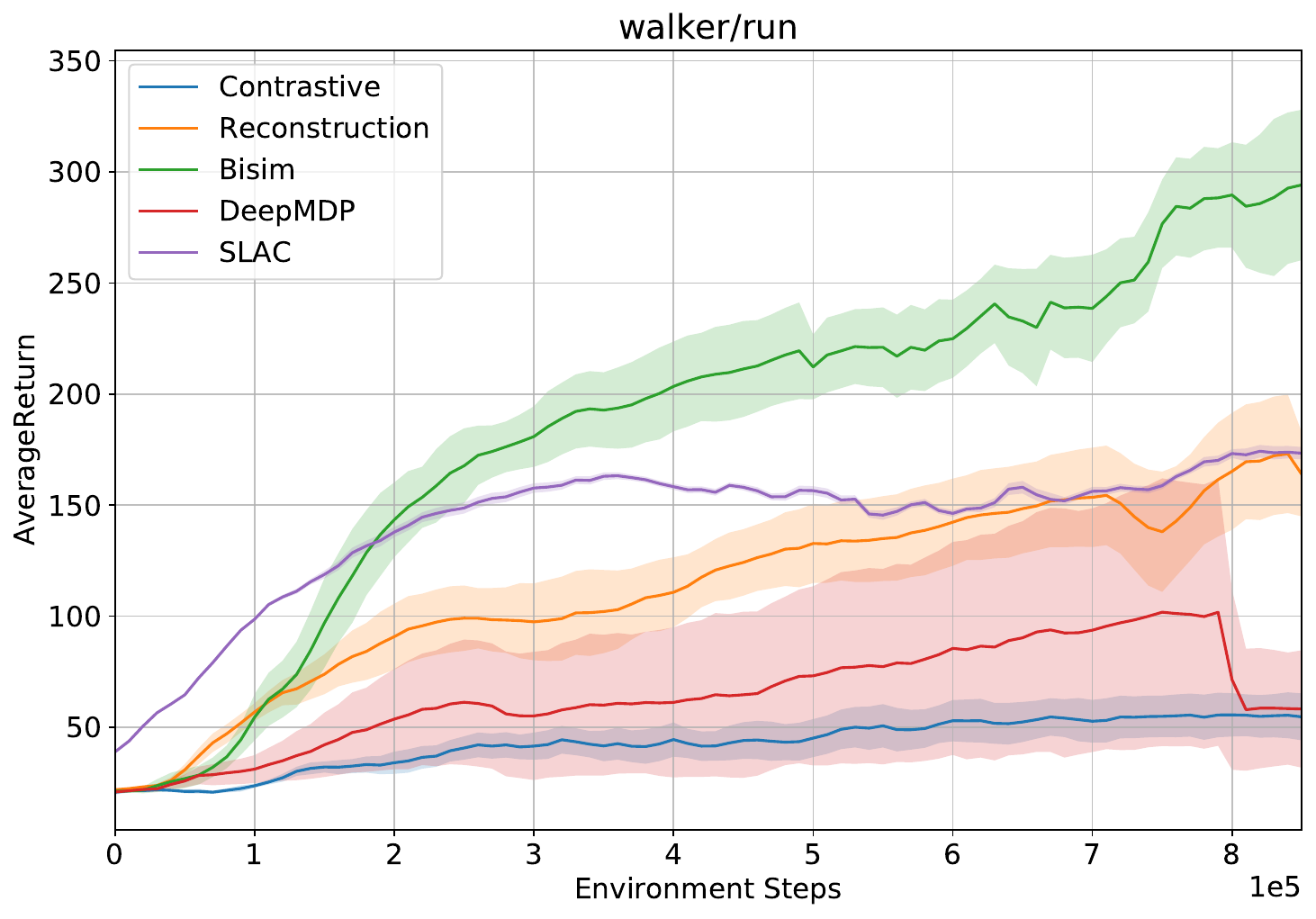}
    \includegraphics[width=0.32\linewidth]{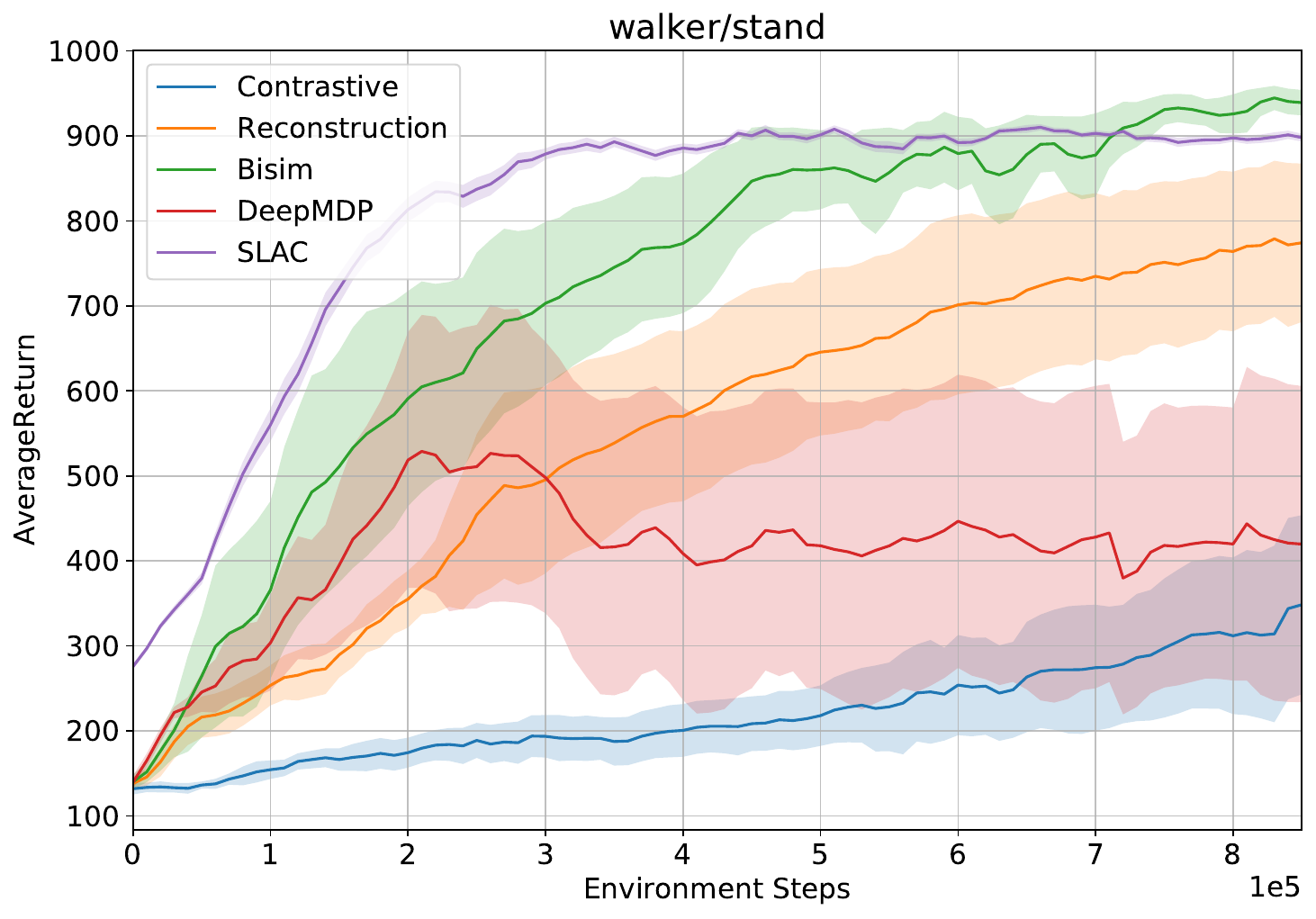}
    \includegraphics[width=0.32\linewidth]{fig/idealgas/bisim_walker_walk_idealgas_nolegend.pdf}
    \caption{Results for DBC in the simple distractors setting, in comparison to baselines with reconstruction loss, contrastive loss, DeepMDP, and SLAC on 10 seeds with 1 standard error shaded.}
    \label{fig:idealgas_full_results}
\end{figure}

\begin{figure}
    \centering
    \includegraphics[width=0.32\linewidth]{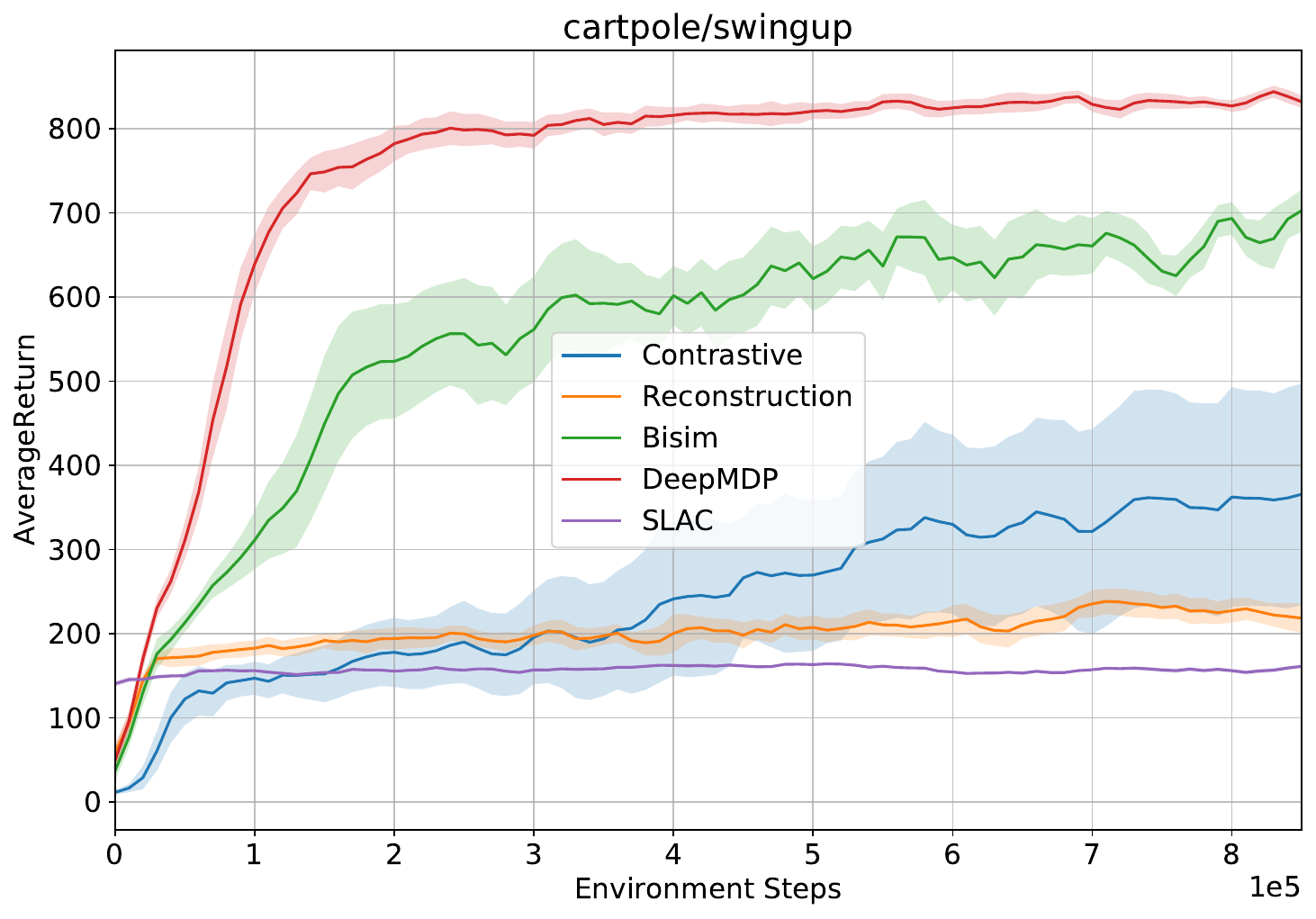}
    \includegraphics[width=0.32\linewidth]{fig/kinetics/bisim_cheetah_run_kinetics_nolegend.pdf}
    \includegraphics[width=0.32\linewidth]{fig/kinetics/bisim_finger_spin_kinetics_nolegend.pdf}
    \includegraphics[width=0.32\linewidth]{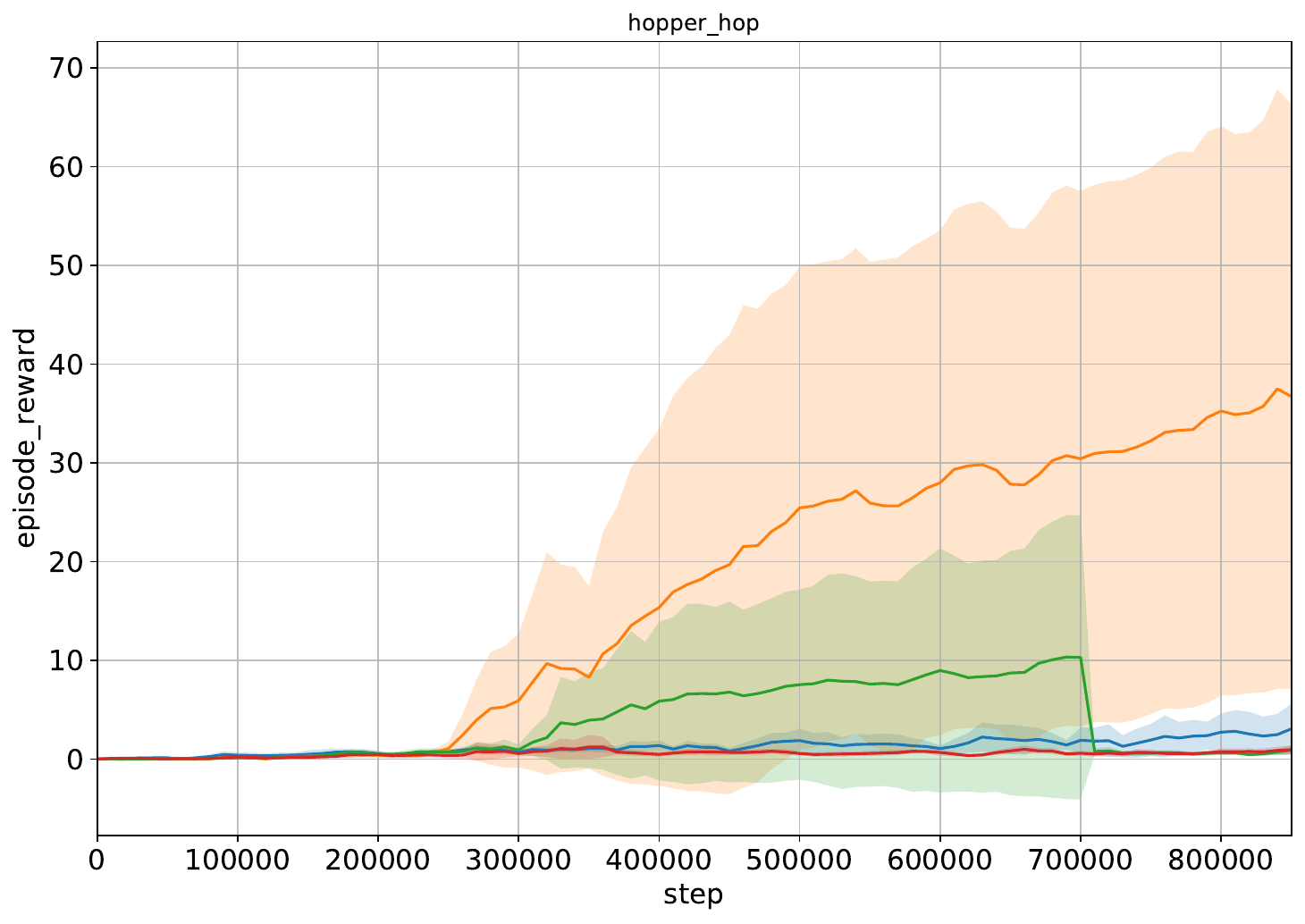}
    \includegraphics[width=0.32\linewidth]{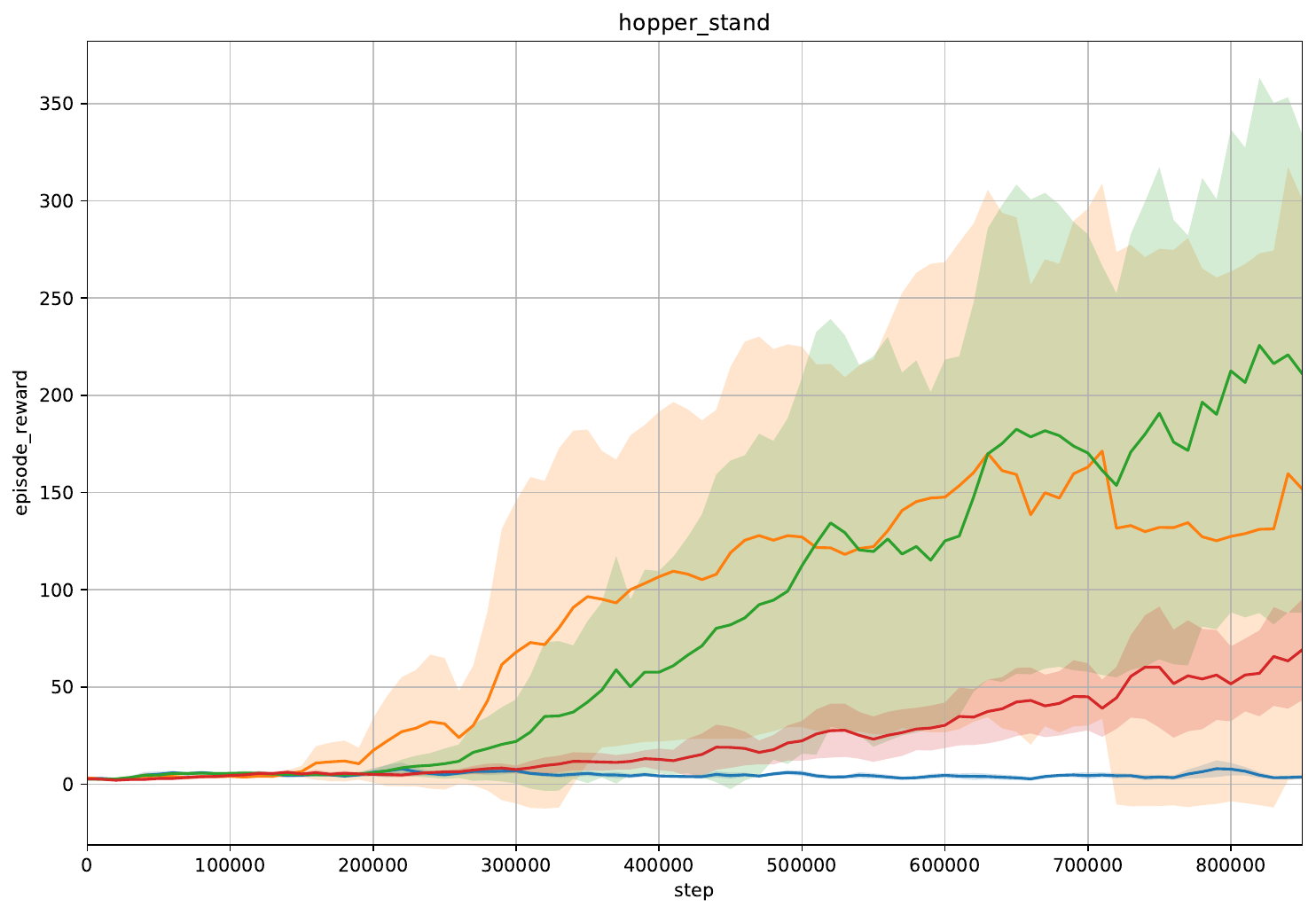}
    \includegraphics[width=0.32\linewidth]{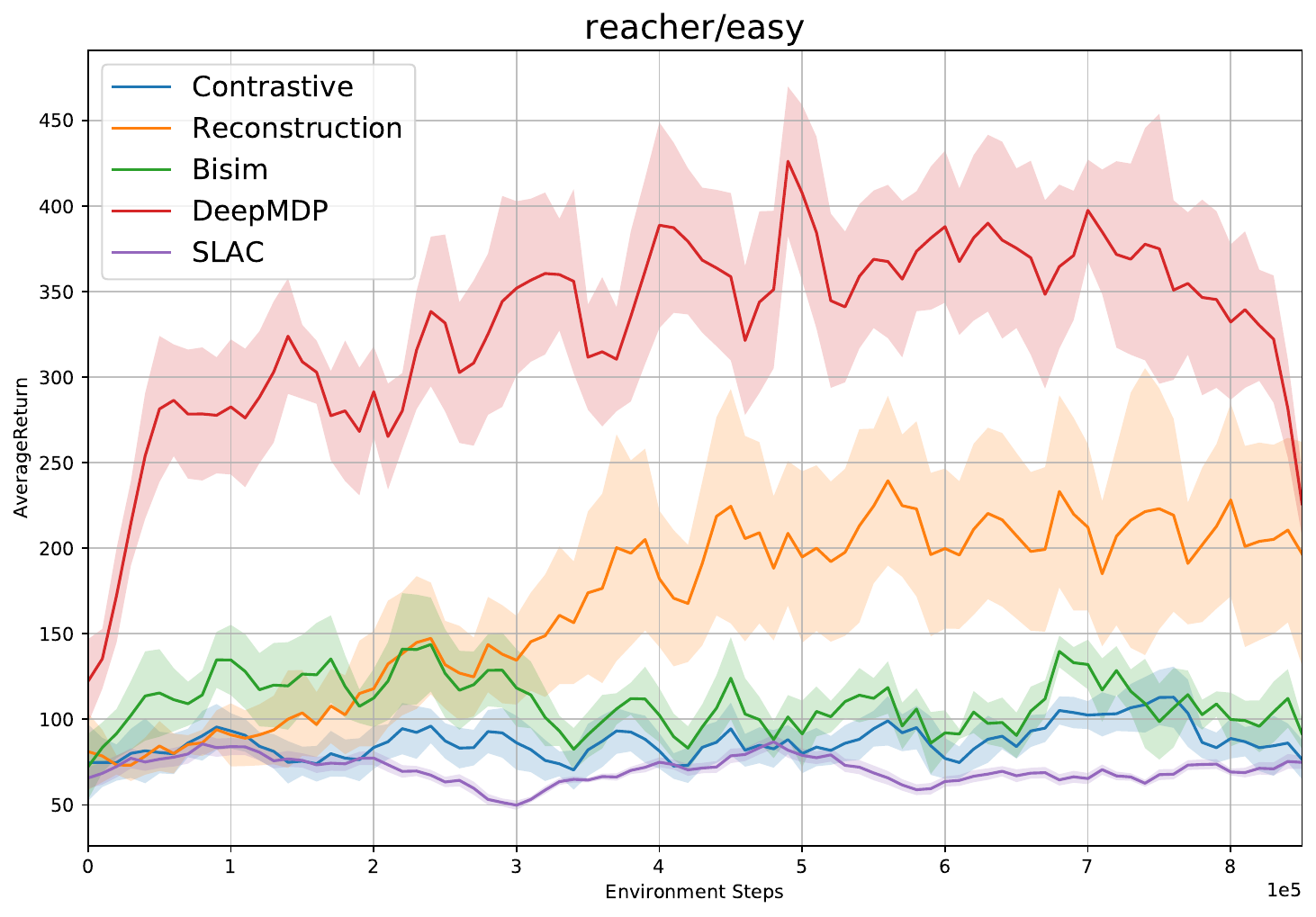}
    \includegraphics[width=0.32\linewidth]{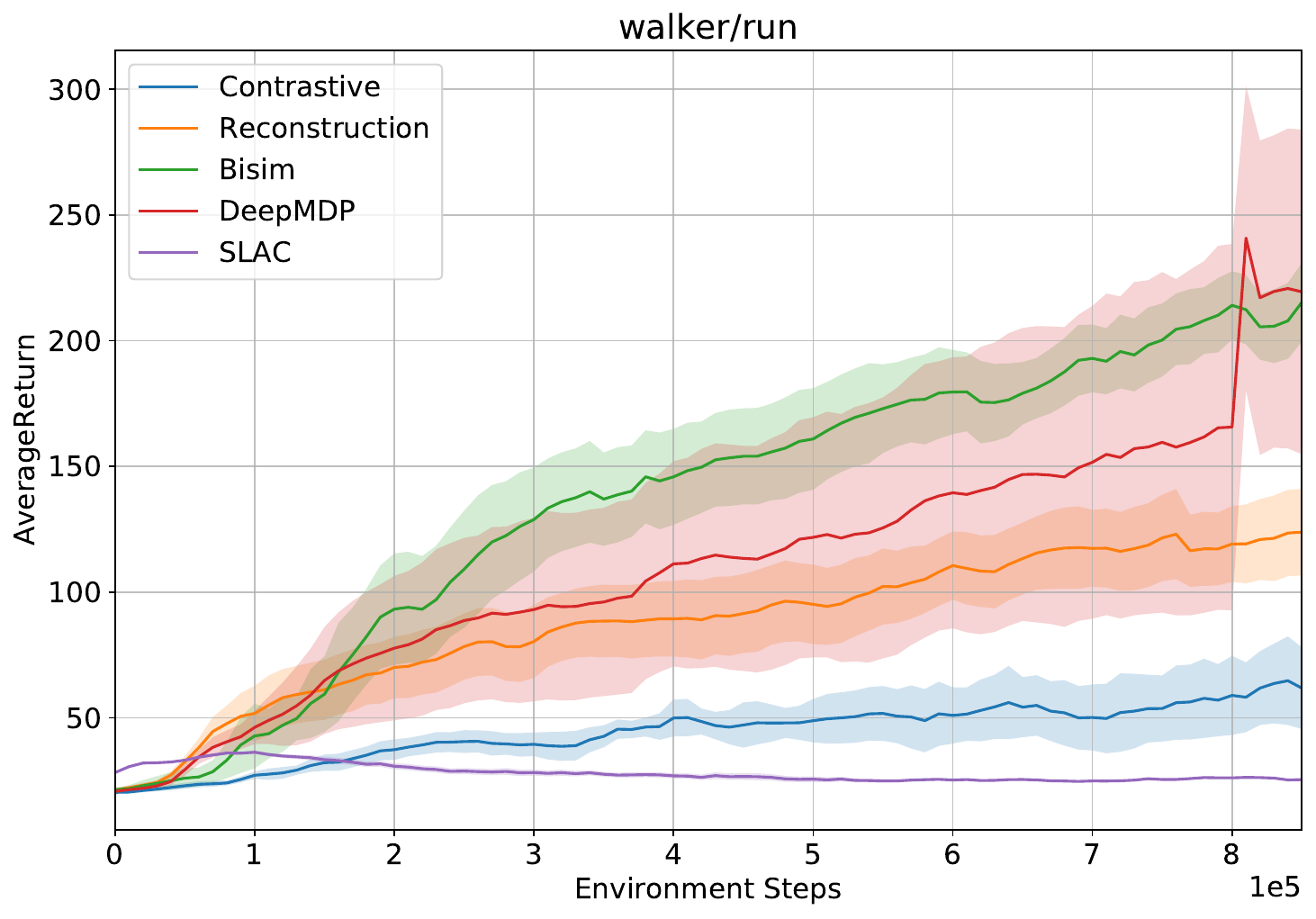}
    \includegraphics[width=0.32\linewidth]{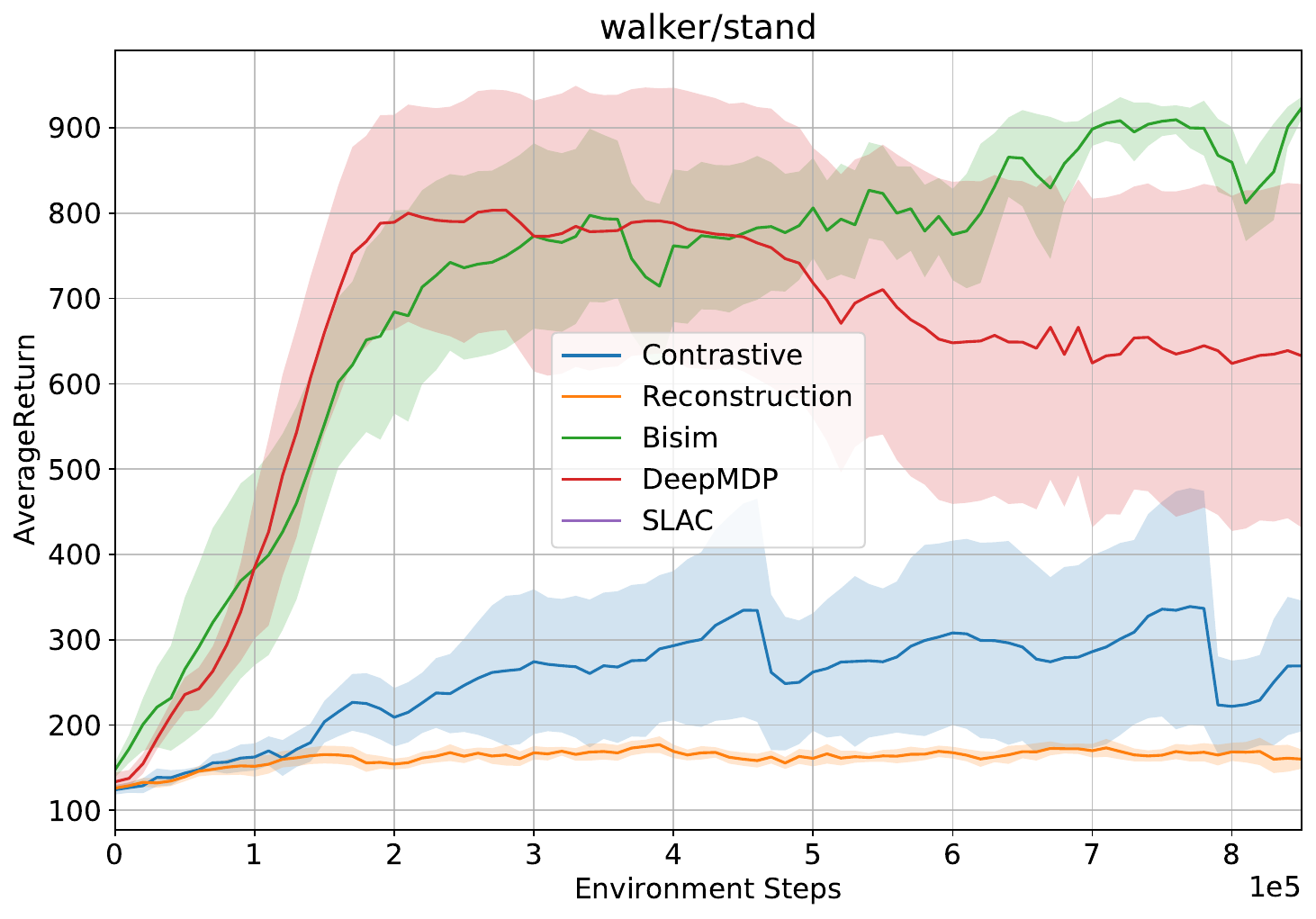}
    \includegraphics[width=0.32\linewidth]{fig/kinetics/bisim_walker_walk_kinetics_nolegend.pdf}
    \caption{Results for our bisimulation metric method in the natural video setting, in comparison to baselines with reconstruction loss, contrastive loss, DeepMDP, and SLAC on 10 seeds with 1 standard error shaded.}
    \label{fig:kinetics_full_results}
\end{figure}

\newpage
\section{Additional Visualizations}
\label{app:additional_vizualizations}

In addition to \cref{fig:embedding_viz}, we also took 10 nearby points in the t-SNE plot and average the observations, shown on the far left of \cref{fig:embedding_viz2}. Note the robot agent is quite crisp, which means neighboring points encode the agent in similar positions, but the backgrounds are very different, and so are blurry when averaged.

\begin{figure}[h]
    \centering
    \includegraphics[width=0.85\linewidth,trim=0 0 630 0, clip]{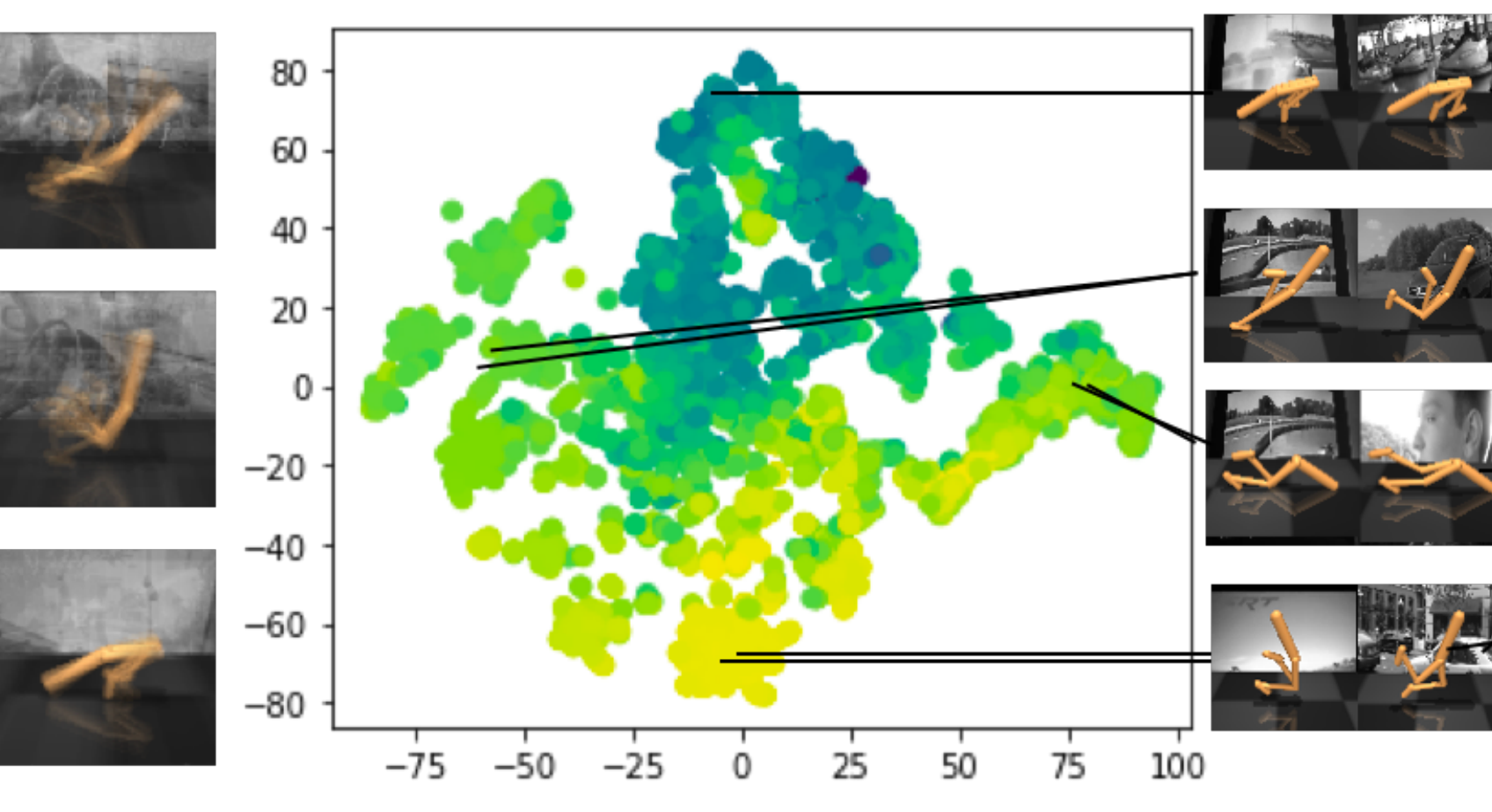}
    \caption{\small t-SNE of latent spaces learned with a bisimulation metric after training has completed, color-coded with predicted state values (higher value yellow, lower value purple). Neighboring points (right) in the embedding space learned with a bisimulation metric have similar encodings (middle). When we sample from the same latent point, and average the images, we see the robot configuration is crisp, meaning neighboring points encode the agent in similar positions, but the backgrounds are very different, and so are blurry when averaged.}
    \label{fig:embedding_viz2}  
\end{figure}

\newpage
\section{Implementation Details}
\label{app:implementation}
We use the same encoder architecture as in \citet{yarats2019improving}, which is an almost identical encoder architecture as in~\citet{deepmindcontrolsuite2018}, with two more convolutional layers to the convnet trunk. The encoder has kernels of size $3 \times 3$ with $32$ channels  for all the convolutional layers and set stride to $1$ everywhere, except of the first convolutional layer, which has stride $2$, and interpolate with \texttt{ReLU} activations. Finally, we add \texttt{tanh} nonlinearity to the $50$ dimensional output of the fully-connected layer.

For the reconstruction method, the decoder consists of a fully-connected layer followed by four deconvolutional layers. We use \texttt{ReLU} activations after each layer, except the final deconvolutional layer that produces pixels representation. Each deconvolutional layer has kernels of size $3 \times 3$ with $32$ channels and stride $1$, except of the last layer, where stride is $2$.

The dynamics and reward models are both MLPs with two hidden layers with 200 neurons each and \texttt{ReLU} activations.

Soft Actor Critic (SAC)~\citep{sac} is an off-policy actor-critic method that uses the maximum entropy framework for soft policy iteration. At each iteration, SAC performs soft policy evaluation and improvement steps. The policy evaluation step fits a parametric soft Q-function $Q(\state_t, \action_t)$  using transitions sampled from the replay buffer $\mathcal{D}$ by minimizing the soft Bellman residual,
\begin{equation}
    J(Q) = \mathbb{E}_{(\state_t, \state_t, r_t, \state_{t+1}) \sim \mathcal{D}} \bigg[ \bigg(Q(\state_t, \action_t) - r_t - \gamma \Bar{V}(\state_{t+1})\bigg)^2  \bigg].
\end{equation}
The target value function $\Bar{V}$ is approximated via a Monte-Carlo estimate of the following expectation,
\begin{equation}
    \Bar{V}(\state_{t+1}) = \mathbb{E}_{\action_{t+1} \sim \pi} \big[\Bar{Q}(\state_{t+1}, \action_{t+1}) - \alpha  \log \pi(\action_{t+1}|\state_{t+1}) \big],
\end{equation}
where $\bar{Q}$ is the target soft Q-function parameterized by a weight vector obtained from an exponentially moving average of the Q-function weights to stabilize training. The  policy improvement step then attempts to project a parametric policy $\pi(\action_t|\state_t)$  by minimizing KL divergence between the  policy and a Boltzmann distribution induced by the Q-function, producing the following objective,
\begin{equation}
    J(\pi)= \mathbb{E}_{\state_t \sim \mathcal{D}} \bigg[ \mathbb{E}_{\action_t \sim \pi} [\alpha \log (\pi(\action_t | \state_t)) - Q(\state_t, \action_t)] \bigg].
\end{equation}

We modify the Soft Actor-Critic PyTorch implementation by \citet{pytorch_sac} and augment with a shared encoder between the actor and critic, the general model $f_s$ and task-specific models $f_{\eta}^e$. The forward models are multi-layer perceptions with ReLU non-linearities and two hidden layers of 200 neurons each. The encoder is a linear layer that maps to a 50-dim hidden representation. 
The hyperparameters used for the RL experiments are in \cref{table:rl_hyper_params}.

\begin{table}[h]
\centering
\begin{tabular}{|l|c|}
\hline
Parameter name        & Value \\
\hline
Replay buffer capacity & $10^6$ \\
Batch size & $128$ \\
Discount $\gamma$ & $0.99$ \\
Optimizer & Adam \\
Critic learning rate & $10^{-5}$ \\
Critic target update frequency & $2$ \\
Critic Q-function soft-update rate $\tau_{Q}$ & 0.005 \\
Critic encoder soft-update rate $\tau_{\phi}$ & 0.005 \\
Actor learning rate & $10^{-5}$ \\
Actor update frequency & $2$ \\
Actor log stddev bounds & $[-5, 2]$ \\
Encoder learning rate & $10^{-5}$ \\
Decoder learning rate & $10^{-5}$ \\
Decoder weight  decay & $10^{-7}$  \\
Temperature learning rate & $10^{-4}$ \\
Temperature Adam's $\beta_1$ & $0.9$ \\
Init temperature & $0.1$ \\
\hline
\end{tabular}\\
\caption{\label{table:rl_hyper_params} A complete overview of used hyper parameters.}
\end{table}

\end{document}